\documentclass{article} 
\usepackage{iclr2026_conference,times}

\usepackage{amsmath,amsfonts,bm}
\usepackage{mathtools}
\usepackage{amsthm,amssymb}
\usepackage{nicefrac}









\def\eqref#1{equation~\ref{#1}}









\def\1{\bm{1}}











\DeclareMathAlphabet{\mathsfit}{\encodingdefault}{\sfdefault}{m}{sl}
\SetMathAlphabet{\mathsfit}{bold}{\encodingdefault}{\sfdefault}{bx}{n}











\newcommand{\R}{\mathbb{R}}



\newtheorem*{theorem*}{Theorem}
\newtheorem{remark}{Remark}
\newtheorem{theorem}{Theorem}[section]
\newtheorem{corollary}{Corollary}[theorem]
\newtheorem*{corollary*}{Corollary}
\newtheorem{lemma}[theorem]{Lemma}
\newtheorem{claim}[theorem]{Informal Claim}
\theoremstyle{definition}
\newtheorem{definition}[theorem]{Definition}

\def\rdpo{r_\mathrm{DPO}}
\def\yw{y_w}
\def\yl{y_\ell}
\def\piw{\pi_w}
\def\pil{\pi_\ell}
\def\ywfull{\yw\sim\piw}
\def\ylfull{\yl\sim\pil}

\newcommand{\norm}[1]{\left\lvert #1 \right\rvert}
\newcommand\vs{vs.\ }
\newcommand\ie{\textit{i.e.}}
\newcommand\eg{\textit{e.g.}}
\newcommand{\piref}{\pi_\mathrm{ref}}
\newcommand{\pistar}{\pi^*}
\newcommand{\KL}[2]{\mathbb D_\mathrm{KL}\left[#1\Vert #2\right]}
\newcommand{\JS}{\mathbb D_\mathrm{JS}}
\newcommand{\minKL}[2]{\KL{#1}{#2}=0}
\newcommand{\LK}[1]{P(Y=y\mid #1)}
\newcommand{\pref}[1]{p(\yw\succ\yl\mid #1)}
\newcommand{\DID}[2]{q_{{#1}/{#2}}}

\newcommand{\ExpPref}[1]{\mathbb E_{(\yw,\yl)\sim\mathcal D} \left[#1\right]}
\newcommand{\ExpminKL}[2]{\ExpPref{\KL{#1}{#2}}=0}
\newcommand{\q}[1]{``#1''}

\usepackage{hyperref}
\usepackage{url}
\usepackage{booktabs, tabularx, multirow, wrapfig}
\usepackage{arydshln}
\usepackage{float}
\usepackage{subcaption}
\usepackage{graphicx}

\title{Differential Information Distribution:\\A Bayesian Perspective on Direct Preference Optimization}

\author{Yunjae Won \hspace{3.0em} Hyunji Lee \hspace{3.0em} Hyeonbin Hwang \hspace{3.0em} Minjoon Seo \vspace{0.5em}\\
KAIST AI \vspace{0.5em}\\
\texttt{\{yunjae.won, hyunji.amy.lee, hbin0701, minjoon\}@kaist.ac.kr}
}

\iclrfinalcopy 
\begin{document}

\maketitle

\begin{abstract}
Direct Preference Optimization (DPO) has been widely used for aligning language models with human preferences in a supervised manner.
However, several key questions remain unresolved: the rationale behind its log-ratio reward, how the statistical structure of preference datasets shapes its training dynamics, and how those dynamics impact downstream capabilities.
We approach these questions from a Bayesian perspective, interpreting the goal of preference optimization as learning the differential information required to update a reference policy into a target policy.
To formalize this view, we introduce the \textsc{Differential Information Distribution} (DID), defined as the distribution over samples that carry the Bayesian evidence required to update policies.
We introduce three complementary insights by viewing preference optimization through the DID.
First, we find that DPO's log-ratio reward is uniquely justified when preferences encode the Differential Information needed to update a reference policy into the target policy.
Second, we discuss how commonly observed training dynamics in DPO, including changes in log-likelihood and policy exploration, stem from a power-law DID relationship.
Finally, we analyze how training dynamics influence downstream performance using the entropy of DID, a principled measure of uncertainty in the learned information.
We observe that learning high-entropy DID improves open-ended instruction-following, while low-entropy DID benefits knowledge-intensive QA.
Taken together, our results show that DPO's reward design, training dynamics, and downstream capabilities all emerge as natural consequences of learning Differential Information, offering both a principled theoretical foundation and practical guidance for preference-based alignment.
\end{abstract}

\section{Introduction}
\label{sec:intro}
Aligning language models to human preferences is essential for both safety and usefulness \citep{ouyang2022training, bai2022training}. Among various alignment methods, Direct Preference Optimization (DPO) \citep{rafailov2024direct} has gained popularity for its strong empirical performance, training stability, and computational efficiency \citep{xiao2024comprehensivesurveydirectpreference,liu2025surveydirectpreferenceoptimization}.
Despite its widespread use, several fundamental questions remain: what justifies the log-ratio reward beyond its derivation from a KL-regularized RL objective, what statistical structure in preference datasets underlie DPO's training dynamics, and how these dynamics impact downstream capabilities.

To address these gaps, we propose a Bayesian perspective that interprets the goal of preference optimization as learning the information needed to update a reference policy into a target policy.
We call this information \q{Differential Information} as it represents the difference in the information encoded by the two policies.
To formalize this, we introduce the \textsc{Differential Information Distribution} (DID), defined as the distribution of samples containing the Bayesian evidence\footnote{We consider Bayesian evidence that is conditionally independent of the prior given a sample, ensuring that this evidence reflects a sample's intrinsic content rather than the distribution it was drawn from. This aligns with Shannon information being additive for \textit{independent} events. See Section~\ref{sec:prelim} and Appendix~\ref{app:lem_ri} for details.} for this policy update.
The DID can be expressed as the normalized ratio of the two policy distributions (Theorem~\ref{def:DID}).
By analyzing the DID in preference optimization, we show that DPO’s key components--reward parameterization, training dynamics, and learned capabilities--emerge naturally from this Bayesian perspective.

In Section~\ref{sec:opt_ratio}, we present a Bayesian interpretation of the optimality of DPO’s log-ratio reward parameterization.
We first show how a preference data generation process naturally yields a preference distribution that encodes the Differential Information required to update a reference policy into a target policy.
We then prove that the log-ratio reward is the \textit{unique} Bradley-Terry reward that learns this target policy, revealing that DPO's design follows directly from Bayesian principles.
We validate these findings through controlled Energy-Based Model experiments, where we simulate a preference dataset encoding Differential Information and demonstrate that only the log-ratio reward converges to the target policy while other objectives (\eg, SimPO \citep{meng2024simpo}) fail to do so.

In Section~\ref{sec:dpo_char}, we analyze the training dynamics of DPO based on a power-law relationship in the DID imposed by DPO. This DID relationship links the DPO-converged policy to the preference sampling distribution.
Because the normalized ratio form of the DID is algebraically tied to the KL-divergence, it explains the consistent changes in the log-likelihood observed during DPO training.
The DID power-law further clarifies how policy exploration is jointly influenced by the KL-penalty term and the sharpness of preference data, where sharpness reflects the sparsity (reciprocal of sampling temperature) of distributions over chosen and rejected responses.
Using Energy-Based Models, we confirm these predictions and show that the identified properties of DPO persist under gradient-based stochastic optimization.

In Section~\ref{sec:uncertainty}, we investigate the empirical link between training dynamics and downstream capabilities by analyzing the Shannon entropy of the DID.
Recent work finds that preventing log-likelihood displacement (LLD) improves factual accuracy (\eg, MMLU \citep{hendrycks2020measuring}) at the cost of open-ended tasks (\eg, Wild-Bench \citep{lin2024wildbench}) \citep{shi2024understandinglikelihoodoveroptimisationdirect, chen2024noisecontrastivealignmentlanguage, xiao2024cal}.
We hypothesize that this trade-off reflects differences in the DID entropy.
To test this, we compare standard DPO against a variant that prevents LLD. We train \texttt{Mistral7B-v0.3} \citep{Mistral7bv3} and \texttt{Qwen3-4B} \citep{qwen3technicalreport} on Magpie-Pro and Magpie-G27 datasets \citep{xu2024magpie}.
Across all settings, preventing LLD consistently learns a low-entropy DID that improves factual QA, while allowing LLD learns a high-entropy DID that enhances open-ended generation.
These results suggest that DID entropy could serve as a useful factor in characterizing whether a model’s learned capabilities align with factual precision or open-ended generation.

Taken together, our Bayesian perspective unifies reward design, training dynamics, and learned capabilities of DPO. By explicitly linking preference data to the Differential Information they convey, our approach provides both theoretical grounding and practical guidance for designing and understanding preference-based alignment.

\section{Preliminaries}\label{sec:prelim}
Let \(\mathcal Y\) denote the sample space of all possible sentences. A policy (\ie, language model) \(\pi\) defines a probability distribution over \(\mathcal Y\), and we assume full support, \ie, \(\pi(y) > 0\) for all \(y \in \mathcal Y\).
Let \(\pistar\) be the target policy we wish to learn and \(\piref\) be a fixed reference policy.

Preferences are ordered pairs \((\yw, \yl)\), where \(\yw\) is preferred to \(\yl\), written as \(\yw \succ \yl\).
The Bradley-Terry (BT) model \citep{bradley1952rank, luce1959individual} assigns the probability of such a preference under an implicit distribution \(p^*\) as
\[
    p^*(\yw \succ \yl) \coloneqq \frac{p^*(\yw)}{p^*(\yw) + p^*(\yl)}.
\]

If a latent reward \(r:\mathcal Y\to\mathbb R\) induces a Boltzmann distribution \(P(Y=y\mid r)\propto \exp(r(y))\), then the BT preference probability can be expressed via the logistic sigmoid \(\sigma(x)=1/(1+\exp(-x))\) as
\[
p(\yw \succ \yl \mid r) \coloneqq \sigma\big(r(\yw) - r(\yl)\big).
\]

\paragraph{DPO objective and log-ratio reward.}
Direct Preference Optimization (DPO) \citep{rafailov2024direct} parameterizes a BT reward derived from the KL-regularized RL objective, the log-ratio between the learned policy and the reference: \(\rdpo(y) \coloneqq \beta \log(\pi(y)/\piref(y))\), where \(\beta>0\) corresponds to the KL-penalty strength.
Under this parameterization, the preference probability becomes
\[
    \pref{\rdpo} \coloneqq \sigma\!\left(\beta \log \tfrac{\pi(\yw)}{\piref(\yw)} - \beta \log \tfrac{\pi(\yl)}{\piref(\yl)} \right).
\]

\paragraph{Preference generation assumption.}\label{sec:pref_cond}
We assume preferences \((\yw, \yl)\) are sampled independently from two distributions \(\piw\) and \(\pil\) respectively.
Given an unordered pair of distinct responses \((y_1, y_2)\), the ground-truth probability that \(y_1\) is preferred over \(y_2\) is
\begin{flalign*}
p(y_1 \succ y_2) &\coloneqq P\!\left( y_1 \sim \piw, \; y_2 \sim \pil \;\middle|\; \text{sampled pair is } (y_1, y_2) \right) \\
&= \frac{\piw(y_1)\pil(y_2)}{\piw(y_1)\pil(y_2) + \piw(y_2)\pil(y_1)}.
\end{flalign*}
We denote by \(\mathcal D = \{(\yw, \yl) \mid \yw \sim \piw, \; \yl \sim \pil\}\) a preference dataset consisting of such pairs.
We assume \(\piw\neq\pil\) and \(\piref\neq\pistar\) in the following discussions.

\paragraph{Preference optimization as distribution matching.}
Preference optimization maximizes the empirical likelihood of observed preferences under a BT model parameterized by a reward \(r\). Under standard identifiability and coverage assumptions, maximizing the preference likelihood is equivalent to fitting the reward-induced Boltzmann distribution to the implicit preference distribution.
We cite the following standard result from \citet{dumoulin2024densityestimationperspectivelearning} (Proof in Appendix~\ref{proof:lem_eq}).
\begin{theorem}[Preference \vs Distribution Matching \citep{dumoulin2024densityestimationperspectivelearning}]\label{thm:eq}
Let \(\mathcal D=\{(\yw,\yl)\}\) be a sufficiently large preference dataset where the sets of \(\yw\) and \(\yl\) cover \(\mathcal Y\). Then preference optimization on \(\mathcal D\) is equivalent to fitting the reward-induced distribution \(P(Y=y \mid r)\) to the implicit preference distribution \(p^*(y)\):
\begin{flalign*}
\max_{r}\;\ExpPref{\log \sigma(r(\yw)-r(\yl))}
&\iff \; \min_r\;\KL{p^*(y)}{\LK r}.
\end{flalign*}
\end{theorem}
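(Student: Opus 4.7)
The plan is to show that the preference maximum-likelihood problem and the distribution-matching problem share the same unique optimizer, namely the Boltzmann distribution with $P(Y\mid r) = p^*$, so they are equivalent as optimization problems in $r$.

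First I would rewrite each per-pair log-likelihood using the Boltzmann parameterization $P(Y=y\mid r) = \exp(r(y))/Z_r$. Because the partition function cancels,
\[
\sigma\!\big(r(\yw)-r(\yl)\big) \;=\; \frac{P(\yw\mid r)}{P(\yw\mid r)+P(\yl\mid r)},
\]
so the preference objective depends on $r$ only through the induced distribution $P(Y\mid r)$. Symmetrizing over the unordered pair $\{y_1,y_2\}$, whose marginal under the data-generating process is $q(\{y_1,y_2\}) = \piw(y_1)\pil(y_2)+\piw(y_2)\pil(y_1)$, the population preference log-likelihood equals the negative of a $q$-weighted sum of Bernoulli cross-entropies between the true pairwise preference $p^*(y_1\succ y_2)$ and the model-induced BT probability $P(y_1\mid r)/(P(y_1\mid r)+P(y_2\mid r))$. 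Each cross-entropy is uniquely minimized precisely when the two Bernoulli probabilities coincide.

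Next I would invoke the coverage assumption to upgrade these pairwise equalities into a single distributional identity. Matching the BT probabilities on every pair gives $P(y_1\mid r)/P(y_2\mid r) = p^*(y_1)/p^*(y_2)$ for all $y_1,y_2$ in the common support, and since both sides are probability distributions on $\mathcal Y$, this forces $P(Y\mid r) = p^*$; equivalently, $r = \log p^*$ up to the additive constant that is intrinsically unidentifiable from sigmoid-based comparisons and absorbed by $Z_r$.

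Finally I would observe that $\min_r \KL{p^*(y)}{\LK r}$, viewed through the Boltzmann link as a convex problem over the open simplex on $\mathcal Y$, is likewise uniquely minimized at $P(Y\mid r) = p^*$. Since both sides of the claimed equivalence admit the same unique optimum in distribution space, the two problems are equivalent, which is the $\iff$ in Theorem~\ref{thm:eq}. The main obstacle I anticipate is making \q{sufficiently large} and \q{cover $\mathcal Y$} precise enough to (i) replace the empirical expectation by its population counterpart and (ii) extend the pairwise Bernoulli identities to \emph{every} pair in $\mathcal Y$ rather than only to pairs of positive measure under $q$; this is the identifiability content of the Bradley-Terry model and is handled rigorously in \citep{dumoulin2024densityestimationperspectivelearning}.
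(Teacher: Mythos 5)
Your proposal is correct and follows essentially the same route as the paper's proof: symmetrizing the preference objective over unordered pairs weighted by \(\piw(y_1)\pil(y_2)+\piw(y_2)\pil(y_1)\), recognizing it as a Bernoulli cross-entropy (equivalently, pairwise KL up to a constant) against the ground-truth Bradley--Terry probabilities, and concluding that the optimum forces \(r=\log p^*+C\), i.e.\ \(\LK r=p^*\), which is also the unique minimizer of \(\KL{p^*(y)}{\LK r}\). The only differences are presentational (you argue via likelihood ratios on pairs, the paper via rewards agreeing up to an additive constant), so no further comparison is needed.
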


\paragraph{Differential Information Distribution.}\label{sec:prelim_residual}
We now introduce the \textsc{Differential Information Distribution} (DID) to formalize the information that drives the update from a prior belief \(\piref\) into a posterior \(\pistar\), which we later use to interpret preference optimization. The term \textit{differential} highlights that the difference in the information contained in \(\pistar\) and \(\piref\) is precisely the additional Bayesian evidence required to update  \(\piref\) into \(\pistar\).

Suppose we start from a prior \(\piref\) over sentences and then observe new Bayesian evidence \(X\) that updates \(\piref\) to \(\pistar\). We define the DID as the distribution over sentences that carry this incremental evidence. A central postulate is that  \(X\) is conditionally independent of the prior distribution given some sentence \(y\). In other words, the probability that \(y\) exhibits \(X\) does not depend on whether \(y\) was sampled from \(\piref\).
This ensures that the information \(X\) attributed to each sentence reflects its intrinsic features. This also parallels how the difference in Shannon information, \(-\log P_A - (-\log P_B) = -\log\frac{P_B}{P_A}\), equals the Shannon information of an \textit{independent} event \(X\) that updates \(P_A\) into \(P_B\) via Bayes’ rule: \(P_{A,X}=P_AP_X= P_B\). See Appendix~\ref{app:lem_ri} for details.

\begin{definition}[Differential Information Distribution]\label{def:DID}
Let \(\pistar\) and \(\piref\) be two probability distributions over \(\mathcal Y\) with full support. Let \(X\) be an event that satisfies the following:
\begin{gather*}
    \begin{cases}
    P(X \mid Y=y, \piref) = P(X \mid Y=y) & \text{(Conditional Independence)} \\
    \pistar(y) = \LK{\piref, X} & \text{(Bayesian Update)}
    \end{cases}
\end{gather*}
Then, \(\LK X\) is defined as the \textit{Differential Information Distribution} (DID) from \(\piref\) to \(\pistar\).
\end{definition}

Intuitively, if our initial belief regarding which sentence \(y\) is the correct sentence follows the distribution \(\piref(y)\), and then learn that \q{all sentences satisfying \(X\) are correct}, our updated belief becomes \(\pistar(y)\).
The following theorem shows that the DID can be computed directly from the normalized likelihood ratio.

\begin{theorem}[Likelihood Ratio Representation of Differential Information Distribution]\label{lemma:RI}
For policies \(\pistar, \piref\) over \(\mathcal Y\) with full support, the Differential Information Distribution (DID) from \(\piref\) to \(\pistar\) is equivalent to the normalized ratio distribution:
\begin{gather*}
     \LK{X}= \frac{\pistar(y) / \piref(y)}{Z}\coloneqq \DID{\pistar}{\piref}(y),
\end{gather*}
where \(Z= \sum_{y' \in \mathcal Y} \frac{\pistar(y')}{\piref(y')}\) is the partition function.
\end{theorem}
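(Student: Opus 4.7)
The plan is to compute $P(Y=y\mid X)$ by applying Bayes' rule twice: once to the posterior $\pi^*$ in order to extract the \emph{intrinsic} likelihood $P(X\mid Y=y)$, and then once more (with a reference-free prior) to obtain the DID. The conditional independence postulate is the bridge that makes the likelihood extracted in the first step usable in the second, because it guarantees that $P(X\mid Y=y)$ does not depend on whether $y$ was drawn from $\piref$.

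Concretely, I would first expand the Bayesian update condition
\[
\pistar(y)=P(Y=y\mid\piref,X)=\frac{P(X\mid Y=y,\piref)\,\piref(y)}{\sum_{y'\in\mathcal Y}P(X\mid Y=y',\piref)\,\piref(y')},
\]
and then apply the conditional independence hypothesis $P(X\mid Y=y,\piref)=P(X\mid Y=y)$ to strip the prior out of every likelihood in this expression. Solving for $P(X\mid Y=y)$ gives
\[
P(X\mid Y=y)=\frac{\pistar(y)}{\piref(y)}\cdot P(X\mid\piref),
\]
so the likelihood is proportional to the ratio $\pistar(y)/\piref(y)$ up to a $y$-independent factor. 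A second application of Bayes' rule, now with a flat (prior-agnostic) base measure on $\mathcal Y$ — which is the natural choice because conditional independence has just told us that the likelihood carries no prior-specific information — yields $P(Y=y\mid X)\propto P(X\mid Y=y)\propto \pistar(y)/\piref(y)$. Normalizing over $\mathcal Y$ absorbs the factor $P(X\mid\piref)$ and produces the partition constant $Z=\sum_{y'}\pistar(y')/\piref(y')$, delivering the claimed formula.

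The main obstacle is conceptual rather than computational: the DID is defined as $P(Y=y\mid X)$ without explicit reference to a prior, so the argument must justify why a reference-free (uniform) base measure is the right way to interpret this quantity. This is exactly what the conditional independence assumption supplies — it isolates $X$'s evidential content as an intrinsic property of $y$ — and I would lean on the appendix's discussion for the parallel with additivity of Shannon information under independence. Full support of $\pistar$ and $\piref$ keeps the ratio well-defined pointwise, and I would note (or assume) that $Z<\infty$ so the normalization is legitimate; the remaining algebra is a one-line cancellation.
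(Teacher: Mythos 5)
Your proposal is correct and follows essentially the same route as the paper's proof: apply Bayes' rule to the update condition, use conditional independence to obtain \(P(X \mid Y=y) \propto \pistar(y)/\piref(y)\), then apply Bayes' rule with a uniform base measure over \(\mathcal Y\) to conclude \(\LK{X} = \DID{\pistar}{\piref}(y)\). The only cosmetic difference is that the paper takes the uniform prior \(P(Y=y)=1/|\mathcal Y|\) as an explicit assumption of its framework (Appendix C.1) rather than arguing for it via the conditional independence postulate as you do.
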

(Proof in Appendix~\ref{proof:lem_DID})
Therefore, the normalized ratio distribution \(\DID{\pistar}{\piref}\) can be understood as the Differential Information Distribution responsible for the update from \(\piref\) to \(\pistar\).

\section{Optimality of DPO's log-ratio reward}\label{sec:opt_ratio}
In this section, we begin our analysis of preference optimization through the lens of \textsc{Differential Information}.
We reframe the goal of preference optimization as learning the Differential Information that updates a reference policy \(\piref\) into a target policy \(\pistar\). 
We first show how a preference dataset naturally encodes that Differential Information (Theorem~\ref{def:DIDD}), then prove that DPO's log-ratio reward is the unique Bradley-Terry reward that learns \(\pistar\) from such data (Theorem~\ref{th:opt}). Finally, we validate these claims in a controlled Energy-Based Model experiment (Section~\ref{sec:experiments}).

\subsection{How preferences encode Differential Information}\label{sec:pref_did}
Since policy updates in preference optimization are driven by the preference distribution \(p^*\) of a dataset, it is essential to characterize precisely \emph{when} \(p^*\) contains the Differential Information required to transform \(\piref\) into \(\pistar\). We find this condition is met when the Differential Information Distributions (DIDs) of the underlying policies are related by a power-law.

\begin{theorem}[Preferences Encoding Differential Information]\label{def:DIDD}
Consider a preference dataset \(\mathcal D = \{(\yw,\yl)\mid\yw\sim\piw,\yl\sim\pil\}\). Let \(\pistar\) be the target policy.
If the Differential Information Distribution between policies match up to an exponent \(\beta>0\):
    \begin{gather*}
        \DID{\piw}{\pil}(y) \propto \DID{\pistar}{\piref}(y)^\beta,\quad\forall y\in\mathcal Y,
    \end{gather*}
then the preference probability \(p^*(\yw\succ\yl)\) can be expressed as preferences induced by the DID:
    \begin{gather*}
        p^*(\yw\succ\yl)=\sigma\left(\beta\log\DID\pistar\piref(\yw)-\beta\log\DID\pistar\piref(\yl)\right).
    \end{gather*}    
\end{theorem}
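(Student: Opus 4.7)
The plan is to reduce the ground-truth preference probability to a BT-form in $\DID{\piw}{\pil}$, and then invoke the power-law assumption to rewrite it in $\DID{\pistar}{\piref}$. Throughout, I will lean on the fact that the normalization constants of the DIDs cancel, because the preference probability from the data-generation assumption is homogeneous of degree zero in $\piw$ and $\pil$ separately.

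\textbf{Step 1 (Rewrite $p^*$ as a ratio of likelihood ratios).} Starting from
\[
p^*(\yw\succ\yl)=\frac{\piw(\yw)\pil(\yl)}{\piw(\yw)\pil(\yl)+\piw(\yl)\pil(\yw)},
\]
I divide numerator and denominator by $\pil(\yw)\pil(\yl)$. This yields
\[
p^*(\yw\succ\yl)=\frac{\piw(\yw)/\pil(\yw)}{\piw(\yw)/\pil(\yw)+\piw(\yl)/\pil(\yl)}.
\]
Thus $p^*$ depends only on the pointwise ratio $\piw/\pil$.

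\textbf{Step 2 (Introduce the DID and cancel the partition function).} By Theorem~\ref{lemma:RI}, $\DID{\piw}{\pil}(y)=\big(\piw(y)/\pil(y)\big)/Z$, so $\piw(y)/\pil(y)=Z\cdot\DID{\piw}{\pil}(y)$. Substituting and cancelling the common factor $Z$ from numerator and denominator,
\[
p^*(\yw\succ\yl)=\frac{\DID{\piw}{\pil}(\yw)}{\DID{\piw}{\pil}(\yw)+\DID{\piw}{\pil}(\yl)}.
\]

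\textbf{Step 3 (Apply the power-law hypothesis and recognize the sigmoid).} By assumption, $\DID{\piw}{\pil}(y)=C\cdot\DID{\pistar}{\piref}(y)^\beta$ for some constant $C>0$. The constant $C$ again cancels from numerator and denominator, giving
\[
p^*(\yw\succ\yl)=\frac{\DID{\pistar}{\piref}(\yw)^\beta}{\DID{\pistar}{\piref}(\yw)^\beta+\DID{\pistar}{\piref}(\yl)^\beta}.
\]
Dividing through by $\DID{\pistar}{\piref}(\yw)^\beta$ and recognizing the logistic sigmoid $\sigma(x)=1/(1+e^{-x})$ yields the claimed expression
\[
p^*(\yw\succ\yl)=\sigma\!\big(\beta\log\DID{\pistar}{\piref}(\yw)-\beta\log\DID{\pistar}{\piref}(\yl)\big).
\]

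\textbf{Main obstacle.} The proof is essentially algebraic, so there is no deep obstacle; the only subtlety is bookkeeping around the two partition functions (the one defining $\DID{\piw}{\pil}$ and the constant of proportionality in the power-law assumption), and noticing that both are killed by the ratio structure of $p^*$. One point worth flagging explicitly in the write-up is that the full-support assumption on all four policies ensures the ratios $\piw(y)/\pil(y)$ and $\pistar(y)/\piref(y)$ are well-defined and strictly positive, so that the manipulations above—and in particular the logarithms in the final line—are valid for every $y\in\mathcal Y$.
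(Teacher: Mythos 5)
Your proposal is correct and follows essentially the same route as the paper's proof: rewrite \(p^*\) as a ratio of the likelihood ratios \(\piw/\pil\), note that the DID's partition function and the power-law proportionality constant cancel, and recognize the logistic sigmoid. The only difference is cosmetic ordering (you cancel constants inside the fraction before taking logs, while the paper passes to the sigmoid form first), so the two arguments are the same.
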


(Proof in Appendix~\ref{proof:th_rr})
The condition requires that the Differential Information that updates the rejected response distribution \(\pil\) into the chosen one \(\piw\) should align with the DID from \(\piref\) to \(\pistar\), up to an exponent \(\beta>0\). When this holds, the dataset's preference distribution carries the Bayesian evidence needed to update \(\piref\) into \(\pistar\).

\subsection{Bradley-Terry reward for learning Differential Information}
As we have characterized when a preference dataset encodes the Differential Information necessary for learning \(\pistar\), we now ask which functional form of the reward parameterization \(r(y)\) recovers \(\pistar\). We find that the log-ratio form used by DPO is the unique functional form (up to a constant) that makes \(\pistar\) the global optimizer of the training objective.

\begin{theorem}[Optimal Reward for Learning Differential Information]\label{th:opt}
    Let \(\mathcal{D}\) be a preference dataset satisfying Theorem~\ref{def:DIDD}, encoding the Differential Information required to learn the target policy \(\pistar\). 
    Then, for some constant \(C\), we have
    \begin{gather*}
    \pistar=\arg\max_\pi\mathbb E_{(\yw,\yl)\sim\mathcal D} \left[\log\sigma(r(\yw)-r(\yl))\right]\iff
    r(y)=\beta\log \frac{\pi(y)}{\piref(y)} + C.
    \end{gather*}
\end{theorem}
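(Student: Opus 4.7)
The plan is to combine Theorem~\ref{def:DIDD} (which pins down the implicit preference distribution $p^*$) with Theorem~\ref{thm:eq} (which reduces preference optimization to density matching) to identify the unique reward shape that makes $\pistar$ optimal. As a convenient first step, note that the sigmoid in Theorem~\ref{def:DIDD} is invariant to the partition function hidden inside $\DID{\pistar}{\piref}$, so $p^*(\yw\succ\yl)$ coincides with the Bradley--Terry probability induced by the reward $\tilde r(y)=\beta\log(\pistar(y)/\piref(y))$. Equivalently, the implicit Boltzmann density satisfies $p^*(y)\propto\exp(\tilde r(y))\propto(\pistar(y)/\piref(y))^{\beta}$.

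For the $(\Leftarrow)$ direction I would substitute $r(y)=\beta\log(\pi(y)/\piref(y))+C$ into the preference likelihood. This yields a reward-induced distribution $P(Y=y\mid r)\propto(\pi(y)/\piref(y))^{\beta}$, which is invariant to the choice of $C$. By Theorem~\ref{thm:eq}, maximizing the preference objective is equivalent to minimizing $\KL{p^*(y)}{P(Y=y\mid r)}$, and this KL vanishes precisely when $(\pi/\piref)^{\beta}\propto(\pistar/\piref)^{\beta}$. Since $\beta>0$ and both $\pi$ and $\pistar$ normalize to $1$, this forces $\pi=\pistar$, so $\pistar$ is the unique argmax.

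For the $(\Rightarrow)$ direction I would again use Theorem~\ref{thm:eq}: if $\pistar$ is the argmax, then at the optimum the reward-induced Boltzmann distribution must equal $p^*$, so $\exp(r(y))\propto(\pistar(y)/\piref(y))^{\beta}$ on $\mathcal Y$. Taking logs gives $r(y)=\beta\log(\pistar(y)/\piref(y))+C$ for some additive constant $C$, reflecting the shift-invariance of both the BT likelihood and $P(Y=y\mid r)$. Replacing the optimal $\pistar$ by the argmax variable $\pi$ yields the stated functional form.

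The main obstacle is the $(\Rightarrow)$ direction: one only directly identifies the reward evaluated at $\pi=\pistar$, and passing from this to the global functional form $r(y)=\beta\log(\pi(y)/\piref(y))+C$ requires interpreting the reward as a pointwise parameterization in $\pi$. I would handle this by varying $\pistar$ over the family of target policies compatible with the DID condition of Theorem~\ref{def:DIDD}: any parameterization that places the argmax at each such $\pistar$ must coincide with the log-ratio at every $\pi$ in that family, and the shift degree of freedom $C$ accounts for the BT invariance. The remaining steps are essentially algebraic once Theorems~\ref{thm:eq} and~\ref{def:DIDD} are in hand.
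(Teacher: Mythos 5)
Your proposal is correct and follows essentially the same route as the paper's proof: reduce preference optimization to distribution matching via Theorem~\ref{thm:eq}, use Theorem~\ref{def:DIDD} to identify the implicit preference distribution as $p^*(y)\propto(\pistar(y)/\piref(y))^{\beta}$, verify sufficiency by substitution, and obtain necessity by forcing the reward-induced Boltzmann distribution to coincide with $\DID{\pi}{\piref}^{\beta}$, which pins down $r(y)=\beta\log(\pi(y)/\piref(y))+C$. Your explicit handling of the $(\Rightarrow)$ quantification---varying $\pistar$ over targets compatible with the DID condition---is in fact a slightly more careful rendering of a step the paper treats implicitly, but the argument is the same in substance.
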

(Proof in Appendix~\ref{proof:th_opt}) This justifies DPO's log-ratio structure: if preference captures the Differential Information needed to improve \(\piref\) toward \(\pistar\), then using the log-ratio reward is not merely a heuristic choice, but the only functional form that ensures preference optimization recovers \(\pistar\).
Our derivation recovers the result of \citet{rafailov2024direct}, originally motivated by the KL-regularized RL objective. This highlights the Bayesian structure of DPO in learning Differential Information.

The key results of Theorems~\ref{def:DIDD} and \ref{th:opt} can be summarized into the following relationship.
\begin{corollary}[DID Power-Law of DPO]\label{cor:pw-did}
Consider a preference dataset \(\mathcal D = \{(\yw,\yl)\mid \yw\sim\piw, \ylfull\}\) and a policy \(\pistar\) obtained as a stationary point of preference optimization using the log-ratio reward \(r=\beta\log(\pi/\piref)\) on \(\mathcal D\).
Then, a power-law relationship between the DID of policies must hold:\begin{gather*}
    \DID{\piw}{\pil}(y) \propto \DID{\pistar}{\piref}(y)^\beta,\quad\forall y\in\mathcal Y.
\end{gather*} 
\end{corollary}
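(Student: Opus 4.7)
The plan is to derive the power-law directly from the implicit-preference-distribution induced by the data-generation assumption, together with the distribution-matching equivalence of Theorem~\ref{thm:eq}. The corollary is essentially the converse of Theorems~\ref{def:DIDD} and \ref{th:opt}: starting from the conclusion that the DPO objective with the log-ratio reward has $\pistar$ as a stationary point, I want to recover the power-law condition on $\piw,\pil,\pistar,\piref$. The argument is a short three-step calculation.

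\textbf{Step 1 (identify the implicit preference distribution).} The preference sampling assumption of Section~\ref{sec:prelim} gives $p^*(y_1\succ y_2)=\piw(y_1)\pil(y_2)/(\piw(y_1)\pil(y_2)+\piw(y_2)\pil(y_1))$. Dividing numerator and denominator by $\pil(y_1)\pil(y_2)$ rewrites this as the Bradley-Terry preference induced by a Boltzmann distribution with latent reward $\log(\piw(y)/\pil(y))$, hence with implicit distribution $p^*(y)\propto \piw(y)/\pil(y)$. By Theorem~\ref{lemma:RI} this is exactly $\DID{\piw}{\pil}(y)$.

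\textbf{Step 2 (identify the reward-induced distribution).} Substituting $r(y)=\beta\log(\pi(y)/\piref(y))$ into the Boltzmann form $P(Y=y\mid r)\propto\exp(r(y))$ gives $P(Y=y\mid r)\propto (\pi(y)/\piref(y))^{\beta}$, which by Theorem~\ref{lemma:RI} is proportional to $\DID{\pi}{\piref}(y)^{\beta}$.

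\textbf{Step 3 (match at the stationary point).} By Theorem~\ref{thm:eq}, the DPO loss on $\mathcal D$ equals $-\KL{p^*(y)}{P(Y=y\mid r)}$ up to an additive constant. The reparameterization $\pi\mapsto P(Y=y\mid r)$ is a bijection of the relative interior of $\Delta(\mathcal Y)$ onto itself, with inverse $P(y)\mapsto \pi(y)\propto P(y)^{1/\beta}\piref(y)$, well-defined because $\piref$ has full support and $\beta>0$. Since $\KL{p^*}{\cdot}$ is strictly convex in its second argument over the simplex, it has a unique interior stationary point equal to the global minimizer $P=p^*$. Thus at any stationary $\pistar$ we must have $P(Y=y\mid r_{\pistar})=p^*(y)$. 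Combining this equality with Steps~1 and 2 yields $\DID{\pistar}{\piref}(y)^{\beta}\propto \DID{\piw}{\pil}(y)$ for all $y\in\mathcal Y$, which is the claimed power-law.

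\textbf{Main obstacle.} The only non-routine point is the inference, in Step~3, that stationarity in the $\pi$-parameterization implies stationarity of the KL objective in the induced distribution $P$ rather than a spurious critical point introduced by the nonlinear change of variables. The diffeomorphism argument above disposes of this, but it relies crucially on the standing full-support assumption on $\piref$ and $\pistar$ from Section~\ref{sec:prelim}, together with $\beta>0$. No machinery beyond Theorems~\ref{thm:eq} and~\ref{lemma:RI} is required.
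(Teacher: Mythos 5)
Your argument follows essentially the same route as the paper's proof: Step~1 (the implicit preference distribution is \(\DID{\piw}{\pil}\), obtained by dividing through by \(\pil(y_1)\pil(y_2)\)) and Step~2 (the log-ratio reward induces the Boltzmann distribution \(\propto\DID{\pi}{\piref}^{\beta}\)) are exactly the two ingredients the paper combines via Theorem~\ref{thm:eq} to get \(\pistar(y)\propto\piref(y)\,p^*(y)^{1/\beta}\) with \(p^*=\DID{\piw}{\pil}\). Your Step~3 goes beyond the paper in trying to justify that a mere \emph{stationary} point already forces \(\LK{r_{\pistar}}=p^*\), which is a worthwhile point (the paper simply invokes Theorem~\ref{thm:eq} for the converged policy), but its justification as written is not quite right: Theorem~\ref{thm:eq} asserts an equivalence of optimizers, not that the DPO objective equals \(-\KL{p^*(y)}{\LK r}\) up to an additive constant. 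The objective is, per the paper's proof of Theorem~\ref{thm:eq}, \(-\tfrac12\,\ExpPref{\KL{p^*(\yw\succ\yl)}{\pref r}}+C\), an expected pairwise Bernoulli KL, and this quantity is \emph{not} convex as a function of \(P\) on the simplex, so "strict convexity of \(\KL{p^*}{\cdot}\)" cannot be the reason stationary points are global. The repair is easy and uses your own diffeomorphism observation: the objective is concave in the reward \(r\) (log-sigmoid composed with a linear map), invariant under constant shifts of \(r\), and your bijection \(\pi\mapsto\LK r\) (equivalently \(\pi\mapsto r=\beta\log(\pi/\piref)\) modulo constants) is smooth with smooth inverse on the interior of the simplex; hence a stationary point in the \(\pi\)-parameterization is a stationary point of a concave function of \(r\), therefore a global maximizer, and Theorem~\ref{thm:eq} then gives \(\LK{r_{\pistar}}=p^*\). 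With that substitution your proof is correct and matches the paper's, with the added benefit of making the "stationary point'' wording of the corollary rigorous.
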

(Proof in Appendix~\ref{proof:cor:pw-did})
Corollary~\ref{cor:pw-did} can be read two ways: either datasets that satisfy the DID power-law lead DPO to recover \(\pistar\); conversely, if DPO has converged to some \(\pistar\) then the dataset's sampling distributions must satisfy this power-law relation.\footnote{Corollary~\ref{cor:pw-did} directly yields a closed-form expression of the optimal DPO dataset (Appendix~\ref{app:opt_data}).}

\subsection{Experiments}
\label{sec:experiments}
We validate our theoretical findings in a controlled setup using Energy-Based Models.

\paragraph{Setup.}
We define policies \(\pi_\theta(i) = \exp(\theta_i) / \sum_j \exp(\theta_j)\) for class \(i \in \{1, \dots, K\}\) and \(\theta \in \mathbb{R}^K\). The logits of the reference policy \(\piref\) are sampled from a normal distribution: \(\theta_{\textup{ref}} \sim \mathcal{N}(0, I)\). Next, to construct the target policy \(\pistar\), we set the target logits \(\theta^* = \theta_{\textup{ref}} / \tau\) with \(0<\tau < 1\) for reinforcing and \(\tau > 1\) for smoothing.
The logits of \(\pil\) are set as \(\theta_\ell = 2\theta_{\textup{ref}} - \theta^*\), which aligns the DID between policies: \(\DID{\piref}{\pil} = \DID{\pistar}{\piref}\). Finally, preference pairs \((\yw,\yl)\) are constructed by sampling \(\yw \sim \piref\) and \(\yl \sim \pil\), and labeled as \(\yw \succ \yl\) (Hyper-parameters in Appendix~\ref{app:exp_details_synth}).

\begin{figure}[h]
    \centering
    \includegraphics[width=\textwidth]{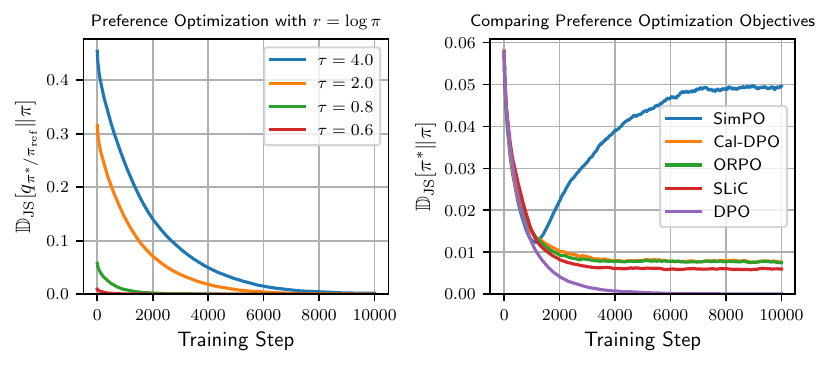}
    \caption{\label{fig:synth_pref} \textit{Left}: Optimization using \(r=\log \pi\) on preference data satisfying the DID power-law. The Jensen-Shannon Divergence \(\JS[\DID{\pistar}{\piref}\Vert \pi]\) converges to 0, confirming Theorem~\ref{def:DIDD} that the preference encodes Differential Information. \textit{Right}: Comparison of \(\JS[\pistar \Vert \pi]\) using different objectives on the same data with \(\tau=4\). Standard DPO (\(r=\log(\pi/\piref)\), purple) uniquely converges to \(\pistar\), consistent with Theorem~\ref{th:opt}.}
\end{figure}

\paragraph{Do preferences encode Differential Information?}
Theorem~\ref{def:DIDD} predicts that the preference distribution encodes the Differential Information required to learn the target policy: \(p^*=\DID{\pistar}{\piref}\).
According to Theorem~\ref{thm:eq}, a policy optimized using \(r = \log \pi\) converges to the underlying preference distribution \(p^*\) \citep{dumoulin2024densityestimationperspectivelearning, xu2024contrastive, liu2024understandingreferencepoliciesdirect}.
Therefore, we optimize a policy \(\pi\) with \(r = \log \pi\) and measure the Jensen-Shannon (JS) divergence between \(\DID{\pistar}{\piref}\) and \(\pi\).

Figure~\ref{fig:synth_pref} shows that the JS divergence consistently converges to zero, meaning that the policy trained to directly fit \(p^*\) converges to the DID \(\DID{\pistar}{\piref}\).
This confirms Theorem~\ref{def:DIDD} in that \textbf{sampling chosen and rejected samples each from a distribution satisfying the DID power-law yields a preference distribution encoding the Differential Information required to learn the target policy.}

\paragraph{Is the log-ratio reward optimal?}
To test Theorem~\ref{th:opt}, we compare optimization with the log-ratio reward \(r=\log(\pi/\piref)\) against several alternative objectives: SLiC \citep{zhao2023slic}, ORPO \citep{hong2024reference}, SimPO \citep{meng2024simpo}, and Cal-DPO \citep{xiao2024cal}. All methods are trained on the same synthetic dataset and we compare \(\JS[\pistar \Vert \pi]\) for each method.

Figure~\ref{fig:synth_opt} shows that the DPO log-ratio objective is the \textit{only} method that consistently minimizes \(\JS[\pistar\|\pi]\) across various \(\tau\) values, converging to \(\pistar\).
This empirical result supports Theorem~\ref{th:opt}, highlighting that \textbf{when preferences encode Differential Information, the log-ratio reward succeeds in recovering the target policy while other objectives fail to do so}.

\section{Training dynamics of DPO}\label{sec:dpo_char}
The power-law structure of the Differential Information Distribution (DID) identified in the previous section (Corollary~\ref{cor:pw-did}) determines how DPO updates policies during training. This provides a unified lens through which we can understand the training dynamics of DPO. We discuss how the power-law DID relationship proves general guarantees on the log-likelihood change in DPO (Theorem~\ref{thm:dyn}). Next, we analyze which factors impact policy exploration during DPO training, based on the power-law DID relationship (Theorem~\ref{thm:str}).

\subsection{Log-likelihood change}\label{sec:dyn}
Corollary~\ref{cor:pw-did} explains the characteristic log-likelihood shifts observed during DPO training. The DID power-law, which DPO must satisfy at convergence, clarifies how the preference sampling distributions and the converged policy are linked together. Combined with Jensen's and Gibbs' inequalities, this relationship predicts the asymmetric shifts in the log-likelihoods of chosen and rejected responses. Unlike prior analyses \citep{feng2024towards, razin2024unintentionalunalignmentlikelihooddisplacement, cho2025rethinkingdporolerejected}, our statements derived from this principle place no restrictions on step sizes, gradients, or parameterization methods. We further extend beyond the in-distribution regime (\ie, samples \(\piref\) was trained on) considered in some previous work \citep{rafailov2024rqlanguagemodel}.

\begin{theorem}[Log-Likelihood Change of DPO]\label{thm:dyn}
Consider a preference dataset \(\mathcal D = \{(\yw,\yl)\mid\yw\sim\piref,\yl\sim\pil\}\), and \(\pistar\) obtained by preference optimization on \(\mathcal D\) using the log-ratio reward \(r=\beta\log\pi/\piref\). Then, for any \(\beta>0\), \(\pistar\) must decrease the average log-likelihood of \(\yl\):\begin{gather*}
    \mathbb E_{\ylfull}\left[\log\pistar(\yl)\right]<\mathbb E_{\ylfull}\left[\log\piref(\yl)\right].
\end{gather*}
Conversely, if \(\piref\) was fine-tuned on \(\yl\) (\ie, \(\piref=\pil\)), then, for any \(\beta\ge1\), \(\pistar\) must increase the average log-likelihood of \(\yw\):\begin{gather*}
    \mathbb E_{\ywfull}\left[\log\pistar(\yw)\right]>\mathbb E_{\ywfull}\left[\log\piref(\yw)\right].
\end{gather*}
\end{theorem}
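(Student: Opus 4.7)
}
My plan is to apply Corollary~\ref{cor:pw-did} directly to each of the two special cases and then reduce everything to two standard inequalities (Jensen's inequality and the non-negativity of KL divergence). Throughout, write $f(y) = \pistar(y)/\piref(y)$ and $g(y) = \piw(y)/\pil(y)$. Corollary~\ref{cor:pw-did} says that at a stationary point of DPO we have $f(y) = C\, g(y)^{1/\beta}$ for some normalizing constant $C > 0$, and the constraint $\sum_y \pistar(y) = 1$ pins down $C = 1/\mathbb{E}_{\piref}[g^{1/\beta}]$.

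\paragraph{Part 1 ($\piw=\piref$).} Here $g = \piref/\pil$, so specializing the power-law gives $\pistar(y)/\piref(y) = C\,(\piref(y)/\pil(y))^{1/\beta}$. Taking the expected log under $\pil$,
\begin{equation*}
\mathbb{E}_{\ylfull}\!\left[\log \tfrac{\pistar(\yl)}{\piref(\yl)}\right]
= \log C \;-\; \tfrac{1}{\beta}\,\KL{\pil}{\piref}.
\end{equation*}
I then bound $\log C$ using Jensen's inequality on the concave function $\log$:
$\log \mathbb{E}_{\piref}[(\piref/\pil)^{1/\beta}] \ge \tfrac{1}{\beta}\mathbb{E}_{\piref}[\log(\piref/\pil)] = \tfrac{1}{\beta}\KL{\piref}{\pil}$,
which gives $\log C \le -\tfrac{1}{\beta}\KL{\piref}{\pil}$. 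Combining,
\begin{equation*}
\mathbb{E}_{\ylfull}\!\left[\log \tfrac{\pistar(\yl)}{\piref(\yl)}\right]
\le -\tfrac{1}{\beta}\bigl(\KL{\piref}{\pil} + \KL{\pil}{\piref}\bigr) < 0,
\end{equation*}
with strictness following from the assumption $\piw \neq \pil$ (here $\piref = \piw \neq \pil$). This works for any $\beta > 0$.

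\paragraph{Part 2 ($\piref=\pil$).} Now $g = \piw/\piref$, so $\pistar(y)/\piref(y) = C\,(\piw(y)/\piref(y))^{1/\beta}$. Taking the expected log under $\piw$,
\begin{equation*}
\mathbb{E}_{\ywfull}\!\left[\log \tfrac{\pistar(\yw)}{\piref(\yw)}\right]
= \log C \;+\; \tfrac{1}{\beta}\,\KL{\piw}{\piref}.
\end{equation*}
The key step, and the one that forces $\beta \ge 1$, is to bound $\log C$ from below. When $\beta \ge 1$, the map $x \mapsto x^{1/\beta}$ is concave on $(0,\infty)$, so Jensen's inequality gives
$\mathbb{E}_{\piref}[(\piw/\piref)^{1/\beta}] \le \bigl(\mathbb{E}_{\piref}[\piw/\piref]\bigr)^{1/\beta} = 1$,
hence $\log C = -\log \mathbb{E}_{\piref}[(\piw/\piref)^{1/\beta}] \ge 0$. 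Combining with $\KL{\piw}{\piref} > 0$ (using $\piw \neq \pil = \piref$) yields the strict inequality. If $\beta < 1$, the inequality in Jensen's step reverses and $\log C$ can be negative of arbitrary magnitude, so this method genuinely requires $\beta \ge 1$.

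\paragraph{Main obstacle.} The proof is essentially a normalization-plus-Jensen argument; the only subtle point is the asymmetry between the two parts. In Part 1, the expected $\log$-ratio splits into two KL terms with matching signs, so both Gibbs and Jensen cooperate regardless of $\beta$. In Part 2, the two contributions to $\log C$ and $\tfrac{1}{\beta}\KL{\piw}{\piref}$ have opposite signs a priori, and one needs concavity of $x^{1/\beta}$ (equivalently $\beta \ge 1$) to line them up. Explaining why the theorem's hypothesis on $\beta$ is not an artifact of the proof but reflects this concavity boundary is, I expect, the most delicate part of the write-up.
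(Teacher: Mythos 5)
Your proposal is correct and follows essentially the same route as the paper: use the power-law (closed-form) expression $\pistar(y)\propto\piref(y)\,(\piw(y)/\pil(y))^{1/\beta}$ from Corollary~\ref{cor:pw-did}, write the expected log-likelihood change as a KL term plus the log-partition constant, and control that constant with a convexity inequality. The only cosmetic differences are in how the normalizer is bounded (you use Jensen on $\log$ under $\piref$ in Part 1 and concavity of $x^{1/\beta}$ in Part 2, while the paper uses Jensen on the convex power $x^{1+1/\beta}$ under $\pil$ and H\"older, respectively), which are equivalent and yield the same conclusions, including the role of $\beta\ge1$ in the second case.
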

See Appendix~\ref{proof:thm:dyn} for proof. The theorem captures a basic asymmetric effect of DPO: when \(\yw\) is sampled from \(\piref\), the converged policy \(\pistar\) must decrease \(\log\pi(\yl)\), and when \(\yl\sim\piref\), \(\pistar\) must increase \(\log\pi(\yw)\) for \(\beta\ge1\). The theorem therefore connects the power-law DID to concrete, observable training dynamics and clarifies how and why likelihoods change during DPO training.\footnote{See Appendix~\ref{app:dyn} for a discussion on how the log-margin \(\log\pi(\yw)-\log\pi(\yl)\) evolves during DPO training, and how it implies an information-theoretic triangle inequality that must hold at convergence.}

\subsection{Policy exploration}
The characteristics of policy exploration in DPO can also be explained using the power-law DID relationship. In particular, this DID structure allows DPO to adaptively adjust its KL-divergence from the reference policy \(\piref\) based on the sharpness of the preference data.
Intuitively, when preference labels are weak (\ie, \(p^*(y_1\succ y_2)\) being close to 0.5), the power-law DID relationship constrains the DPO-converged policy \(\pistar\) to remain close to \(\piref\).  
Conversely, when preference probabilities are stronger (\ie,  \(p^*(y_1\succ y_2)\) being close to 0 or 1), the same DID relationship drives the converged policy farther away from \(\piref\), under the same KL-penalty \(\beta\).
This trade-off is formalized below (proof in Appendix~\ref{proof:str}).
\begin{theorem}[Adaptive Policy Exploration of DPO]\label{thm:str}
Let \(\mathcal D = \{(\yw,\yl)\mid \yw\sim\piref,\ \yl\sim\pil\}\) be a preference dataset with an implicit Bradley-Terry preference distribution \(p^*_{\mathcal D}\).  
Consider another dataset \(\mathcal D'=\{(\yw,\yl)\}\) whose implicit Bradley-Terry distribution \(p^*_{\mathcal D'}\) is a \q{sharpened} version of \(p^*_{\mathcal D}\), in the sense that there exists \(\alpha>1\) such that for all pairs \((\yw,\yl)\in\mathcal Y\times\mathcal Y\),
\begin{gather*}
    p^*_{\mathcal D'}(\yw\succ\yl)
    =\frac{\big(p^*_{\mathcal D}(\yw)\big)^\alpha}{\big(p^*_{\mathcal D}(\yw)\big)^\alpha+\big(p^*_{\mathcal D}(\yl)\big)^\alpha}
    =\sigma\!\big(\alpha\log p^*_{\mathcal D}(\yw)-\alpha\log p^*_{\mathcal D}(\yl)\big).
\end{gather*}
For the same reference policy \(\piref\) and any \(\beta>0\), let \(\pistar_{\mathcal D}\) and \(\pistar_{\mathcal D'}\) denote the policies obtained by preference optimization on \(\mathcal D\) and \(\mathcal D'\), respectively, using the log-ratio reward \(r=\beta\log\pi/\piref\).  
Then the strengthened dataset \(\mathcal D'\) induces a strictly larger divergence from the reference:
\begin{gather*}
    \KL{\piref}{\pistar_{\mathcal D'}} \;>\; \KL{\piref}{\pistar_{\mathcal D}}.
\end{gather*}
\end{theorem}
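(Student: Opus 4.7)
The plan is to chain together the DID power-law from Corollary~\ref{cor:pw-did}, apply it to both datasets, and then reduce the KL comparison to an application of Jensen's inequality on a moment-generating-function-like quantity. I will write all ratios in the normalized DID form $\DID{\cdot}{\cdot}$ so that the power-law condition is used directly.

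First I would unpack the hypothesis. The implicit Bradley-Terry distribution $p^*_{\mathcal D}$ satisfies $p^*_{\mathcal D}(y)\propto \piw(y)/\pil(y)$, which is exactly $\DID{\piw}{\pil}$ up to normalization. The assumption on $\mathcal D'$ says $p^*_{\mathcal D'}(y)\propto p^*_{\mathcal D}(y)^\alpha$, so the implicit sampling distributions for $\mathcal D'$ — call them $\pi_w',\pi_\ell'$ — satisfy $\DID{\pi_w'}{\pi_\ell'}(y)\propto \DID{\piw}{\pil}(y)^\alpha$. Applying Corollary~\ref{cor:pw-did} to both datasets with the same $\piref$ and $\beta$ then yields
\begin{align*}
\DID{\piw}{\pil}(y)&\propto \DID{\pistar_{\mathcal D}}{\piref}(y)^\beta,\\
\DID{\pi_w'}{\pi_\ell'}(y)&\propto \DID{\pistar_{\mathcal D'}}{\piref}(y)^\beta.
\end{align*}
Combining these three proportionalities (and absorbing the $\beta$) gives the clean identity
\begin{gather*}
\DID{\pistar_{\mathcal D'}}{\piref}(y)\propto \DID{\pistar_{\mathcal D}}{\piref}(y)^{\alpha},
\end{gather*}
equivalently $\pistar_{\mathcal D'}(y)=\piref(y)\,R(y)^{\alpha}/Z$, where $R(y)\coloneqq \pistar_{\mathcal D}(y)/\piref(y)$ and $Z=\sum_{y}\piref(y)R(y)^{\alpha}=\mathbb E_{\piref}[R^\alpha]$.

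Next I would plug this representation into the KL. A direct computation gives
\begin{gather*}
\KL{\piref}{\pistar_{\mathcal D'}}=-\mathbb E_{\piref}\!\left[\alpha\log R - \log Z\right]=\alpha\,\KL{\piref}{\pistar_{\mathcal D}}+\log Z.
\end{gather*}
Since $\mathbb E_{\piref}[R]=\sum_y \pistar_{\mathcal D}(y)=1$ and $x\mapsto x^{\alpha}$ is strictly convex for $\alpha>1$, Jensen's inequality gives $Z=\mathbb E_{\piref}[R^{\alpha}]\ge 1$, with strict inequality whenever $R$ is non-constant. Since $\piref\neq\pistar_{\mathcal D}$ by the standing assumption, $R$ is non-constant, so $\log Z>0$ and also $\KL{\piref}{\pistar_{\mathcal D}}>0$ by Gibbs' inequality. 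Because $\alpha>1$, both terms strictly increase the original KL, giving $\KL{\piref}{\pistar_{\mathcal D'}}>\KL{\piref}{\pistar_{\mathcal D}}$.

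The main technical obstacle is the first step: correctly translating the \emph{sharpening} hypothesis, which is stated at the level of preference probabilities $p^*$, into a power-law statement on the normalized likelihood ratios $\DID{\piw}{\pil}$ that Corollary~\ref{cor:pw-did} requires. This needs a careful use of the Bradley-Terry identification $p^*(y)\propto \piw(y)/\pil(y)$ together with the fact that Corollary~\ref{cor:pw-did} only depends on the sampling distributions through their ratio, so the (possibly non-unique) choice of $\pi_w',\pi_\ell'$ representing the sharpened preferences is immaterial. Once that reduction is in place, the remainder is a one-line Jensen/Gibbs argument.
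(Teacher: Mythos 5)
Your proof is correct, and it reaches the paper's conclusion by a somewhat different final argument. Both you and the paper start from the same place: the distribution-matching/power-law characterization of the converged policies, giving $\pistar_{\mathcal D}\propto\piref\,(p^*_{\mathcal D})^{1/\beta}$ and $\pistar_{\mathcal D'}\propto\piref\,(p^*_{\mathcal D})^{\alpha/\beta}$, i.e.\ your identity $\pistar_{\mathcal D'}\propto\piref R^\alpha$ with $R=\pistar_{\mathcal D}/\piref$ (the paper writes both closed forms in terms of $\piref/\pil$, which is the same thing since $\piw=\piref$; your detour through representing distributions $\pi_w',\pi_\ell'$ for $\mathcal D'$ is harmless but could be skipped by invoking Theorem~\ref{thm:eq} directly on $p^*_{\mathcal D'}$). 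Where you diverge is the inequality itself: the paper writes the KL gap as $K(\alpha/\beta)-K(1/\beta)+\frac{1-\alpha}{\beta}K'(0)$ for the cumulant-generating function $K(t)=\log\E_{\piref}\bigl[(\piref/\pil)^t\bigr]$ and uses convexity of $K$ (first-order bound at $1/\beta$ plus monotonicity of $K'$), whereas you use the exact decomposition $\KL{\piref}{\pistar_{\mathcal D'}}=\alpha\,\KL{\piref}{\pistar_{\mathcal D}}+\log\E_{\piref}[R^\alpha]$ and finish with Jensen on $x\mapsto x^\alpha$ (using $\E_{\piref}[R]=1$) together with Gibbs' inequality. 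The two expressions are algebraically equivalent, but your route is arguably cleaner and more elementary, and it even yields the sharper quantitative statement that the gap is at least $(\alpha-1)\KL{\piref}{\pistar_{\mathcal D}}$, which the paper's CGF bound does not make explicit. One small point of rigor: the strictness should be traced to the standing assumption $\piw\neq\pil$, which here (since $\piw=\piref$) gives $\piref\neq\pil$, hence $R\propto(\piref/\pil)^{1/\beta}$ is non-constant and $\pistar_{\mathcal D}\neq\piref$; citing "$\piref\neq\pistar$" is about the target policy and is not literally the hypothesis you need, though the fix is immediate.
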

\begin{remark}\label{remark:alpha}
By Theorem~\ref{def:DIDD}, decreasing the data sampling temperature by a factor of \(\alpha>1\), \ie, drawing \(\yw\) and \(\yl\) from \(\piw(y)^\alpha\) and \(\pil(y)^\alpha\), amplifies preference strength by the same factor \(\alpha\).
\end{remark}

\begin{remark}\label{remark:beta}
To recover \(\pistar_{\mathcal D}\) by optimizing a stronger dataset \(\mathcal D'\) using the log-ratio reward \(r'=\beta'\log\frac{\pi}{\piref}\), one must increase the KL-penalty strength such that \(\beta'=\beta\cdot\alpha\) for \(\alpha>1\).
\end{remark}

Thus, the effective KL-budget of DPO depends not only on \(\beta\), but also on the strength of the preference data.  
Datasets with weak or noisy preference labels (\ie, \(p^*(y_1\succ y_2)\approx0.5\)) constrain DPO to remain near \(\piref\), where large deviations cannot be justified by the evidence.  
In contrast, stronger preference labels act as a divisor on the log-ratio reward (Remark~\ref{remark:beta}), enabling larger departures from \(\piref\).
Therefore, the DID power-law allows DPO to balance conservatism and exploration, by linking policy deviation directly to the strength or quality of preference data.

\subsection{Experiments}
We validate Theorems~\ref{thm:dyn} and~\ref{thm:str} using the Energy-Based Model (EBM) experiment from Section~\ref{sec:experiments}.
All policies (reference, rejected, chosen) are parameterized by independent logits drawn from a normal distribution. 
The dataset is constructed from preference pairs sampled as in Section~\ref{sec:experiments}, and we train policies using the DPO objective with varying \(\beta\) values.

\begin{figure}[h]
    \centering
    \includegraphics[width=\textwidth]{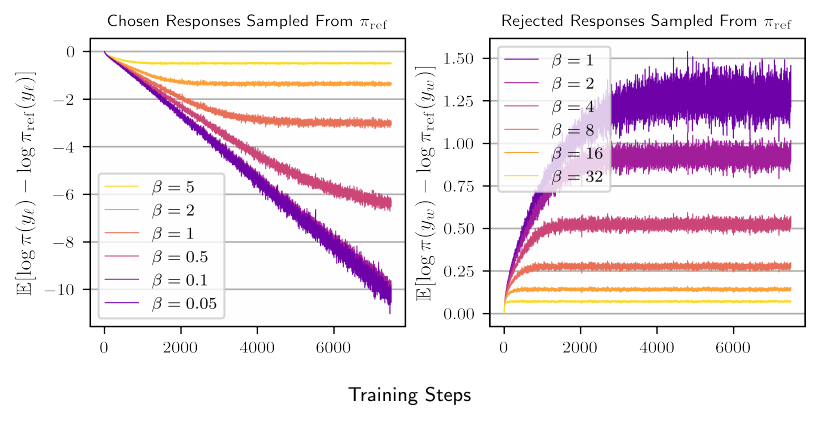}
    \caption{\label{fig:lld} Log-likelihood change during DPO training. When chosen responses \(\yw\) are sampled from \(\piref\), the log-likelihood of rejected responses \(\yl\) decreases relative to \(\piref\) (left plot). When rejected samples \(\yl\) are sampled from \(\piref\), the log-likelihood of chosen responses increases for \(\beta\ge1\) (right plot). This confirms the predicted change in log-likelihood of DPO (Theorem~\ref{thm:dyn}).}
\end{figure}

\paragraph{Verifying log-likelihood change.}
To test Theorem~\ref{thm:dyn}, we consider two cases: (1) chosen responses sampled from the reference (\(\yw\sim\piref\)), and (2) rejected responses sampled from the reference (\(\yl\sim\piref\)). 
We then track the change in average log-likelihoods under DPO training.  

Figure~\ref{fig:lld} confirms the predictions of Theorem~\ref{thm:dyn}.
When \(\yw\sim\piref\), the converged policy \(\pistar\) decreases \(\mathbb E[\log\pi(\yl)]\) relative to \(\piref\).  
Conversely, when  \(\yl\sim\piref\), DPO increases \(\mathbb E[\log\pi(\yw)]\) for \(\beta\ge1\).  
These results show that \textbf{the consistent log-likelihood shifts observed in DPO can be rigorously explained and precisely predicted based on the power-law DID relationship.}

\begin{figure}[h]
    \centering
    \includegraphics[width=\textwidth]{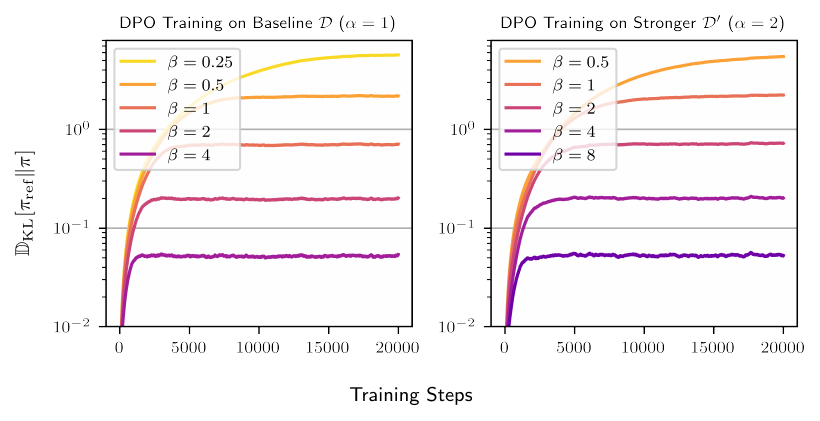}
    \caption{\label{fig:explr}Policy exploration of DPO. Compared to the original dataset \(\mathcal D\), halving the sampling temperature to form \(\mathcal D'\) strengthens preferences (\(\alpha=2\)) and increases the KL-divergence from \(\piref\) under the same KL-penalty \(\beta\), consistent with Theorem~\ref{thm:str}. Increasing the KL-penalty to \(2\beta\) when training on \(\mathcal D'\) restores the divergence to the level obtained with \(\mathcal D\) using \(\beta\), in line with Remark~\ref{remark:beta}.}
\end{figure}

\paragraph{Verifying policy exploration.}
To test Theorem~\ref{thm:str}, we generate preference datasets with varying sharpness.
Specifically, we compare a fixed dataset \(\mathcal D\) with a sharpened version \(\mathcal D'\) that halves the sampling temperature of preference pairs (Remark~\ref{remark:alpha}), increasing the preference strength to \(\alpha=2\).
We train policies on both datasets using the DPO objective with various \(\beta\) values, and track the KL-divergence \(\KL\piref\pi\) throughout the training process.

Figure~\ref{fig:explr} shows that a stronger preference signal (\(\mathcal D',\alpha=2\)) consistently yields larger divergence from \(\piref\) than a weaker one (\(\mathcal D,\alpha=1\)), for the same \(\beta\).  
Moreover, increasing the KL-penalty to \(\beta'=2\beta\) for the sharpened dataset \(\mathcal D'\) results in a converged policy that matches the KL-divergence of the original dataset \(\mathcal D\), consistent with Remark~\ref{remark:beta}.  
This confirms that \textbf{policy exploration in DPO is jointly governed by the KL-penalty weight \(\beta\) and the implicit strength of preference data.}

\section{Training dynamics and learned capabilities}
\label{sec:uncertainty}
In this section, we examine how log-likelihood dynamics shape downstream performance by analyzing the properties of the learned Differential Information.
We first study the empirical link between the \textit{log-likelihood displacement} (LLD) phenomenon and downstream capabilities (Section~\ref{sec:lld_ent}).
We then show how the Shannon entropy of the Differential Information Distribution can help explain the observed trade-off, where different training dynamics lead to distinct capabilities (Section~\ref{sec:exp_unc}).

\subsection{A case study on log-likelihood displacement}\label{sec:lld_ent}
\emph{Log-likelihood displacement} (LLD) refers to the phenomenon where the log-likelihood of chosen response decreases during DPO training, even as alignment improves \citep{rafailov2024rqlanguagemodel, razin2024unintentionalunalignmentlikelihooddisplacement, shi2024understandinglikelihoodoveroptimisationdirect}. Preventing LLD has been shown to improve performance on benchmarks such as MMLU~\citep{hendrycks2020measuring}, which requires verifiable, ground-truth answers \citep{shi2024understandinglikelihoodoveroptimisationdirect, chen2024noisecontrastivealignmentlanguage, xiao2024cal}. While prior work investigates the cause of LLD through sample similarity or gradient dynamics \citep{pal2024smaug, razin2024unintentionalunalignmentlikelihooddisplacement, feng2024towards}, it doesn't fully explain why preventing LLD results in learning different capabilities.

To investigate this gap, we conduct a case study on how LLD affects downstream performance.
We compare standard DPO with an another method utilizing projected gradient descent to prevent LLD while still optimizing the DPO objective, which we term \textbf{DPO-PG} (Appendix~\ref{app:DPO-PG_cal_dpo}).
All runs start from the same \(\piref\) fine-tuned on chosen responses. We train \texttt{Mistral7B-v0.3} \citep{Mistral7bv3} and \texttt{Qwen3-4B} \citep{qwen3technicalreport} on two instruction-following datasets (Magpie-Pro and Magpie-G27), and evaluate on open-ended instruction-following (Arena-Hard \citep{arenahard2024}, Wild-Bench \citep{lin2024wildbench}) and a suite of eight knowledge-intensive QA tasks (details in Appendix~\ref{app:exp_details_real}).

\begin{table}[ht]
\centering
\caption{Impact of log-likelihood displacement (LLD) on downstream capabilities using the Magpie-Pro dataset. For open-ended instruction-following, we report Arena-Hard-v0.1 win-rate (AH, [\%]) and Wild-Bench-v2 ELO score (WB). For knowledge-intensive QA, we report mean reciprocal rank across 8 QA benchmarks (QA). DID entropy (Ent., [nats]) is estimated via importance sampling (Appendix~\ref{app:entropy}). Compared to standard DPO, preventing LLD (DPO-PG) learns a low-entropy DID, which enhances factual accuracy but reduces performance on open-ended tasks.}\label{tab:overall_exp}
    \begin{tabular}{r r c c c c c c c c}
        \toprule
        \quad & \quad & \multicolumn{4}{c}{\texttt{Mistral7B-v0.3}} & \multicolumn{4}{c}{\texttt{Qwen3-4B}} \\
         \cmidrule(r){3-6} \cmidrule(l){7-10}
        Method & \(\beta\)  &  Ent. & AH (\(\uparrow\)) & WB (\(\uparrow\)) & QA (\(\uparrow\)) &  Ent.  & AH (\(\uparrow\)) & WB (\(\uparrow\)) & QA (\(\uparrow\)) \\
        \midrule
\multirow{4}{*}{DPO} & 0.1 & 1123.2 &  19.1 & 1141.5  & 0.53 & 9158.4 & 30.7 & 1134.4 & 0.49 \\
& 0.2  & 1303.4 &  18.5  & 1145.3  & 0.26 & 4765.9 & 28.1 & 1146.2  & 0.34 \\
& 0.1  & 1253.9 &  \textbf{23.4} &  \textbf{1146.6} & 0.28 & 7663.3 & 27.5 & 1148.2 & 0.27  \\
& 0.05  & 970.2 &  22.4  & 1145.3  & 0.30 & 6801.2 & \textbf{43.7} & \textbf{1164.5} & 0.24 \\
\hdashline 
\multicolumn{2}{r}{DPO-PG} & 495.1 &  19.6  & 1129.9 & \textbf{0.92} & 388.2 & 37.4  & 1148.5 & \textbf{0.94} \\
\bottomrule
\end{tabular}
\end{table}

As shown in Table~\ref{tab:overall_exp}, across both model architectures, preventing LLD (DPO-PG) consistently yields the strongest performance on knowledge-intensive QA, at the expense of open-ended instruction-following. Standard DPO shows the opposite pattern, excelling on open-ended tasks but under-performing on factual QA. The same trend appears on Magpie-G27 (Table~\ref{tab:extended_overall_exp}, Figure~\ref{fig:extended_exp_qa}). This raises the question: \textit{why is LLD associated with a trade-off between factual QA and open-ended tasks?}

\subsection{Connecting policy dynamics with learned capabilities}\label{sec:exp_unc}
We hypothesize that this trade-off reflects the properties of the information learned from training.
To measure this, we use the Shannon entropy of the DID which quantifies the uncertainty or concentration of Differential Information that drives policy updates.
The DID entropy is defined as
\[
H(\DID\pistar\piref)=-\sum_{y\in\mathcal Y}\DID\pistar\piref(y)\log\DID\pistar\piref(y).
\]
Intuitively, low DID entropy indicates that Differential Information is concentrated on a narrow subset of samples, while high entropy suggests it is distributed more broadly.
Appendices~\ref{app:did_entropy_explained} and \ref{app:entropy} detail how DID entropy reflects the properties of Differential Information and how it can be estimated using importance sampling.

As shown in Table~\ref{tab:overall_exp}, the DID entropy (column \q{Ent.}) is observed to be significantly lower when preventing LLD (DPO-PG) compared to standard DPO. We hypothesize that LLD reflects changes in the output distribution that increase the entropy of the learned DID. Since chosen responses \(\yw\) typically lie in high-probability regions of the reference policy, decreasing  \(\log\pi(\yw)\) would smooth these probability peaks, yielding a more high-entropy DID. Conversely, increasing  \(\log\pi(\yw)\) sharpens these peaks, concentrating probability mass and reducing DID entropy (Appendix~\ref{app:arg_unc}).

Comparing the DID entropy and downstream performance in Table~\ref{tab:overall_exp}, we observe that factual QA performance is associated with low-entropy DID, while open-ended task performance is associated with high-entropy DID.
This aligns with intuition: factual queries (\eg, \q{What is the capital of France?}) admit only a narrow set of correct answers \citep{lee2023factualityenhancedlanguagemodels, xiang20252reasoningllmslearning}, concentrating Bayesian evidence on a small subset of \(\mathcal Y\). In contrast, open-ended prompts (\eg, \q{Write a story about a dragon.}) admit a wide variety of valid responses \citep{li2025preservingdiversitysupervisedfinetuning,gu2024diversefinegrainedinstructionfollowingability}, dispersing Bayesian evidence more broadly across \(\mathcal Y\). These results suggest that learning low-entropy DID enhances factual precision, while learning high-entropy DID improves open-ended tasks.

In summary, we observe that preventing LLD induces the model to learn a low-entropy DID, which improves accuracy on factual QA tasks.
In contrast, allowing LLD results in learning a high-entropy DID that enhances open-ended tasks.
These observations suggest that log-likelihood dynamics reflect the type of information learned during alignment.

\section{Conclusion}
\label{sec:conclusion}
We introduced a Bayesian perspective on Direct Preference Optimization (DPO) through the lens of \textsc{Differential Information Distribution} (DID).  
We showed that DPO's log-ratio reward is the unique Bradley-Terry reward that learns the target policy when preferences encode Differential Information.
We further demonstrated that DPO's characteristic training dynamics (log-likelihood shifts and adaptive policy exploration) stem from a power-law DID relationship.
We finally introduced DID entropy as a principled measure of uncertainty in the learned information, clarifying the trade-off between log-likelihood displacement and downstream performance: high-entropy DID smooths the output distribution and aids open-ended instruction-following, while low-entropy DID concentrates probability mass and benefits knowledge-intensive QA.
Together, our findings provide both a principled theoretical foundation and practical guidance for preference-based alignment.

\section{Reproducibility statement}
To ensure the reproducibility of our work, we provide comprehensive details on our theoretical and empirical findings. For our theoretical results, detailed proofs for all theorems and corollaries are available in Appendix~\ref{app:proof}. For our empirical validation, Appendix~\ref{app:exp_details_ref} contains a full description of our experimental setups. Specifically, Appendix~\ref{app:exp_details_synth} details the setup and hyper-parameters for our controlled experiments using Energy-Based Models. Appendix~\ref{app:exp_details_real} describes the details for preparing the Magpie-G27 dataset, the training configurations for both DPO and our DPO-PG method, and the evaluation protocols for the real-data experiments in Section~\ref{sec:exp_unc}.

\bibliography{iclr2026_conference}
\bibliographystyle{iclr2026_conference}
\newpage
\appendix
\section{Limitations}
\label{sec:limitations}
While our perspective offers novel insights, we acknowledge limitations for future work. First, Theorem~\ref{thm:eq}, established from prior work \citep{dumoulin2024densityestimationperspectivelearning}, assumes sufficient data coverage and train-test generalization. Second, the connection between DID entropy and policy dynamics (Claim~\ref{prop:unc}) is qualitative and based on information-theoretic intuition (Appendix~\ref{app:arg_unc}); despite experimental support (Section~\ref{sec:exp_unc}), a formal treatment would strengthen this aspect of our work.

\section{Related work}
\label{sec:related_works}
\paragraph{Direct Preference Optimization.}
Direct Preference Optimization (DPO) \citep{rafailov2024direct} is widely used to align LMs with human preferences in a supervised manner \citep{xiao2024comprehensivesurveydirectpreference, liu2025surveydirectpreferenceoptimization}. Recent research investigates the theoretical foundations of preference optimization, connecting it to distribution matching \citep{korbak2022reinforcement, dumoulin2024densityestimationperspectivelearning, xu2024contrastive, liu2024understandingreferencepoliciesdirect,ji2024efficientexactoptimizationlanguage}, and analyzing the optimization dynamics of log-likelihood displacement \citep{pal2024smaug,feng2024towards,mao2024simple}.
While \citet{chen2024noisecontrastivealignmentlanguage} reinterpret the DPO objective from a noise contrastive estimation perspective, their approach relies on the optimal policy of the KL-regularized RL objective and leaves its justification open for discussion.

We complement prior work by offering a Bayesian perspective on the justification for the reward parameterization of DPO, linking its optimality to the Differential Information captured by the preference data. This perspective explains the training properties of DPO, and also yields a novel interpretation of log-likelihood displacement, relating it to the entropy of the learned DID.

\paragraph{Bayesian perspective of KL-regularized RL.}
A prior work done by \citet{korbak2022rlklpenaltiesbetter} interprets the optimal policy of the KL-regularized RL objective \(\pistar(y) \propto \piref(y) \exp(r(y)/\beta)\) from a Bayesian perspective, showing how the reward-induced distribution \(\LK r\propto\exp(r(y))\) can be viewed as carrying the Bayesian-evidence towards a target policy. Because DPO learns the same optimal policy using supervised learning, there exists an inherent connection between this view and our DID perspective. Our work builds on that connection but provides a Bayesian account of DPO, characterizing the statistical structure of preference datasets, the optimality of the DPO log-ratio reward, and the resulting policy-dynamics phenomena.

\section{Interpretation of Differential Information Distribution}\label{app:lem_ri}
This section provides a probabilistic interpretation of the \textsc{Differential Information Distribution} (DID). Our goal is to illustrate the intuition that the DID \(\DID{\pistar}{\piref}\) represents the distribution over samples \(y\) that carry the Differential Information needed to update the reference policy \(\piref\) into the target policy \(\pistar\) through Bayesian conditioning.

\subsection{Information as an abstract event}
We begin by establishing a Bayesian framework to reason about information associated with sentences. Consider the sample space \(\mathcal Y\) of all possible sentences, assuming a uniform prior distribution \(P(Y=y) = 1/|\mathcal Y|\).

Now, consider an abstract ``event'' or ``property'' \(X\) that can be associated with sentences. This event \(X\) represents some specific characteristic or information content. We can quantify the association between a sentence \(y\) and the property \(X\) using the conditional probability \(P(X \mid Y=y)\). This term represents the likelihood that a given sentence \(y\) possesses the property \(X\). For instance:
\begin{enumerate}
    \item If \(P(X\mid Y=y)\) measures the probability of \q{\(y\) being a mathematically correct sentence}, then the probabilities will be either 0 or 1.\begin{itemize}
        \item \(P(X\mid Y=\)``\texttt{1+1=2}''\()=1\)
        \item \(P(X\mid Y=\)``\texttt{1+0=1}''\()=1\)
        \item \(P(X\mid Y=\)``\texttt{2+2=5}''\()=0\)
    \end{itemize}
    \item If \(P(X\mid Y=y)\) measures the probability of \q{\(y\) being a safe sentence}, then the probabilities can be in the range of \(0\le P(X\mid Y=y)\le1\).\begin{itemize}
        \item \(P(X\mid Y=\)``\texttt{Apples are red.}''\()=0.99\)
        \item \(P(X\mid Y=\)``\texttt{Alcohol is good for relaxation.}''\()=0.3\)
        \item \(P(X\mid Y=\)``\texttt{Let's promote violence!}''\()=0\)
    \end{itemize}
\end{enumerate}

A crucial assumption in our analysis is that the property \(X\) is inherent to the sentence \(y\) itself, regardless of which language model might have generated it. For instance, the mathematical correctness or safeness of a sentence should not depend on whether it came from \texttt{Mistral7B-v0.3} or \texttt{Qwen3-4B}; it's a property of the content in \(y\) itself.

Formally, this means we assume that the event \(X\) is conditionally independent of the generating model (\eg, \(\piref\)) given the sentence \(Y=y\):
\begin{gather*}
    P(X \mid Y=y, \piref) = P(X \mid Y=y).
\end{gather*}
This is equivalent to stating that the joint probability factors as\begin{gather*}
    P(X, \piref \mid Y=y) = P(X \mid Y=y) P(\piref \mid Y=y).
\end{gather*}
This assumption allows us to treat \(P(X \mid Y=y)\) as a property purely of the sentence \(y\) and the abstract information \(X\).

\subsection{Interpreting \texorpdfstring{\(P(Y=y \mid X)\)}{P(Y=y|X)}}\label{app:evidence_dist}
Given the likelihood \(P(X \mid Y=y)\) that a sentence \(y\) possesses property \(X\), what does the distribution \(P(Y=y \mid X)\) represent? This is the distribution over sentences for which the property \(X\) holds.
If \(X\) represents \q{mathematical correctness}, then sampling from \(P(Y=y \mid X)\) would yield mathematically correct statements.

We can derive this distribution using Bayes' theorem and our uniform prior \(P(Y=y) = 1/|\mathcal Y|\).
\begin{flalign*}
P(Y=y \mid X) &= \frac{P(X \mid Y=y) P(Y=y)}{P(X)} \\
&= \frac{P(X \mid Y=y) P(Y=y)}{\sum_{y' \in \mathcal Y} P(X \mid Y=y') P(Y=y')} \\
&= \frac{P(X \mid Y=y) (1/|\mathcal Y|)}{\sum_{y' \in \mathcal Y} P(X \mid Y=y') (1/|\mathcal Y|)} \\
&= \frac{P(X \mid Y=y)}{\sum_{y' \in \mathcal Y} P(X \mid Y=y')} \\
&\propto P(X \mid Y=y).
\end{flalign*}
This confirms the intuition: the probability of sampling a sentence \(y\) that holds \(X\) is directly proportional to the likelihood that sentence \(y\) possesses the property \(X\). Sentences that strongly exhibit property \(X\) (\ie, high \(P(X \mid Y=y)\)) are more likely to be sampled from \(P(Y=y \mid X)\).

\subsection{Information difference between policies}
We now focus on comparing two language models, \(\pistar\) and \(\piref\), both assumed to have full support over \(\mathcal Y\). We are interested in the difference in the information contained in these two models.
We characterize this information difference as the Bayesian evidence required to update \(\piref\) into \(\pistar\).
We represent such information by an abstract event \(X\) which we will call the \textsc{Differential Information} that updates \(\piref\) into \(\pistar\). We seek an \(X\) such that conditioning \(\piref(y)\) on \(X\) yields \(\pistar(y)\). Formally, given \(\piref(y)=\LK\piref\), we want \(X\) to satisfy
\[ \pistar(y) = P(Y=y \mid \piref, X).\]
Furthermore, we maintain our key assumption that this information \(X\) is intrinsic to the sentences, meaning it is conditionally independent of the prior \(\piref\) given the sentence \(y\):
\[ P(X \mid Y=y, \piref) = P(X \mid Y=y).\]
In other words, the probability that a sentence \(y\) holds the information \(X\) does not depend on whether it was sampled from \(\piref\).

Before proceeding, we should confirm that such an event \(X\) can always be constructed. The following lemma guarantees its existence.

\begin{lemma}[Existence of Differential Information]\label{lem:RI_exist}
For any two probability distributions \(\pistar, \piref\) with full support on \(\mathcal Y\), there exists an event \(X\) such that
\begin{gather*}
    \begin{cases}
    P(X \mid Y=y, \piref) = P(X \mid Y=y) & \text{(Conditional Independence)}\\
    \pistar(y) = P(Y=y \mid \piref, X) & \text{(Bayesian Update)}
    \end{cases}
\end{gather*}
\end{lemma}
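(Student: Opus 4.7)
The plan is to prove existence by an explicit construction of $X$, defining its conditional likelihood in terms of the normalized likelihood ratio $\pistar(y)/\piref(y)$, and then verifying the Bayesian update condition by direct application of Bayes' rule. The guiding intuition comes from Theorem~\ref{lemma:RI}: the DID is already known to have the form of a normalized ratio, so the latent event $X$ we seek should simply be the one whose likelihood is proportional to this ratio.

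First, I would let $M := \sup_{y \in \mathcal{Y}} \pistar(y)/\piref(y)$ and define the conditional probability $P(X \mid Y=y) := \pistar(y)/(M\,\piref(y))$, which lies in $[0,1]$ and is therefore a legitimate conditional probability. Because this quantity depends only on $y$ and not on $\piref$ as a conditioning event, the conditional-independence requirement $P(X \mid Y=y, \piref) = P(X \mid Y=y)$ is built into the construction. This suffices to pin down a joint model over $(Y, X, \piref)$ consistent with the hypotheses of the lemma.

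Second, I would apply Bayes' rule to compute the posterior $P(Y=y \mid \piref, X)$. By the conditional-independence just imposed, this equals $P(X \mid Y=y)\,\piref(y)$ divided by the normalizer $\sum_{y'} P(X \mid Y=y')\,\piref(y')$. Substituting the definition of $P(X \mid Y=y)$ causes the factors of $\piref(y)$ and the global constant $1/M$ to cancel in the numerator and denominator, leaving $\pistar(y)/\sum_{y'}\pistar(y') = \pistar(y)$, as required.

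The main obstacle is ensuring $P(X \mid Y=y) \in [0,1]$, which requires $M < \infty$, i.e., the pointwise ratio $\pistar/\piref$ must be bounded above. In the discrete, full-support setting adopted throughout the paper this is a mild regularity condition and is typically implicit; when it fails, one can recover the result by truncating $\mathcal{Y}$ along an exhausting sequence of finite subsets and taking a limit, or by rescaling against a bounded proxy envelope. A secondary technical point is to justify that imposing the conditional-independence constraint yields a consistent joint distribution over $(Y, X, \piref)$, but this follows immediately since a joint distribution is free to be specified through its marginals and conditionals.
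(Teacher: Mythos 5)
Your proposal is correct and follows essentially the same route as the paper: both construct $X$ by setting $P(X\mid Y=y)$ proportional to $\pistar(y)/\piref(y)$ (your choice of $1/M$ corresponds to the paper's free constant $P(X\mid\piref)$, which it only requires to be at most $1/\max_{y'}\pistar(y')/\piref(y')$), stipulate conditional independence by making the likelihood a function of $y$ alone, and verify the Bayesian update by the same cancellation in Bayes' rule. Your explicit remark on the boundedness of the ratio is the same implicit assumption the paper makes when it writes $\max_{y'}[\pistar(y')/\piref(y')]$.
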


\begin{proof}
Define \(X\) as a random variable that satisfies the conditional independence property \(P(X \mid Y=y, \piref) = P(X \mid Y=y)\). We need to show that we can define \(P(X \mid Y=y)\) such that the Bayesian update rule holds.

First, choose a base probability \(P(X \mid \piref)\) such that \(0 < P(X \mid \piref) < 1 / \max_{y'} \left[ \frac{\pistar(y')}{\piref(y')} \right]\). This ensures that the resulting conditional probability \(P(X\mid Y=y)\) defined below is valid (\ie, \(0\le P(X\mid Y=y) \le 1\)).
Now, define the likelihood of \(X\) given \(y\) as
\begin{gather*}
    P(X \mid Y=y) := \frac{P(X \mid \piref) \pistar(y)}{\piref(y)}.
\end{gather*}
Note that since \(\piref\) has full support, we have \(\piref(y) > 0\). We must check if \(P(X \mid Y=y) \le 1\). This holds because by our choice of \(P(X \mid \piref)\), we have\begin{gather*}
    P(X \mid Y=y) = P(X \mid \piref) \frac{\pistar(y)}{\piref(y)} \\
    \le P(X \mid \piref) \max_{y'} \left[ \frac{\pistar(y')}{\piref(y')} \right] < 1.
\end{gather*}

Now, using Bayes' rule we verify the Bayesian update condition:
\begin{flalign*}
P(Y=y \mid X, \piref) &= \frac{P(X \mid Y=y, \piref) P(Y=y \mid \piref)}{P(X \mid \piref)} \quad \text{(Bayes' Rule)} \\
&= \frac{P(X \mid Y=y) \piref(y)}{P(X \mid \piref)} \quad \text{(Conditional Independence)}\\
&= \frac{\left( \frac{P(X \mid \piref) \pistar(y)}{\piref(y)} \right) \piref(y)}{P(X \mid \piref)} \quad \text{(Definition of } P(X \mid Y=y)\text{)} \\
&= \frac{P(X \mid \piref) \pistar(y)}{P(X \mid \piref)} \\
&= \pistar(y).
\end{flalign*}
Thus, we have constructed an event \(X\) satisfying both conditions.
\end{proof}

This lemma confirms that it is always possible to conceptualize the transformation from \(\piref\) to \(\pistar\) as a Bayesian update based on some underlying information \(X\) that satisfies our conditional independence assumption.
We defined such \(X\) as the Differential Information that updates \(\piref\) to \(\pistar\). Now, we connect this concept directly to the Differential Information Distribution (DID). The following theorem demonstrates that the distribution over samples conditioned on this Differential Information \(X\) is precisely the normalized ratio distribution \(\DID{\pistar}{\piref}\).

\begin{theorem*}[Likelihood Ratio Representation of Differential Information Distribution]
For policies \(\pistar, \piref\) over \(\mathcal Y\) with full support, the Differential Information Distribution (DID) from \(\piref\) to \(\pistar\) is equivalent to the normalized ratio distribution:
\begin{gather*}
     \LK{X}= \frac{\pistar(y) / \piref(y)}{Z}\coloneqq \DID{\pistar}{\piref}(y),
\end{gather*}
where \(Z= \sum_{y' \in \mathcal Y} \frac{\pistar(y')}{\piref(y')}\) is the partition function.
\end{theorem*}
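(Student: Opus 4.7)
The plan is to run Bayes' rule twice: once on the Bayesian update condition to extract the likelihood \(P(X\mid Y=y)\) in terms of the two policies, and once with a uniform prior on \(Y\) to turn that likelihood into the posterior \(P(Y=y\mid X)\).

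\textbf{Step 1: Extract \(P(X\mid Y=y)\) from the Bayesian update condition.}
By Bayes' rule applied inside the probability model that carries \(\piref\),
\[
\pistar(y) \;=\; P(Y=y\mid \piref, X)
\;=\; \frac{P(X\mid Y=y,\piref)\, P(Y=y\mid \piref)}{P(X\mid \piref)}.
\]
The conditional independence assumption \(P(X\mid Y=y,\piref)=P(X\mid Y=y)\) and the identification \(P(Y=y\mid\piref)=\piref(y)\) collapse this to
\[
\pistar(y) \;=\; \frac{P(X\mid Y=y)\,\piref(y)}{P(X\mid \piref)},
\qquad\text{hence}\qquad
P(X\mid Y=y) \;=\; P(X\mid \piref)\,\frac{\pistar(y)}{\piref(y)}.
\]
Full support of \(\piref\) ensures the ratio is well-defined, and \(P(X\mid \piref)\) is a fixed positive constant that does not depend on \(y\).

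\textbf{Step 2: Convert the likelihood into \(P(Y=y\mid X)\).}
Following the argument in Appendix~\ref{app:evidence_dist}, take a uniform reference prior \(P(Y=y)=1/|\mathcal Y|\) on the sample space to define \(P(Y=y\mid X)\). By Bayes' rule,
\[
P(Y=y\mid X) \;=\; \frac{P(X\mid Y=y)\,P(Y=y)}{\sum_{y'\in\mathcal Y} P(X\mid Y=y')\,P(Y=y')}
\;=\; \frac{P(X\mid Y=y)}{\sum_{y'\in\mathcal Y} P(X\mid Y=y')},
\]
since the uniform \(1/|\mathcal Y|\) factor cancels. Substituting the expression from Step~1, the common positive constant \(P(X\mid\piref)\) also cancels between numerator and denominator, yielding
\[
P(Y=y\mid X) \;=\; \frac{\pistar(y)/\piref(y)}{\sum_{y'\in\mathcal Y}\pistar(y')/\piref(y')} \;=\; \frac{\pistar(y)/\piref(y)}{Z} \;=\; \DID{\pistar}{\piref}(y),
\]
which is exactly the claimed normalized ratio distribution.

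\textbf{Main obstacle.} There is no hard computational step; the only subtlety is conceptual, namely justifying that \(P(Y=y\mid X)\) is well-defined independently of the prior one conditions on. This is precisely the role of the conditional independence postulate: it guarantees that the likelihood \(P(X\mid Y=y)\) is an intrinsic property of \(y\), so Step~2 can use a uniform \(P(Y)\) without contradicting the model in Step~1. One should also verify that the normalizer \(Z\) is finite (immediate on finite \(\mathcal Y\) with full support; for countable \(\mathcal Y\) this requires summability of \(\pistar/\piref\), which is implicit in the existence argument of Lemma~\ref{lem:RI_exist}).
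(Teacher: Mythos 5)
Your proposal is correct and follows essentially the same route as the paper's proof in Appendix~\ref{proof:lem_DID}: apply Bayes' rule with the conditional independence postulate to identify \(P(X\mid Y=y)\propto \pistar(y)/\piref(y)\), then use the uniform prior over \(\mathcal Y\) to convert that likelihood into \(\LK X\), with the constant \(P(X\mid\piref)\) canceling in the normalization. Your closing remarks on well-definedness and finiteness of \(Z\) are sensible additions but do not change the argument.
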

\begin{proof}\label{proof:lem_DID}
    Let \(X\) be the event that satisfies Lemma~\ref{lem:RI_exist}. The Bayes' Theorem states that\begin{flalign*}
        \pistar(y)&=\LK{\piref,X}\\
        &=\frac{P(X\mid Y=y,\piref)P(Y=y\mid \piref)}{P(X\mid\piref)}\\&=\frac{P(X\mid Y=y)P(Y=y\mid \piref)}{P(X\mid\piref)}.
    \end{flalign*}
    We thus have \(\frac{\pistar(y)}{\piref(y)}=\frac{P(X\mid Y=y)}{P(X\mid\piref)}\).
    Now, consider the following relationship:\begin{flalign*}
        \frac{\pistar(y)}{\piref(y)Z}&=\frac{\pistar(y)/\piref(y)}{\sum_{y'\in\mathcal Y}\pistar(y')/\piref(y')}\\&=\frac{P(X\mid Y=y)/P(X\mid\piref)}{\sum_{y'\in\mathcal Y}P(X\mid Y=y')/P(X\mid\piref)}\\
        &=\frac{P(X\mid Y=y)}{\sum_{y'\in\mathcal Y}P(X\mid Y=y')}.
    \end{flalign*}
    Since \(P(Y=y)\) is a uniform distribution, we arrive at the relationship:\begin{flalign*}
        \LK{X}&=\frac{P(X\mid Y=y)P(Y=y)}{\sum_{y'\in\mathcal Y}P(X\mid Y=y')P(Y=y')}\\
        &=\frac{P(X\mid Y=y)}{\sum_{y'\in\mathcal Y}P(X\mid Y=y')}\\
        &=\frac{\pistar(y)}{\piref(y)Z}\\
        &=\DID{\pistar}{\piref}(y).
    \end{flalign*}
\end{proof}
\textbf{Therefore, sampling a sentence from the normalized ratio distribution \(\DID\pistar\piref\) is equivalent to sampling a sentence that carries the Differential Information required to update \(\piref\) into \(\pistar\) via Bayes' rule.}

\subsection{Uncertainty of Differential Information}\label{app:did_entropy_explained}
The normalized ratio form of the DID \(\DID\pistar\piref\) naturally admits an information-theoretic characterization. In particular, we can measure the uncertainty of the Differential Information by the Shannon entropy:\[
H(\DID\pistar\piref)=-\sum_y\DID\pistar\piref(y)\log\DID\pistar\piref(y).
\]
This entropy quantifies how broadly the Bayesian evidence required to update \(\piref\) into \(\pistar\) is distributed across the sample space \(\mathcal Y\).

A low-entropy DID \(H(\DID\pistar\piref)\) describes a deterministic Bayesian evidence that drives the update from \(\piref\) to \(\pistar\). Intuitively, if only a few samples hold that Bayesian evidence, then the DID \(\DID{\pistar}{\piref}\) will be highly concentrated on a few samples that have a large enough value of \(P(X\mid Y=y)\) (\ie, the probability of \(y\) having \(X\), Appendix~\ref{app:evidence_dist}). Therefore, the policy update from \(\piref\) to \(\pistar\) can effectively be explained by a few characteristic samples. This corresponds to information that is specific and localized, such as factual knowledge (\eg, the birthplace of George Washington) where only a narrow subset of \(\mathcal Y\) strongly supports the relevant property.

Conversely, a high-entropy DID \(H(\DID\pistar\piref)\) describes an uncertain Bayesian evidence that drives the update from \(\piref\) to \(\pistar\). If the evidence is spread across many possible samples, then the DID \(\DID{\pistar}{\piref}\) will also be more spread-out and flatter. No single sample dominantly holds a high enough value of \(P(X\mid Y=y)\), and the policy update requires a Bayesian evidence from a wide variety of samples. This corresponds to information that is general and broadly distributed, such as open-ended instruction-following (\eg, writing a story about dragons) where many different completions may plausibly express the property.

\textbf{Therefore, the DID entropy provides a principled measure of how uncertain or spread-out the Differential Information is across the sample space.}

\subsection{Estimation of DID entropy}\label{app:entropy}
To measure the Shannon entropy of the Differential Information Distribution (DID):\begin{flalign*}
    H(\DID\pi\piref)&=-\sum_y\DID\pi\piref(y)\log\DID\pi\piref(y)\\
    &=-\mathbb E_{y\sim\DID\pi\piref}\left[\log\frac{\pi(y)}{\piref(y)Z}\right]\\
    &=\log Z-\mathbb E_{y \sim\DID\pi\piref}[\log\frac{\pi(y)}{\piref(y)}],
\end{flalign*}
we can first estimate the log-partition function \(\log Z=\log \sum_{y\in\mathcal Y}\frac{\pi(y)}{\piref(y)}=\log\mathbb E_{y\sim\pi}[\frac1{\piref(y)}]\), and then estimate the remainder term via self-normalized importance sampling.
For Tables~\ref{tab:overall_exp} and \ref{tab:extended_overall_exp}, we estimate the two terms in the following steps:
\begin{itemize}
    \item To estimate \(\log Z\), we sample \(K=32\) completions from \(\pi\) and use the log-sum-exp trick to directly estimate \(\log Z \approx \log\sum_{i=1}^K\exp(-\log\pi_\text{ref}(y_i)) - \log K\).
    \item To estimate \(\mathbb E_{y \sim\DID\pi\piref}\lbrack\log\frac{\pi(y)}{\piref(y)}\rbrack\), we draw 32 samples from \(\pi\) and re-weight them by \(1/\piref(y)\), which is proportional to the importance weight \(\DID\pi\piref(y)/\pi(y)\).
\end{itemize}

Note that naive auto-regressive sampling from the token-level ratio distribution is ineffective due to out-of-distribution prefixes, leading to degenerate outputs. While this method is sound for tractable output spaces, it does not scale to LLMs. This is the very reason behind our importance-sampling based approach for estimating the DID which is proportional to the sequence-level probability ratio.

\section{Optimal dataset for DPO}\label{app:opt_data}
A central design choice when building DPO datasets is \textit{how} to sample the chosen and rejected responses. Prior work has advocated opposing strategies: \emph{strong contrasts} that maximize quality gaps \citep{meng2024simpo,xu2024magpie} versus \emph{fine-grained distinctions} with minimal differences \citep{lin2025criticaltokensmattertokenlevel, tunstall2023zephyr, guo2024imitationleveragingfinegrainedquality}. We resolve this tension by showing that what matters is not \emph{absolute} gap size but the \emph{Differential Information} encoded by the pair \((\yw,\yl)\). In particular, the optimal rejection distribution should make the dataset's Differential Information distribution reflect the Differential Information between the reference and target policies. Using Corollary~\ref{cor:pw-did} we obtain the following closed-form characterization:
\begin{theorem}[Optimal Distribution of Chosen and Rejected Responses]\label{thm:opt_rej}
    Given a preference dataset \(\mathcal D = \{(\yw,\yl)\mid\yw\sim\piw,\yl\sim\pil\}\), if \(\piref=\piw\), then preference optimization on \(\mathcal D\) using the log-ratio reward \(r=\beta\log\pi/\piref\) converges to \(\pistar\) if and only if the rejected sample distribution \(\pil\) satisfies
    \begin{gather*}
    \pil(y)\propto\piref(y)\left(\frac{\piref(y)}{\pistar(y)}\right)^\beta,\quad\forall y\in\mathcal Y.
    \end{gather*}
    Likewise, if \(\piref=\pil\), then optimizing \(\mathcal D\) using the log-ratio reward converges to \(\pistar\) if and only if the chosen sample distribution \(\piw\) satisfies
    \begin{gather*}
    \piw(y)\propto\piref(y)\left(\frac{\pistar(y)}{\piref(y)}\right)^\beta,\quad\forall y\in\mathcal Y.
    \end{gather*}
\end{theorem}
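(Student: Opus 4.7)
The plan is to invoke Corollary~\ref{cor:pw-did} (DID Power-Law of DPO) as the main engine, specialized to the two cases where the reference policy coincides with one end of the sampling pair. In each case the power-law relation between DIDs collapses---after substituting the explicit normalized-ratio form $\DID{\pi}{\pi'}(y)\propto\pi(y)/\pi'(y)$ from Theorem~\ref{lemma:RI}---into a pointwise algebraic identity that pins down the free sampling distribution ($\pil$ or $\piw$) in terms of $\piref$ and $\pistar$. The biconditional splits cleanly: the ``only if'' direction is exactly Corollary~\ref{cor:pw-did}, while the ``if'' direction is obtained by running that algebra backwards to verify the DID power-law from the prescribed form of $\pil$ (resp.\ $\piw$), then invoking Theorem~\ref{def:DIDD} to conclude that the preference distribution encodes the desired Differential Information, and finally Theorem~\ref{th:opt} to conclude that DPO with the log-ratio reward recovers $\pistar$.

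For the case $\piref=\piw$, Corollary~\ref{cor:pw-did} gives $\DID{\piref}{\pil}(y)\propto\DID{\pistar}{\piref}(y)^\beta$. Substituting the likelihood-ratio representations of each DID and solving for $\pil(y)$ yields
\[
\pil(y)\;\propto\;\piref(y)\left(\frac{\piref(y)}{\pistar(y)}\right)^\beta,
\]
as claimed. The case $\piref=\pil$ is entirely symmetric: Corollary~\ref{cor:pw-did} reduces to $\DID{\piw}{\piref}(y)\propto\DID{\pistar}{\piref}(y)^\beta$, hence $\piw(y)/\piref(y)\propto(\pistar(y)/\piref(y))^\beta$, giving the stated formula.

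The main subtlety will be bookkeeping of normalization: each DID carries its own partition function, so the ``$\propto$'' relations are equalities only up to one overall positive multiplicative constant, which must be absorbed consistently when isolating $\pil$ or $\piw$. One must also verify that the resulting expression is a valid full-support probability distribution on $\mathcal Y$; on finite $\mathcal Y$ this is immediate provided $\sum_{y}\piref(y)(\piref(y)/\pistar(y))^\beta<\infty$ (respectively the analogous sum for $\piw$), and in particular both reduce to $\piref$ when $\pistar=\piref$, consistent with the assumption $\piref\ne\pistar$ in Section~\ref{sec:pref_cond} being used only to ensure a non-trivial preference signal. Apart from this housekeeping, no delicate step remains---all heavy machinery has already been done by Corollary~\ref{cor:pw-did} together with Theorems~\ref{def:DIDD} and~\ref{th:opt}.
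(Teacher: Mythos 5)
Your proposal matches the paper's own proof: Appendix~\ref{app:opt_data} likewise derives the result directly from Corollary~\ref{cor:pw-did}, reading it in both directions for the biconditional, and then rewrites the power-law $\DID{\piw}{\pil}(y)\propto\DID{\pistar}{\piref}(y)^\beta$ pointwise via the likelihood-ratio form to isolate $\pil$ (resp.\ $\piw$). Your extra bookkeeping on normalization constants and your explicit appeal to Theorems~\ref{def:DIDD} and~\ref{th:opt} for the ``if'' direction are just a more detailed spelling-out of the same argument.
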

Intuitively, Theorem~\ref{thm:opt_rej} states that the correct construction of preference data depends on matching the dataset's DID to the log-ratio reward used in DPO. Thus both \q{strong} and \q{fine-grained} constructions can be optimal, given that the DID from \(\pil\) to \(\piw\) aligns with the DID from \(\piref\) to \(\pistar\), up to the exponent \(\beta>0\).
\begin{proof}
    This directly follows from Corollary~\ref{cor:pw-did}. For any general preference dataset \(\mathcal D = \{(\yw,\yl)\mid\yw\sim\piw,\yl\sim\pil\}\), the Bradley-Terry preference distribution \(p^*\) must exactly follow \(\DID\piw\pil\). Corollary~\ref{cor:pw-did} states that a power-law DID structure involving the converged policy \(\pistar\) must hold:\begin{gather*}
        \DID\piw\pil(y)\propto\DID\pistar\piref(y)^\beta,\quad\forall y\in\mathcal Y.
    \end{gather*}
    When \(\piref=\piw\), for all \(y\in\mathcal Y\), we have\begin{gather*}
        \DID\piref\pil(y)\propto\DID\pistar\piref(y)^\beta\iff\pil(y)\propto\piref(y)\left(\frac{\piref(y)}{\pistar(y)}\right)^\beta.
    \end{gather*}
    Conversely, when \(\piref=\pil\), for all \(y\in\mathcal Y\), we have\begin{gather*}
        \DID\piw\piref(y)\propto\DID\pistar\piref(y)^\beta\iff\piw(y)\propto\piref(y)\left(\frac{\pistar(y)}{\piref(y)}\right)^\beta.
    \end{gather*}
\end{proof}

\begin{figure}[h]
    \centering
    \includegraphics[width=\textwidth]{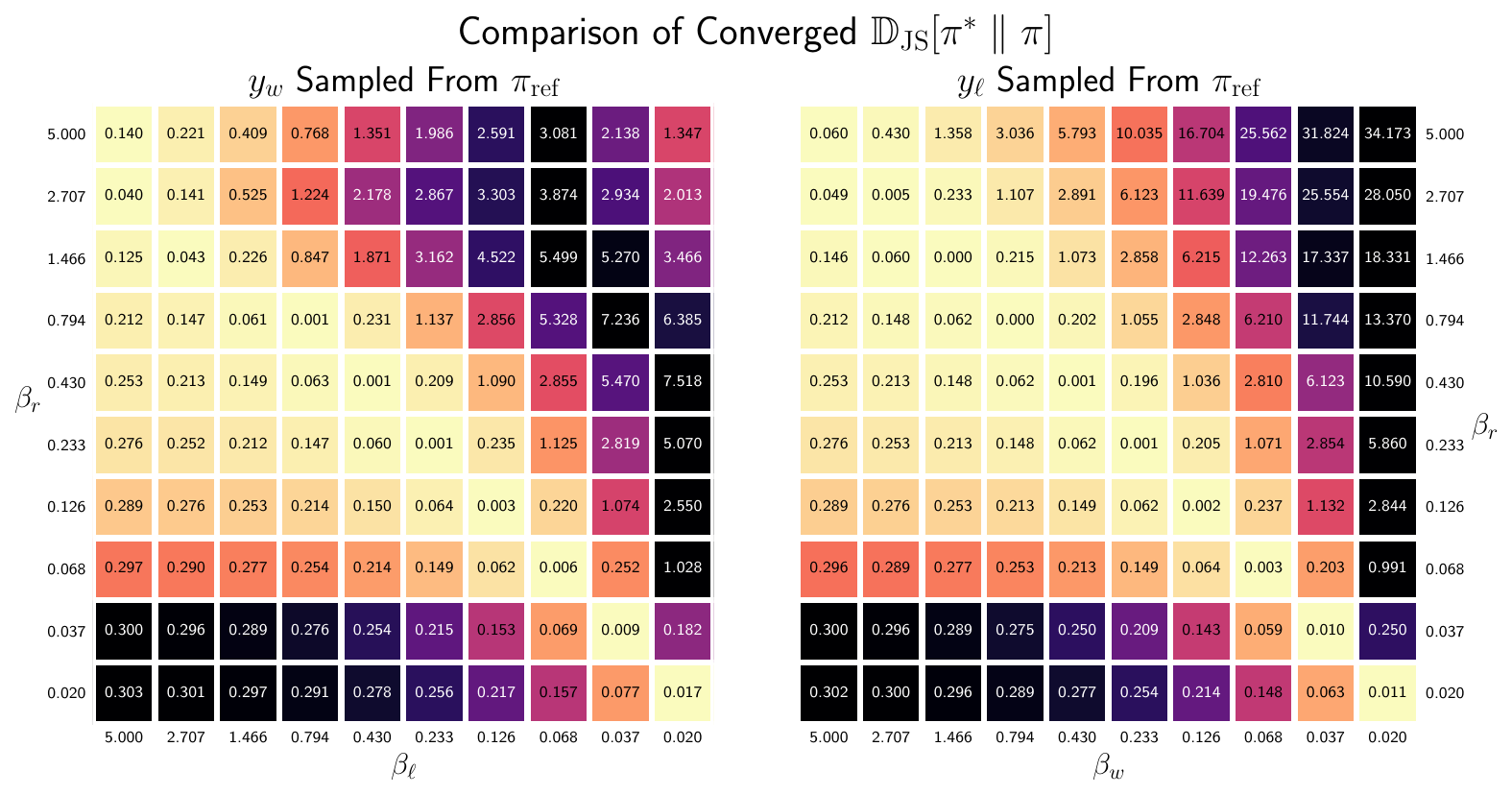}
    \caption{\label{fig:opt_rej} Convergence quality (Jensen-Shannon divergence) between the target \(\pistar\) and the converged policy \(\pi\) under varying dataset exponents \(\beta_\ell\) and \(\beta_w\) (controlling \(\pil\) and \(\piw\) respectively), and reward scale \(\beta_r\). Consistent with Theorem~\ref{thm:opt_rej}, the best convergence occurs near the diagonal \(\beta_\ell=\beta_r\) and \(\beta_w=\beta_r\).}
\end{figure}

We can validate Theorem~\ref{thm:opt_rej} using the EBM experiments described in Section~\ref{sec:experiments}.
To test Theorem~\ref{thm:opt_rej} we disentangle the exponent used to construct rejected samples from the scaling factor used in the DPO reward. We sample chosen responses \(\yw\) from \(\piref\) and draw rejected responses \(\yl\) from \(\pil(y)\propto\piref(y)\left(\frac{\piref(y)}{\pistar(y)}\right)^{\beta_\ell}\), while training with reward \(r(y)=\beta_r\log\frac{\pi(y)}{\piref(y)}\). We also test the setup where \(\yl\) comes from \(\piref\) and \(\yw\) is drawn from \(\piw(y)\propto\piref(y)\left(\frac{\pistar(y)}{\piref(y)}\right)^{\beta_w}\). Sweeping \(\beta\in[0.02,5]\), we measure \(\JS[\pistar\Vert\pi]\) for the converged policy. Figure~\ref{fig:opt_rej} shows the minimum divergence concentrated along \(\beta_\ell=\beta_r\) and \(\beta_w=\beta_r\), showing that the optimal DPO dataset requires the DID from \(\pil\) to \(\piw\) to align with that from \(\piref\) to \(\pistar\), up to a positive exponent \(\beta\).
\section{Log-margin dynamics of DPO}\label{app:dyn}
Based on the power-law DID relationship in DPO (Corollary~\ref{cor:pw-did}), we can prove how a policy ordering \(\pistar\succ\piref\succ\pil\) must exist based on increasing log-margins:
\begin{theorem}[Log-Margin Ordered Policies of DPO]\label{thm:LM}
Under the same setup as Theorem~\ref{thm:opt_rej}, if \(\piref=\piw\), then the following ordering of policies based on increasing log-margins must hold:\begin{flalign*}
    \mathbb E_{\yw\sim\piref}\left[\log\pistar(\yw)\right]-\mathbb E_{\ylfull}\left[\log\pistar(\yl)\right]&>\mathbb E_{\yw\sim\piref}\left[\log\piref(\yw)\right]-\mathbb E_{\ylfull}\left[\log\piref(\yl)\right]\\&>\mathbb E_{\yw\sim\piref}\left[\log\pil(\yw)\right]-\mathbb E_{\ylfull}\left[\log\pil(\yl)\right].
\end{flalign*}
\end{theorem}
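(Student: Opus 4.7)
}
The plan is to reduce both inequalities to the strictly positive quantity
\(\KL{\piref}{\pil}+\KL{\pil}{\piref}\), using the power-law DID relation at convergence (Corollary~\ref{cor:pw-did}) for the first inequality and a direct expectation rearrangement for the second.

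For the middle inequality, I would move everything to one side so that the claim becomes
\[
\mathbb E_{\yw\sim\piref}\!\left[\log\tfrac{\piref(\yw)}{\pil(\yw)}\right]
\;>\;
\mathbb E_{\yl\sim\pil}\!\left[\log\tfrac{\piref(\yl)}{\pil(\yl)}\right],
\]
which is \(\KL{\piref}{\pil}>-\KL{\pil}{\piref}\), i.e., \(\KL{\piref}{\pil}+\KL{\pil}{\piref}>0\). Since the assumption \(\piw\neq\pil\) in Section~\ref{sec:prelim} combined with \(\piref=\piw\) forces \(\piref\neq\pil\), Gibbs' inequality makes each KL term strictly positive, closing this step.

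For the first inequality, I would invoke Corollary~\ref{cor:pw-did} with \(\piref=\piw\) to obtain
\(\DID{\piref}{\pil}(y)\propto\DID{\pistar}{\piref}(y)^{\beta}\) for all \(y\in\mathcal Y\). Taking logs of the normalized-ratio form from Theorem~\ref{lemma:RI} and absorbing partition functions into a single constant \(C\) gives the pointwise identity
\[
\log\pistar(y)-\log\piref(y)\;=\;\tfrac{1}{\beta}\bigl(\log\piref(y)-\log\pil(y)\bigr)+C.
\]
Substituting this into \(\mathbb E_{\yw\sim\piref}[\log\pistar(\yw)]-\mathbb E_{\yl\sim\pil}[\log\pistar(\yl)]-\bigl(\mathbb E_{\yw\sim\piref}[\log\piref(\yw)]-\mathbb E_{\yl\sim\pil}[\log\piref(\yl)]\bigr)\) makes the constant \(C\) cancel across the two expectations, and the remaining terms collapse exactly to \(\tfrac{1}{\beta}\bigl(\KL{\piref}{\pil}+\KL{\pil}{\piref}\bigr)\), which is strictly positive by the same Gibbs argument together with \(\beta>0\).

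The only step requiring any care is the bookkeeping around the partition-function constant: I need to verify that the sample-wise additive constant \(C\) indeed cancels between the two expectations (which it does, since both expectations integrate the constant against a probability measure and then subtract). Beyond that, the argument is essentially algebraic rearrangement plus two applications of Gibbs' inequality, so I do not anticipate a substantive obstacle.
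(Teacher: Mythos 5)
Your proposal is correct and follows essentially the same route as the paper's proof: both reduce the first inequality, via the power-law DID relation at convergence (equivalently $\pistar(y)\propto\piref(y)\,\DID{\piref}{\pil}(y)^{1/\beta}$), to $\tfrac1\beta\left(\KL{\piref}{\pil}+\KL{\pil}{\piref}\right)>0$, and both obtain the second inequality from $\KL{\piref}{\pil}>0>-\KL{\pil}{\piref}$ with strictness from $\piref\neq\pil$. Your extra bookkeeping on the cancellation of the partition-function constant is fine and matches what the paper implicitly uses.
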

\begin{proof}
Since we assume that \(\piref=\piw\), we have \(\pistar(y)\propto\piref(y)\cdot (\DID\piref\pil(y))^\frac1\beta\).
Therefore, it follows that\begin{gather*}
    \mathbb E_{\yw\sim\piref}\left[\log\pistar(\yw)\right] - \mathbb E_{\ylfull}\left[\log\pistar(\yl)\right]
    -\Big(\mathbb E_{\yw\sim\piref}\left[\log\piref(\yw)\right] - \mathbb E_{\ylfull}\left[\log\piref(\yl)\right]\Big)\\
    =\frac1\beta\mathbb E_{\yw\sim\piref}\left[\log\piref(\yw)-\log\pil(\yw)\right]-\frac1\beta\mathbb E_{\ylfull}\left[\log\piref(\yl)-\log\pil(\yl)\right]\\
    =\frac1\beta\KL\piref\pil+\frac1\beta\KL\pil\piref>0.
\end{gather*}
Thus we have proven the top inequality (1).
Next, the bottom inequality (2) can be shown by the following fact:\begin{gather*}
    \KL\piref\pil>0>-\KL\pil\piref\\\Rightarrow\\\mathbb E_{\yw\sim\piref}\left[\log\piref(\yw)-\log\pil(\yw)\right]>\mathbb E_{\ylfull}\left[\log\piref(\yl)-\log\pil(\yl)\right].
\end{gather*}
\end{proof}

This directly yields an information-theoretic triangle inequality within the DPO framework.
\begin{corollary}[Information-Theoretic Triangle Inequality of DPO]\label{cor:tri}
Under the conditions of Theorem~\ref{thm:opt_rej}, the following inequality holds:
\begin{gather*}
    \KL{\piref}{\pil}+\KL{\pil}{\pistar}>\KL{\piref}{\pistar}.
\end{gather*}
\end{corollary}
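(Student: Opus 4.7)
The plan is to derive the inequality directly from Theorem~\ref{thm:LM}, which already establishes the required log-margin inequality in the $\piw=\piref$ setup assumed throughout. The target triangle inequality is essentially a repackaging of the first inequality of Theorem~\ref{thm:LM} into a single KL-divergence statement, followed by a monotonicity observation that exploits the positivity of KL.

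First I would take the first inequality of Theorem~\ref{thm:LM}, namely $\mathbb E_{\yw\sim\piref}[\log\pistar(\yw)]-\mathbb E_{\ylfull}[\log\pistar(\yl)] > \mathbb E_{\yw\sim\piref}[\log\piref(\yw)]-\mathbb E_{\ylfull}[\log\piref(\yl)]$, and rearrange it by grouping terms by sampling distribution. The $\piref$-expectation gives $\mathbb E_{\yw\sim\piref}[\log\pistar(\yw)-\log\piref(\yw)]=-\KL{\piref}{\pistar}$. For the $\pil$-expectation I would insert $\pm\log\pil(y)$ inside the bracket to split $\mathbb E_{\ylfull}[\log\pistar(\yl)-\log\piref(\yl)]$ into $-\KL{\pil}{\pistar}+\KL{\pil}{\piref}$. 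Combining these yields the intermediate form $\KL{\pil}{\pistar} > \KL{\pil}{\piref}+\KL{\piref}{\pistar}$.

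Second, I would conclude the desired bound by monotonicity. Since $\piw=\piref$ and $\piw\neq\pil$ by the standing assumption, we have $\pil\neq\piref$, so $\KL{\pil}{\piref}>0$, and the intermediate inequality immediately implies $\KL{\pil}{\pistar}>\KL{\piref}{\pistar}$. Adding the non-negative quantity $\KL{\piref}{\pil}$ to the left-hand side gives $\KL{\piref}{\pil}+\KL{\pil}{\pistar}>\KL{\piref}{\pistar}$, which is the claim.

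The main obstacle is only careful bookkeeping: inserting $\pm\log\pil$ to convert the mixed expectations into KL divergences while tracking the asymmetry of KL (which is why the intermediate inequality $\KL{\pil}{\pistar}>\KL{\pil}{\piref}+\KL{\piref}{\pistar}$ and the stated corollary have the endpoints in different orders). An alternative, more informative route would be to substitute the power-law identity $\log\pistar-\log\piref=\tfrac{1}{\beta}(\log\piref-\log\pil)+C$ implied by Corollary~\ref{cor:pw-did} directly into each of the three KL terms, yielding the sharper equality $\KL{\piref}{\pil}+\KL{\pil}{\pistar}-\KL{\piref}{\pistar}=(1+\tfrac{1}{\beta})(\KL{\piref}{\pil}+\KL{\pil}{\piref})$. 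This quantifies the slack in the triangle inequality exactly, but is more algebra than needed for the corollary as stated.
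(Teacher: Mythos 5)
Your proposal is correct, and it reaches the corollary by a slightly different route through the same key lemma. The paper's proof takes the \emph{outer} comparison in Theorem~\ref{thm:LM} (the log-margin of $\pistar$ versus that of $\pil$), which after grouping terms by sampling distribution rearranges in one step to exactly the stated inequality $\KL{\piref}{\pil}+\KL{\pil}{\pistar}>\KL{\piref}{\pistar}$. You instead use the \emph{first} comparison (the log-margin of $\pistar$ versus that of $\piref$); your bookkeeping is right, and it yields the intermediate inequality $\KL{\pil}{\pistar}>\KL{\pil}{\piref}+\KL{\piref}{\pistar}$, which is in fact a strictly stronger statement than the corollary, from which you recover the claim by discarding $\KL{\pil}{\piref}\ge 0$ and adding $\KL{\piref}{\pil}\ge 0$ (the strictness you get for free from the intermediate strict inequality, so the positivity argument via $\pil\neq\piref$ is not even needed). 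What your route buys is this stronger triangle-type bound and, via the power-law identity $\log\pistar-\log\piref=\tfrac1\beta(\log\piref-\log\pil)-\log Z$, the exact slack $\KL{\piref}{\pil}+\KL{\pil}{\pistar}-\KL{\piref}{\pistar}=\bigl(1+\tfrac1\beta\bigr)\bigl(\KL{\piref}{\pil}+\KL{\pil}{\piref}\bigr)$, which I checked and is correct; what the paper's route buys is a one-line derivation of precisely the stated inequality with no weakening step.
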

\begin{proof}
From Theorem~\ref{thm:LM}, it directly follows that\begin{gather*}
    \mathbb E_{\yw\sim\piref}\left[\log\pistar(\yw)\right]-\mathbb E_{\ylfull}\left[\log\pistar(\yl)\right]>\mathbb E_{\yw\sim\piref}\left[\log\pil(\yw)\right]-\mathbb E_{\ylfull}\left[\log\pil(\yl)\right]\\\iff\\
    \KL\piref\pil-\KL\piref\pistar>-\KL\pil\pistar\\\iff\\
    \KL\piref\pil+\KL\pil\pistar>\KL\piref\pistar.
\end{gather*}
\end{proof}

Although the KL-divergence does not generally satisfy a triangle inequality, Corollary~\ref{cor:tri} shows that DPO enforces this specific triangle inequality for the trio \((\piref,\pil,\pistar)\).

Corollary~\ref{cor:tri} establishes a fundamental lower bound in the information \q{cost} (KL-divergence) of learning \(\pistar\) by contrasting \(\piref\) against \(\pil\). It shows that the cost of updating \(\pistar\) back into \(\piref\) via \(\pil\) must be larger than that of directly updating \(\pistar\) into \(\piref\).

\section{Log-likelihood displacement and DID entropy}\label{app:arg_unc}
In this section, we provide a qualitative argument regarding the relationship between log-likelihood displacement (LLD) and DID entropy discussed in Section~\ref{sec:exp_unc}.
In particular, we present the following informal claim.
\begin{claim}\label{prop:unc}
Consider a policy \(\pi\) derived from \(\piref\) such that \(\KL{\pi}{\piref}\) is bounded. Assume that for any \( y'\in\{y\in\mathcal Y\mid \piref(y)\approx0\}\), we also have \(\pi(y')\approx0\approx \DID\pi\piref(y')\).
\begin{itemize}
    \item If \(\pi\) is obtained by \textbf{reinforcing} \(\piref\) (concentrating probability mass on modes of \(\piref\)), we expect the DID to be deterministic, corresponding to learning a lower-entropy Differential Information Distribution: \(H(\DID{\pi}{\piref})<H(\piref)\).
    \item If \(\pi\) is obtained by \textbf{smoothing} \(\piref\) (spreading probability mass more broadly), we expect the DID to be stochastic, corresponding to learning a higher-entropy Differential Information Distribution: \(H(\DID{\pi}{\piref})>H(\piref)\).
\end{itemize}
\end{claim}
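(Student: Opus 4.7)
The plan is to reduce the claim to the monotonicity of Shannon entropy under sharpening or flattening a distribution, by interpreting \(\DID{\pi}{\piref}\) as a Boltzmann-type tilt of the reference by the log-ratio. Concretely, I would start from the closed form
\[
\DID{\pi}{\piref}(y) \;=\; \frac{1}{Z}\exp\bigl(\log\pi(y) - \log\piref(y)\bigr),
\]
and observe that the qualitative behavior of \(H(\DID{\pi}{\piref})\) is governed by how the tilt function \(\log(\pi/\piref)\) correlates with \(\log\piref\) on \(\piref\)'s effective support. The support assumption in the claim ensures that the effective supports of \(\DID{\pi}{\piref}\) and \(\piref\) coincide, so the two entropies can be meaningfully compared on the same set.

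First I would formalize \emph{reinforcing} as the condition that \(\log\bigl(\pi(y)/\piref(y)\bigr)\) is (approximately) monotone increasing in \(\log\piref(y)\), \ie, the log-ratio is positively correlated with \(\log\piref\) under \(\piref\); \emph{smoothing} is the analogous statement with the opposite sign. Under reinforcing, the tilt concentrates DID mass on the high-probability modes of \(\piref\), producing a distribution that can be interpreted as a sharpened version of \(\piref\). Using the standard fact that power-scaling \(q_\beta(y)\propto\piref(y)^\beta\) has entropy monotonically decreasing in \(\beta\) for \(\beta \ge 1\) (and more generally that monotone-likelihood-ratio tilts preserve entropy ordering), I would conclude \(H(\DID{\pi}{\piref}) < H(\piref)\). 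The smoothing case is symmetric: a negative correlation pushes DID mass toward the low-density tails of \(\piref\), yielding a flattening tilt and hence \(H(\DID{\pi}{\piref}) > H(\piref)\).

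The main obstacle is that \q{reinforcing} and \q{smoothing} are intrinsically qualitative notions, so a fully rigorous statement requires committing to a specific parametric family. I would anchor the argument in the temperature-scaling family \(\pi_\tau(y) \propto \piref(y)^{1/\tau}\), for which
\[
\DID{\pi_\tau}{\piref}(y) \;\propto\; \piref(y)^{(1-\tau)/\tau},
\]
and then verify by direct computation that strong reinforcement (\(\tau\) sufficiently small) yields \(H(\DID{\pi}{\piref}) < H(\piref)\), while strong smoothing (\(\tau\) sufficiently large) yields the opposite, with a boundary case at \(\tau=1/2\) (where the DID coincides with \(\piref\) up to normalization). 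Outside this family the argument is necessarily heuristic, which is consistent with the informal status of the claim; the KL boundedness and the support assumption serve to rule out degenerate situations in which \(\DID{\pi}{\piref}\) would pick up spurious mass on regions outside \(\piref\)'s effective support, which would otherwise invalidate the entropy comparison.
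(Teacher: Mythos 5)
Your route is genuinely different from the paper's, and as a heuristic for this explicitly informal claim it is reasonable, but the comparison is instructive. The paper never commits to a parametric family. For the reinforcing case it argues by mass concentration: under the support assumption, \(\DID\pi\piref\) must place its mass on the comparatively few samples with \(\pi(y)>\piref(y)\) inside the effective support of \(\piref\), so fewer samples carry appreciable DID mass than carry appreciable \(\piref\) mass, whence \(H(\DID\pi\piref)<H(\piref)\). For the smoothing case it uses the exact identity \(H(\DID\pi\piref)-H(\piref)=\KL{\DID\pi\piref}{\pi}-\KL{\DID\pi\piref}{\piref}+\KL{\piref}{\DID\pi\piref}-\KL{\piref}{\pi}\), argues that boundedness of \(\KL\piref\pi\) lets the last two terms dominate, and that smoothing makes \(\KL{\piref}{\DID\pi\piref}>\KL{\piref}{\pi}\). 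Your escort/temperature analysis---\(\pi_\tau\propto\piref^{1/\tau}\), hence \(\DID{\pi_\tau}{\piref}\propto\piref^{(1-\tau)/\tau}\), combined with the fact that the entropy of \(\piref^{s}/Z(s)\) is decreasing in \(s\) on \(s>0\) and increasing on \(s<0\)---buys an exactly computable one-parameter slice, including the correct observation that the DID reproduces \(\piref\) at \(\tau=1/2\); the paper's argument buys family-free generality at the price of a counting intuition and a dominance heuristic it does not justify.

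Two caveats on your family computation are worth making explicit, because they sharpen and slightly undercut the claim. First, the crossover at \(\tau=1/2\) means mild reinforcement (\(1/2<\tau<1\)) gives \(\DID{\pi_\tau}{\piref}\propto\piref^{s}\) with \(0<s<1\), a \emph{flattened} copy of \(\piref\) with higher entropy, even though all stated assumptions of the claim hold; so the first bullet really requires sufficiently strong reinforcement, a restriction the paper's counting heuristic silently presumes. Second, your \q{strong smoothing} regime is the wrong anchor: what robustly raises entropy is moderate smoothing (\(\tau\) near \(1\), where the DID is near uniform and its entropy is near \(\log|\mathcal Y|>H(\piref)\)), whereas as \(\tau\to\infty\) the DID tends to \(\propto 1/\piref\), whose entropy can fall below \(H(\piref)\). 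For instance, with \(\piref=(0.4,0.4,0.2)\) on three outcomes and \(\pi\) uniform, one gets \(H(\DID\pi\piref)\approx 1.04 < H(\piref)\approx 1.05\) nats while \(\KL\pi\piref\) is tiny and the near-zero-support condition is vacuous. Neither your argument nor the paper's survives such cases, which is consistent with the claim's informal status, but your write-up should not suggest that taking the smoothing \q{sufficiently strong} is what makes the second bullet safe.
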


Our assumptions is as follows:\begin{enumerate}
    \item For any \( y'\in\{y\in\mathcal Y\mid \piref(y)\approx0\}\), we have \(\pi(y')\approx0\approx \DID\pi\piref(y')\).
    \item There is some reasonable upper-bound \(c>0\) such that \(\KL\piref\pi<c\).
\end{enumerate}
The first condition assumes that \(\piref\) is \q{reasonably} trained, in that for \q{meaningless} \(y'\) such that \(\piref(y')\approx 0\), we also have \(\pi(y')\approx0\approx \DID\pi\piref(y')\).
The second condition states that \(\piref\) and \(\pi\) should not differ significantly, such that \(\KL\piref\pi\) is bounded.

We now consider each cases of policy reinforcing and smoothing, and infer the relationship between \(H(\DID\pi\piref)\) and \(H(\piref)\).

\begin{figure}[h]
    \centering
    \includegraphics[width=\textwidth]{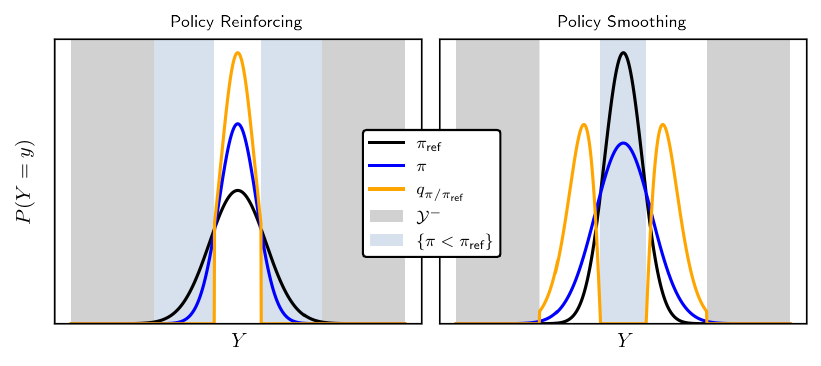}
    \caption{Illustration of policy reinforcement (\textit{left}) and smoothing (\textit{right}). The gray region corresponds to \(\mathcal Y^-=\{y'\in\mathcal Y\mid \piref(y')\approx0\}\), and the light-blue region \(\{\pi<\piref\}\) corresponds to \(\{\tilde y\in\mathcal Y \mid \pi(\tilde y)<\piref(\tilde y)\}\). \textit{This plot serves only as an illustrative example and does not represent the true DID \(\DID\pistar\piref\).}}\label{fig:policy_dyn}
\end{figure}

\paragraph{Case 1: Policy reinforcing.}
We first consider the case when the policy \(\pi\) reinforces its distribution with respect to the reference policy \(\piref\).
If \(\pi\) reinforces the distribution of \(\piref\), then under the assumption of \(\pi(y')\approx0\approx\DID\pi\piref(y')\), samples with \(\pi(\tilde y)<\piref(\tilde y)\) should satisfy \(\frac{\pi(\tilde y)}{\piref(\tilde y)}<1\approx\frac{\pi(y')}{\piref(y')}\). Since \(\DID\pi\piref(\tilde y)<\DID\pi\piref(y')\approx0\), we expect \(\DID\pi\piref(y)\) to concentrate its probability mass towards samples with \(\pi(y)>\piref(y)\) and sufficient probability of \(\piref(y)>0\). Thus, the number of samples \(y\) with sufficiently large \(\DID\pi\piref(y)\) is expected to be far less than the number of samples with sufficiently large \(\piref(y)\). As a result, we expect the relationship: \(H(\DID\pi\piref)<H(\piref)\).
We visualize this intuition as the left plot in Figure~\ref{fig:policy_dyn}.

\paragraph{Case 2: Policy smoothing.}
Now, consider the case when the policy \(\pi\) smooths its distribution with respect to \(\piref\).
A key relation between \(H(\DID\pi\piref)\) and \(H(\piref)\) is the following:\begin{gather*}
    H(\DID\pi\piref)-H(\piref)=\\\KL{\DID\pi\piref}{\pi}-\KL{\DID\pi\piref}{\piref}+\KL{\piref}{\DID\pi\piref}-\KL{\piref}{\pi}.
\end{gather*}
Since we have assumed that \(\pi\) and \(\piref\) do not diverge significantly, we mainly expect the last two terms to dominate:\begin{gather*}
    \norm{\KL{\DID\pi\piref}{\pi}-\KL{\DID\pi\piref}{ \piref}}<\norm{\KL{\piref}{\DID\pi\piref}-\KL{\piref}{\pi}}.
\end{gather*} 
See the right plot in Figure~\ref{fig:policy_dyn} for a visual intuition.
When \(\pi\) smooths its distribution with respect to \(\piref\), we can expect \(\KL{\piref}{\DID\pi\piref}>\KL{\piref}{\pi}\). This results in the relationship: \(H(\DID\pi\piref)>H(\piref)\).

\section{Derivations and proofs}\label{app:proof}
In this section we provide the detailed proofs supporting our theoretical findings.

\subsection{Proof for equivalence of preference optimization}\label{proof:lem_eq}
\begin{theorem*}[Preference \vs Distribution Matching \citep{dumoulin2024densityestimationperspectivelearning}]
Let \(\mathcal D=\{(\yw,\yl)\}\) be a sufficiently large preference dataset where the sets of \(\yw\) and \(\yl\) cover \(\mathcal Y\). Then preference optimization on \(\mathcal D\) is equivalent to fitting the reward-induced distribution \(P(Y=y \mid r)\) to the implicit preference distribution \(p^*(y)\):
\begin{flalign*}
\max_{r}\;\ExpPref{\log \sigma(r(\yw)-r(\yl))}
&\iff \; \min_r\;\KL{p^*(y)}{\LK r}.
\end{flalign*}
\end{theorem*}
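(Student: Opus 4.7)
The plan is to rewrite the preference log-likelihood as a cross-entropy over pair preferences, and then reduce matching pair preferences to matching the reward-induced marginal distribution. First, I would use the assumed BT data-generating process to expand the expectation: each unordered pair \(\{y_1,y_2\}\) drawn from the data contributes a Bernoulli cross-entropy between the ground-truth preference probability \(p^*(y_1\succ y_2)=p^*(y_1)/(p^*(y_1)+p^*(y_2))\) and the reward-induced preference probability \(\sigma(r(y_1)-r(y_2))=P(Y=y_1\mid r)/\bigl(P(Y=y_1\mid r)+P(Y=y_2\mid r)\bigr)\), where the second equality holds because the Boltzmann normalizer cancels in the ratio. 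Adding the \(r\)-independent entropy of the preference labels, maximizing \(\ExpPref{\log\sigma(r(\yw)-r(\yl))}\) is therefore equivalent to minimizing the pair-weighted sum of Bernoulli KL divergences between the data's BT distribution and the reward-induced BT distribution.

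Second, I would show that these per-pair Bernoullis all match if and only if \(r(y)=\log p^*(y)+C\) for some constant \(C\). The key algebraic step is that \(\sigma(r(y_1)-r(y_2))=p^*(y_1\succ y_2)\), combined with injectivity of \(\sigma\) and the BT form, yields \(r(y_1)-r(y_2)=\log p^*(y_1)-\log p^*(y_2)\); the coverage assumption then lets this pairwise identity propagate consistently across \(\mathcal Y\) and pin down \(r\) up to one additive constant.

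Third, I would close the loop by observing that \(r(y)=\log p^*(y)+C\) is equivalent to \(\LK r \propto \exp(r(y))=p^*(y)\) (since \(p^*\) is already normalized), which is in turn equivalent to \(\KL{p^*(y)}{\LK r}=0\). Because the second problem is non-negative and attains zero exactly on this equivalence class of rewards, and the first problem attains its maximum on the very same class, the two optimization problems share solution sets and are equivalent in the sense stated.

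The main obstacle I anticipate is carefully handling the reward's shift-invariance and verifying that the coverage hypothesis is strong enough to identify \(r\) up to a single global additive constant: matching any single pair only constrains a relative difference, so I would want to argue that the graph whose vertices are elements of \(\mathcal Y\) and whose edges are observed pairs is connected enough for pairwise identities to chain into a global calibration of \(r\); the hypothesis that both the chosen and rejected sets cover \(\mathcal Y\) is the natural lever for this step.
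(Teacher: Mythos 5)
Your proposal is correct and follows essentially the same route as the paper's proof: symmetrize the preference objective into per-pair Bernoulli cross-entropies (KL divergences) against the ground-truth Bradley--Terry probabilities, identify \(r(y)=\log p^*(y)+C\) from the matched pairwise sigmoids, and conclude this is equivalent to \(\KL{p^*(y)}{\LK r}=0\). The connectivity concern you raise is handled automatically in the paper's setting, since all sampling distributions are assumed to have full support, so every pair receives positive weight and the pairwise identity pins down \(r\) up to one additive constant directly.
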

We restate the proof in \citet{dumoulin2024densityestimationperspectivelearning} for reference.
\begin{proof}
Recall from Section~\ref{sec:pref_cond} that we model the ground truth probability of \(y_1\) being preferred over \(y_2\) as
\[
p^*(y_1\succ y_2)=\frac{\piw(y_1)\pil(y_2)}{\piw(y_1)\pil(y_2)+\piw(y_2)\pil(y_1)}.
\]
Now, for a sufficiently large preference dataset \(\mathcal D\), we can show that preference optimization is equivalent to minimizing the KL-divergence between the preference distributions. First, observe the following relationship:
\begin{gather*}
    \mathbb E_{(\yw,\yl)\sim\mathcal D}\left[ \log\sigma(r(\yw)-r(\yl))\right]=\sum_{(\yw,\yl)\in \mathcal Y\times\mathcal Y}\piw(\yw)\pil(\yl)\log\sigma(r(\yw)-r(\yl)) \\
    =\sum_{(\yw,\yl)\in\mathcal I}\Big(\piw(\yw)\pil(\yl)+\piw(\yl)\pil(\yw)\Big)\Big[\frac{\piw(\yw)\pil(\yl)}{\piw(\yw)\pil(\yl)+\piw(\yl)\pil(\yw)}\log\sigma(r(\yw)-r(\yl))\\+\frac{\piw(\yl)\pil(\yw)}{\piw(\yw)\pil(\yl)+\piw(\yl)\pil(\yw)}\log\sigma(r(\yl)-r(\yw))\Big]\\
    =-\frac12\mathbb E_{(\yw,\yl)\sim\mathcal D}\Big[-p^*(\yw\succ\yl)\log p(\yw\succ\yl\mid r)-p^*(\yl\succ\yw)\log p(\yl\succ\yw\mid r)\Big]\\
    =-\frac12\mathbb E_{(\yw,\yl)\sim\mathcal D}\Big[\KL{p^*(\yw\succ\yl)}{\pref r}\Big] + C,
\end{gather*}
where \(\mathcal I= \{(y_i,y_j)\in \mathcal Y\times \mathcal Y: i> j \}\) is the set of ordered distinct pairs \((y_i,y_j)\), and \(C\) is a constant term independent of \(r\).
Therefore, preference optimization is equivalent to minimizing the KL-divergence between preference distributions:
\begin{gather*}
\arg\max_{r}\ExpPref{\log \sigma(r(\yw)-r(\yl))}\\=\arg\min_r\mathbb E_{(\yw,\yl)\sim\mathcal D}\Big[\KL{p^*(\yw\succ\yl)}{\pref r}\Big].
\end{gather*}

Now, for any two reward parameterizations \(r_1\) and \(r_2\), \(\KL{\pref{r_1}}{\pref{r_2}}\) is minimized to 0 if and only if \(r_1(y)=r_2(y)+C\) for all \(y\in\mathcal Y\) and for some constant \(C\).  
If we let \(r_1(y)=\log p^*(y)\), we have \(\pref{r_1}=p^*(\yw\succ\yl)\). Next, set \(r(y)=r_2(y)\) and the following holds:
\begin{align*}
    \ExpminKL{\pref{r_1}}{\pref{r_2}} &\iff \\
    \ExpminKL{p^*(\yw\succ\yl)}{\pref{r}} &\iff \\
    \forall y\in\mathcal Y:\log p^*(y)=r(y)+C&\iff \\
    \forall y\in\mathcal Y: p^*(y)\propto \exp(r(y))&\iff \\
    \forall y\in\mathcal Y:p^*(y)=\LK{r}&\iff \\
    \minKL{p^*(y)}{\LK{r}}.
\end{align*}
Therefore, for any reward parameterization \(r:\mathcal Y\rightarrow \R\), the preference optimization objective is optimized only when the reward induced distribution \(\LK r\coloneqq\frac{\exp(r(y))}{\sum_{y'\in\mathcal Y}\exp(r(y'))}\) is exactly the same as the ground truth preference distribution \(p^*(y)\).
\end{proof}

\subsection{Proof for preferences encoding Differential Information}\label{proof:th_rr}
\begin{theorem*}[Preferences Encoding Differential Information]
Consider a preference dataset \(\mathcal D = \{(\yw,\yl)\mid\yw\sim\piw,\yl\sim\pil\}\). Let \(\pistar\) be the target policy.
If the Differential Information Distribution between policies match up to an exponent \(\beta>0\):
    \begin{gather*}
        \DID{\piw}{\pil}(y) \propto \DID{\pistar}{\piref}(y)^\beta,\quad\forall y\in\mathcal Y,
    \end{gather*}
then the preference probability \(p^*(\yw\succ\yl)\) can be expressed as preferences induced by the DID:
    \begin{gather*}
        p^*(\yw\succ\yl)=\sigma\left(\beta\log\DID\pistar\piref(\yw)-\beta\log\DID\pistar\piref(\yl)\right).
    \end{gather*}    
\end{theorem*}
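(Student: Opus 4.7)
The plan is a direct algebraic unfolding of the Bradley-Terry preference formula in terms of likelihood ratios, followed by substitution of the power-law hypothesis. The key insight is that the Bradley-Terry preference depends only on the ratio $\piw(y)/\pil(y)$ (not on the individual densities), and this ratio is precisely what the DID $\DID{\piw}{\pil}$ captures up to normalization.

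First, I would start from the preference generation assumption and divide the numerator and denominator of $p^*(\yw\succ\yl) = \frac{\piw(\yw)\pil(\yl)}{\piw(\yw)\pil(\yl)+\piw(\yl)\pil(\yw)}$ by $\pil(\yw)\pil(\yl)$. This immediately yields
\begin{gather*}
    p^*(\yw\succ\yl) = \frac{\piw(\yw)/\pil(\yw)}{\piw(\yw)/\pil(\yw) + \piw(\yl)/\pil(\yl)}.
\end{gather*}
Next, I would invoke Theorem~\ref{lemma:RI} to recognize that $\piw(y)/\pil(y) = Z \cdot \DID{\piw}{\pil}(y)$, where $Z$ is the partition function. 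Since $Z$ appears as a common factor in both numerator and denominator, it cancels, giving
\begin{gather*}
    p^*(\yw\succ\yl) = \frac{\DID{\piw}{\pil}(\yw)}{\DID{\piw}{\pil}(\yw) + \DID{\piw}{\pil}(\yl)} = \sigma\!\left(\log\DID{\piw}{\pil}(\yw) - \log\DID{\piw}{\pil}(\yl)\right).
\end{gather*}

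Finally, I would apply the hypothesis $\DID{\piw}{\pil}(y) = C \cdot \DID{\pistar}{\piref}(y)^\beta$ for some positive normalization constant $C$ (arising from the proportionality). Taking logarithms, the $\log C$ terms cancel in the difference, leaving
\begin{gather*}
    p^*(\yw\succ\yl) = \sigma\!\left(\beta\log\DID{\pistar}{\piref}(\yw) - \beta\log\DID{\pistar}{\piref}(\yl)\right),
\end{gather*}
which is the desired expression.

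I do not anticipate a substantive obstacle: the whole argument is driven by the observation that Bradley-Terry preferences are invariant to multiplicative rescaling of the underlying density. The only point requiring minor care is verifying that the proportionality constant in the hypothesis is well-defined (\ie, both $\DID{\piw}{\pil}$ and $\DID{\pistar}{\piref}^\beta$ sum to finite quantities on $\mathcal{Y}$), which is guaranteed by the full-support assumption on all policies stated in Section~\ref{sec:prelim}, and by noting that the constant cancels identically in the final sigmoid expression.
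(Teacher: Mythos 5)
Your proposal is correct and follows essentially the same route as the paper's proof: rewrite the Bradley--Terry probability in terms of the likelihood ratio \(\piw(y)/\pil(y)\), identify this ratio with \(\DID{\piw}{\pil}\) up to a normalization that cancels inside the sigmoid's log-difference, and then substitute the power-law hypothesis, whose proportionality constant likewise cancels. No gaps; your extra remark about the finiteness of the normalization constants is a harmless (and valid) bit of added care.
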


\begin{proof}
The relationship follows by directly applying the power-law DID relationship to the ground-truth preference probability.
\begin{flalign*}
    p^*(y_1\succ y_2)&=\frac{\piw(y_1)\pil(y_2)}{\piw(y_1)\pil(y_2)+\piw(y_2)\pil(y_1)}\\
    &=\frac{\frac{\piw(y_1)}{\pil(y_1)}}{\frac{\piw(y_1)}{\pil(y_1)}+\frac{\piw(y_2)}{\pil(y_2)}}\\
    &=\sigma\left(\log\frac{\piw(y_1)}{\pil(y_1)}-\log\frac{\piw(y_2)}{\pil(y_2)}\right)\\
    &=\sigma\left(\log\DID{\piw}{\pil}(y_1)-\log\DID{\piw}{\pil}(y_2)\right)\\
    &=\sigma\left(\beta\log\DID\pistar\piref(y_1)-\beta\log\DID\pistar\piref(y_2)\right).
\end{flalign*}
\end{proof}
\subsection{Proof for optimal reward for learning Differential Information}\label{proof:th_opt}
\begin{theorem*}[Optimal Reward for Learning Differential Information]
    Let \(\mathcal{D}\) be a preference dataset satisfying Theorem~\ref{def:DIDD}, encoding the Differential Information required to learn the target policy \(\pistar\). 
    Then, for some constant \(C\), we have
    \begin{gather*}
    \pistar=\arg\max_\pi\mathbb E_{(\yw,\yl)\sim\mathcal D} \left[\log\sigma(r(\yw)-r(\yl))\right]\iff
    r(y)=\beta\log \frac{\pi(y)}{\piref(y)} + C.
    \end{gather*}
    for some constant \(C\).
\end{theorem*}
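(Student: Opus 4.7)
My plan is to combine Theorem~\ref{thm:eq} (preference optimization equals distribution matching) with Theorem~\ref{def:DIDD} (the implicit preference distribution of $\mathcal{D}$ encodes the DID up to exponent $\beta$), and then read off the unique reward form that makes $\pistar$ the argmax. Concretely, Theorem~\ref{thm:eq} converts the preference-optimization argmax into the matching condition $\LK{r}(y)=p^*(y)$ at the optimizer, where $\LK{r}(y)\propto\exp(r(y))$. Theorem~\ref{def:DIDD} together with Theorem~\ref{lemma:RI} then identifies $p^*(y)\propto\DID{\pistar}{\piref}(y)^\beta\propto(\pistar(y)/\piref(y))^\beta$, so the optimality condition reduces to
\[
\exp(r(y))\;\propto\;\left(\frac{\pistar(y)}{\piref(y)}\right)^{\!\beta},\qquad\forall y\in\mathcal{Y}.
\]

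For the direction ($\Leftarrow$), I would substitute the ansatz $r(y)=\beta\log(\pi(y)/\piref(y))+C$ and compute $\LK{r}(y)\propto(\pi(y)/\piref(y))^\beta$. Equating the normalized $\LK{r}$ with $p^*$ forces $\pi(y)=c\cdot\pistar(y)$ for some constant $c>0$, and normalization of both probability measures on $\mathcal{Y}$ gives $c=1$, so $\pi=\pistar$ is the argmax. For the direction ($\Rightarrow$), taking logarithms in the displayed optimality equation yields $r(y)=\beta\log(\pistar(y)/\piref(y))+C$ pointwise. Since the argmax is exactly $\pi=\pistar$, this is precisely $r(y)=\beta\log(\pi(y)/\piref(y))+C$ at the optimizer, and the additive constant $C$ reflects the standard translation invariance of the Bradley-Terry likelihood under $r\mapsto r+C$.

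The main obstacle will be formalizing the uniqueness claim in ($\Rightarrow$): the optimality equation only pins down $r$ pointwise at the optimum, whereas the theorem reads as a statement about the functional form of $r$ viewed as a $\pi$-parameterization. I would address this by interpreting the claim as constraining the class of reward parameterizations $r(y;\pi,\piref)$ whose argmax is $\pistar$, and arguing that the log-ratio family is the unique such family up to an additive constant, mirroring the analogous uniqueness-up-to-constant argument used by \citet{rafailov2024direct} in the KL-regularized RL derivation.
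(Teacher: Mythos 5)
Your proposal is correct and takes essentially the same route as the paper: Theorem~\ref{thm:eq} reduces the argmax condition to matching \(\LK{r}\) with \(p^*\), Theorem~\ref{def:DIDD} identifies \(p^*(y)\propto(\pistar(y)/\piref(y))^\beta\), and the log-ratio form follows. The subtlety you flag in the (\(\Rightarrow\)) direction is handled in the paper's proof exactly as you suggest: the claim is read as a condition on the reward-as-a-function-of-\(\pi\) parameterization, requiring \q{objective optimized iff \(\pi=\pistar\)}, which forces \(\exp(r(y))\propto(\pi(y)/\piref(y))^\beta\) identically in \(\pi\) and hence \(r(y)=\beta\log\frac{\pi(y)}{\piref(y)}+C\).
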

\begin{proof}
    The equivalence between preference optimization and distribution matching (Theorem~\ref{thm:eq}) yields the following relationship:
    \begin{flalign*}
        \ExpminKL{p^*(\yw\succ\yl)}{\pref{r}}&\iff\\
        \ExpminKL{\pref{r^*}}{\pref{r}}&\iff\\
        \minKL{\LK{r^*}}{\LK{r}},&\quad        
    \end{flalign*} 
    where \(r^*=\beta\log\frac{\pistar}{\piref}\). Now, observe the following relationship:
    \begin{flalign*}
        \forall y\in\mathcal Y,\pistar(y)=\pi(y)&\iff\\
        \forall y\in\mathcal Y,\DID\pistar\piref(y)=\DID\pi\piref(y)&\iff\\
        \forall y\in\mathcal Y,\DID\pistar\piref(y)^\beta=\DID\pi\piref(y)^\beta&\iff\\
        \minKL{\DID\pistar\piref(y)^\beta}{\DID\pi\piref(y)^\beta}&\iff\\
        \minKL{\LK{r^*}}{\DID\pi\piref(y)^\beta},&\quad
    \end{flalign*}
    where the last line follows from the fact that \(r^*=\beta\log\frac{\pistar}{\piref}\).
    
    Therefore, in order to have the following equivalence:\begin{gather*}
        \mathbb E_{(\yw,\yl)\sim\mathcal D}\left[ 
        p^*(\yw\succ\yl)\Vert \pref{r}\right]=0\iff\pistar=\pi,
    \end{gather*} we must have \(\minKL{\LK{r}}{\DID\pi\piref(y)^\beta}\). In other words, we require\begin{flalign*}
        \minKL{\LK{r}}{\DID\pi\piref(y)^\beta}&\iff\\
        \forall y\in\mathcal Y,\LK{r}=\DID\pi\piref(y)^\beta&\iff\\
        \forall y\in\mathcal Y, \exp(r(y))\propto(\frac{\pi(y)}{\piref(y)})^\beta&\iff\\
        \forall y\in\mathcal Y, r(y)=\beta\log\frac{\pi(y)}{\piref(y)}+C,
    \end{flalign*} for some constant \(C\).
\end{proof}
\subsection{Proof for power-law structure of DPO}\label{proof:cor:pw-did}
\begin{corollary*}[DID Power-Law of DPO]
Consider a preference dataset \(\mathcal D = \{(\yw,\yl)\mid \yw\sim\piw, \ylfull\}\) and a policy \(\pistar\) obtained as a stationary point of preference optimization using the log-ratio reward \(r=\beta\log(\pi/\piref)\) on \(\mathcal D\).
Then, a power-law relationship between the DID of policies must hold:\begin{gather*}
    \DID{\piw}{\pil}(y) \propto \DID{\pistar}{\piref}(y)^\beta,\quad\forall y\in\mathcal Y.
\end{gather*} 
\end{corollary*}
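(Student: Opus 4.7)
The plan is to combine the distribution-matching characterization of preference optimization (Theorem~\ref{thm:eq}) with the explicit functional form of the log-ratio reward, and read off the claimed power-law. The argument is essentially a chain of proportionalities between three distributions on $\mathcal Y$: the implicit Bradley-Terry distribution $p^*$ induced by the sampling laws $(\piw,\pil)$, the reward-induced Boltzmann distribution $\LK r$ at stationarity, and the DID $\DID{\pistar}{\piref}$.

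First, I would rewrite the ground-truth preference probability in terms of the sampling distributions. From the preference generation assumption in Section~\ref{sec:pref_cond},
$$p^*(y_1\succ y_2)=\frac{\piw(y_1)/\pil(y_1)}{\piw(y_1)/\pil(y_1)+\piw(y_2)/\pil(y_2)},$$
so the implicit BT score satisfies $p^*(y)\propto \piw(y)/\pil(y)$, which by Theorem~\ref{lemma:RI} is precisely $\DID{\piw}{\pil}(y)$ after normalization. Next, I would invoke Theorem~\ref{thm:eq}: at a stationary point of the population preference-optimization objective, the reward-induced distribution must satisfy $\LK r = p^*(y)$. Substituting the log-ratio reward $r(y)=\beta\log(\pistar(y)/\piref(y))$ gives
$$\LK r\;\propto\;\exp(r(y))\;\propto\;\bigl(\pistar(y)/\piref(y)\bigr)^\beta\;\propto\;\DID{\pistar}{\piref}(y)^\beta,$$
where the last proportionality uses Theorem~\ref{lemma:RI} once more. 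Chaining the two proportionalities yields $\DID{\piw}{\pil}(y)\propto \DID{\pistar}{\piref}(y)^\beta$ for every $y\in\mathcal Y$, which is the claim.

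The main obstacle is the gap between \emph{stationary point}, as stated in the corollary, and \emph{arg-max}, as used in Theorem~\ref{thm:eq}. I would close this by noting that $\log\sigma(r(\yw)-r(\yl))$ is concave in $r$ and that the map $\pi\mapsto r=\beta\log(\pi/\piref)$ is a bijection from full-support policies to reward functions modulo an additive constant; hence any stationary point of the reparameterized population objective is a global maximizer, and Theorem~\ref{thm:eq} applies verbatim. A minor additional check is that the additive reward constant (the $C$ in Theorem~\ref{th:opt}) is harmless: it cancels under the normalization implicit in the ``$\propto$'' of the conclusion, so the power-law equality of normalized DIDs holds exactly.
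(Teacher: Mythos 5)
Your argument is correct and follows essentially the same route as the paper's own proof: identify the implicit Bradley--Terry distribution with \(\DID{\piw}{\pil}\) via the normalized ratio \(\piw/\pil\), apply Theorem~\ref{thm:eq} to equate it with the reward-induced distribution \(\exp(\beta\log(\pistar/\piref))\) at convergence, and chain the proportionalities. Your extra remark closing the stationary-point/arg-max gap via concavity of the objective in \(r\) and the bijection \(\pi\mapsto\beta\log(\pi/\piref)\) (modulo constants) is a welcome refinement that the paper leaves implicit.
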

\begin{proof}
According to Theorem~\ref{thm:eq}, the converged policy \(\pistar\) obtained by optimizing \(\mathcal D\) with \(\rdpo=\beta\log\pi/\piref\) must follow \(\pistar(y)\propto\piref(y)\cdot(p^*(y))^\frac1\beta\) due to the following:
\begin{gather*}
    p^*(y)=\LK\rdpo\propto\DID\pistar\piref(y) ^\beta,\quad\forall y \in\mathcal Y\\
    \iff (p^*(y))^\frac1\beta\propto \DID\pistar\piref(y),\quad\forall y \in\mathcal Y\\
    \iff \pistar(y)\propto\piref(y)\cdot(p^*(y))^\frac1\beta.\quad\forall y \in\mathcal Y
\end{gather*}
Meanwhile, it can also be shown that \(p^*=\DID\piw\pil\). This because the reward \(r'=\log\piw/\pil\) perfectly fits the ground-truth preference distribution. For all \(y_1,y_2\in\mathcal Y\times\mathcal Y\),\begin{flalign*}
    p^*(y_1\succ y_2)&=\frac{\piw(y_1)\pil(y_2)}{\piw(y_1)\pil(y_2)+\piw(y_2)\pil(y_1)}\\
    &=\sigma\left(\log\frac{\piw(y_1)}{\pil(y_1)}-\log\frac{\piw(y_2)}{\pil(y_2)}\right)\\
    &=\sigma\left(r'(y_1)-r'(y_2)\right)\\
    &\Rightarrow p^*(y)=\LK{r'}=\DID\piw\pil(y),\quad\forall y\in\mathcal Y\quad\text{ (Theorem~\ref{thm:eq})}.
\end{flalign*}

Since \(\pistar(y)\propto\piref(y)\cdot(p^*(y))^\frac1\beta\) and \(p^*=\DID\piw\pil\), the power-law DID relationship \(\DID{\piw}{\pil}(y) \propto \DID{\pistar}{\piref}(y)^\beta\) follows directly.

Note that this result recovers the findings of \citet{pan2025mattersdatadpo}, where the authors derive the power-law DID relationship from the functional derivative of the DPO loss. In contrast, our proof takes an alternative approach by leveraging the distribution matching result of Theorem~\ref{thm:eq} \citep{dumoulin2024densityestimationperspectivelearning}.
\end{proof}

\subsection{Proof for log-likelihood changes in DPO}\label{proof:thm:dyn}
\begin{theorem*}[Log-Likelihood Change of DPO]
Consider a preference dataset \(\mathcal D = \{(\yw,\yl)\mid\yw\sim\piref,\yl\sim\pil\}\), and \(\pistar\) obtained by preference optimization on \(\mathcal D\) using the log-ratio reward \(r=\beta\log\pi/\piref\). Then, for any \(\beta>0\), \(\pistar\) must decrease the average log-likelihood of \(\yl\):\begin{gather*}
    \mathbb E_{\ylfull}\left[\log\pistar(\yl)\right]<\mathbb E_{\ylfull}\left[\log\piref(\yl)\right].
\end{gather*}
Conversely, if \(\piref\) was fine-tuned on \(\yl\) (\ie, \(\yl\sim\piref\)), then, for any \(\beta\ge1\), \(\pistar\) must increase the average log-likelihood of \(\yw\):\begin{gather*}
    \mathbb E_{\ywfull}\left[\log\pistar(\yw)\right]>\mathbb E_{\ywfull}\left[\log\piref(\yw)\right].
\end{gather*}
\end{theorem*}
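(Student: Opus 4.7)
The plan is to apply Corollary~\ref{cor:pw-did} to solve for \(\pistar\) in closed form, rewrite each expected log-likelihood change as a KL-divergence plus a log-partition term, and then bound the partition by Jensen's inequality applied to an appropriate power function. The key observation is that in both cases, the expected change decomposes cleanly into a KL term (whose sign I can read off from Gibbs' inequality) and a log-partition term (whose sign I can read off from Jensen's inequality), so the whole proof reduces to choosing which convexity direction to apply.

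For the first claim, I would specialize Corollary~\ref{cor:pw-did} to \(\piw=\piref\), yielding
\[
    \pistar(y)=\frac{\piref(y)}{Z}\!\left(\frac{\piref(y)}{\pil(y)}\right)^{1/\beta},\qquad Z=\mathbb E_{\yl\sim\pil}\!\left[\left(\frac{\piref(\yl)}{\pil(\yl)}\right)^{1+1/\beta}\right].
\]
Taking logs and averaging under \(\ylfull\) reduces the claim to showing
\[
    \mathbb E_{\ylfull}\!\left[\log\pistar(\yl)-\log\piref(\yl)\right]=-\tfrac{1}{\beta}\KL{\pil}{\piref}-\log Z<0.
\]
The KL term is nonpositive by Gibbs, so it suffices to show \(\log Z>0\). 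This I would obtain by applying Jensen's inequality to the strictly convex map \(x\mapsto x^{1+1/\beta}\), which is convex for every \(\beta>0\); strictness comes from the assumption \(\piw\neq\pil\), which forces \(\piref(\yl)/\pil(\yl)\) to be non-constant under \(\pil\) and hence \(Z>1\).

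For the second claim, specializing \(\pil=\piref\) yields
\[
    \pistar(y)=\frac{\piref(y)}{Z'}\!\left(\frac{\piw(y)}{\piref(y)}\right)^{1/\beta},\qquad Z'=\mathbb E_{y\sim\piref}\!\left[\left(\frac{\piw(y)}{\piref(y)}\right)^{1/\beta}\right],
\]
reducing the claim to \(\tfrac{1}{\beta}\KL{\piw}{\piref}-\log Z'>0\). Gibbs' inequality, together with \(\piw\neq\piref\) (which follows from \(\piw\neq\pil=\piref\)), gives strict positivity of the KL term. For the log-partition I would invoke Jensen's inequality on \(x\mapsto x^{1/\beta}\), which is concave precisely when \(\beta\ge 1\); concavity yields \(Z'\le(\mathbb E_{y\sim\piref}[\piw(y)/\piref(y)])^{1/\beta}=1\), hence \(\log Z'\le 0\), completing the strict inequality.

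The main obstacle is the asymmetric dependence on \(\beta\). The first claim uses the convex branch \(x^{1+1/\beta}\), which is convex for every \(\beta>0\), which is exactly why the first statement requires no restriction on \(\beta\). The second claim genuinely needs concavity of \(x\mapsto x^{1/\beta}\), and this only holds once \(\beta\ge 1\); for \(\beta<1\) the same argument would yield a bound in the wrong direction. I expect the most delicate bookkeeping to be verifying that strict inequality survives in each part (by Jensen's strict convexity in the first, by Gibbs' strictness in the second), and in particular handling the boundary case \(\beta=1\), where \(x^{1/\beta}\) is merely linear so that only \(\tfrac{1}{\beta}\KL{\piw}{\piref}>0\) can supply the strictness.
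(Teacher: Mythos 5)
Your proof is correct and follows essentially the same route as the paper: specialize the power-law DID to obtain \(\pistar\propto\piref\,(\piref/\pil)^{1/\beta}\) (resp. \(\pistar\propto\piref\,(\piw/\piref)^{1/\beta}\)), decompose the expected log-likelihood change into a KL term plus a log-partition term, and control the partition via convexity of \(x\mapsto x^{1+1/\beta}\) in the first case and a \(\beta\ge 1\) concavity argument in the second. The only deviation is cosmetic---where you apply Jensen to the concave map \(x\mapsto x^{1/\beta}\), the paper invokes the equivalent H\"older inequality---and your treatment of the boundary \(\beta=1\) (where \(\log Z'=0\) and strictness must come from \(\KL{\piw}{\piref}>0\)) is in fact slightly more careful than the paper's blanket claim that \(\log Z<0\).
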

\begin{proof}\(\)

\begin{description}
    \item[Case \(\piref=\piw\):]
    Assume \(\beta>0\).
    Let \(Z=\sum_{y\in\mathcal Y}\piref(y)\cdot(\frac{\piref(y)}{\pil(y)})^\frac1\beta\). It can be shown that \(\log Z>0\) due to the following:\begin{flalign*}
        \log Z&=\log \sum_{y\in\mathcal Y}\piref(y)\cdot(\frac{\piref(y)}{\pil(y)})^\frac1\beta\\
        &= \log \sum_{y\in\mathcal Y}\pil(y)\cdot(\frac{\piref(y)}{\pil(y)})^{1+\frac1\beta}\\
        &=\log\mathbb E_{y\sim\pil}\left[(\frac{\piref(y)}{\pil(y)})^{1+\frac1\beta}\right]\\
        &>\log\left(\mathbb E_{y\sim\pil}\left[\frac{\piref(y)}{\pil(y)}\right]\right)^{1+\frac1\beta}\quad\text{(Jensen's Inequality)}\\
        &=(1+\frac1\beta)\log 1=0.
    \end{flalign*}
    Since \(\pistar(y)\propto\piref(y)\cdot p^*(y)^\frac1\beta\) and \(p^*=\DID\piw\pil=\DID\piref\pil\), it follows that \(\pistar(y)\propto\piref(y)\cdot (\DID\piref\pil(y))^\frac1\beta\).
    Therefore, we have\begin{flalign*}
        &\mathbb E_{\ylfull}[\log\pi^*(\yl)-\log\piref(\yl)]\\&=\frac1\beta\sum_{y\in\mathcal Y}\pil(y)\log\frac{\piref(y)}{\pil(y)Z}\\
        &=-\frac1\beta\KL\pil\piref-\log Z <0\quad\because\KL\pil\piref>0\textup{ and } \log Z>0.
    \end{flalign*}
    \item[Case \(\piref=\pil\):]
    Assume \(\beta\geq1\) and \(\piref\neq\piw\).
    Let \(Z=\sum_{y\in\mathcal Y}\piref(y)\cdot(\frac{\piw(y)}{\piref(y)})^\frac1\beta\). It can be shown that \(\log Z<0\) due to the following:\begin{flalign*}
        \log Z&=\log \sum_{y\in\mathcal Y}\piref(y)\cdot(\frac{\piw(y)}{\piref(y)})^\frac1\beta\\
        &= \log \sum_{y\in\mathcal Y}\piw(y)^{\frac1\beta}\cdot(\piref(y))^{1-\frac1\beta}\\
        &<\log\left(\left(\sum_{y\in\mathcal Y}\piw(y)\right)^{\frac1\beta}\cdot\left(\sum_{y\in\mathcal Y}\piref(y)\right)^{1-\frac1\beta}\right) \quad\text{(H\"{o}lder's Inequality)}\\\\
        &=\log\left(1\cdot1\right)=0.
    \end{flalign*}
    Since \(\pistar(y)\propto\piref(y)\cdot p^*(y)^\frac1\beta\) and \(p^*=\DID\piw\pil=\DID\piw\piref\), it follows that \(\pistar(y)\propto\piref(y)\cdot (\DID\piw\piref(y))^\frac1\beta\).
    Therefore, we have\begin{flalign*}
        &\mathbb E_{\ywfull}[\log\pi^*(\yl)-\log\piref(\yl)]\\
        &=\frac1\beta\sum_{y\in\mathcal Y}\piw(y)\log\frac{\piw(y)}{\piref(y)Z}\\
        &=\frac1\beta\KL\piw\piref-\log Z>0\quad\because\KL\piw\piref>0 \textup{ and } \log Z<0.
    \end{flalign*}
\end{description}
\end{proof}

\subsection{Proof for preference data strength}\label{proof:str}
\begin{samepage}
\begin{theorem*}[Adaptive Policy Exploration of DPO]
Let \(\mathcal D = \{(\yw,\yl)\mid \yw\sim\piref,\ \yl\sim\pil\}\) be a preference dataset with an implicit Bradley-Terry preference distribution \(p^*_{\mathcal D}\).  
Consider another dataset \(\mathcal D'=\{(\yw,\yl)\}\) whose implicit Bradley-Terry distribution \(p^*_{\mathcal D'}\) is a \q{sharpened} version of \(p^*_{\mathcal D}\), in the sense that there exists \(\alpha>1\) such that for all pairs \((\yw,\yl)\in\mathcal Y\times\mathcal Y\),
\begin{gather*}
    p^*_{\mathcal D'}(\yw\succ\yl)
    =\frac{\big(p^*_{\mathcal D}(\yw)\big)^\alpha}{\big(p^*_{\mathcal D}(\yw)\big)^\alpha+\big(p^*_{\mathcal D}(\yl)\big)^\alpha}
    =\exp\!\big(\alpha\log p^*_{\mathcal D}(\yw)-\alpha\log p^*_{\mathcal D}(\yl)\big).
\end{gather*}
For the same reference policy \(\piref\) and any \(\beta>0\), let \(\pistar_{\mathcal D}\) and \(\pistar_{\mathcal D'}\) denote the policies obtained by preference optimization on \(\mathcal D\) and \(\mathcal D'\), respectively, using the log-ratio reward \(r=\beta\log\pi/\piref\).  
Then the strengthened dataset \(\mathcal D'\) induces a strictly larger divergence from the reference:
\begin{gather*}
    \KL{\piref}{\pistar_{\mathcal D'}} \;>\; \KL{\piref}{\pistar_{\mathcal D}}.
\end{gather*}
\end{theorem*}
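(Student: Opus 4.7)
The plan is to reduce both converged policies to exponentially tilted forms of $\piref$ and then compare the resulting KL divergences via a single strict application of Jensen's inequality.

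First, I would apply Corollary~\ref{cor:pw-did} to obtain closed-form expressions for $\pistar_{\mathcal D}$ and $\pistar_{\mathcal D'}$. Since $\yw\sim\piref$ makes $p^*_{\mathcal D}(y)\propto\piref(y)/\pil(y)$ (from Theorem~\ref{thm:eq}), the power-law DID structure gives $\pistar_{\mathcal D}(y)\propto\piref(y)\cdot(\piref(y)/\pil(y))^{1/\beta}$. The sharpening hypothesis $p^*_{\mathcal D'}(y)\propto p^*_{\mathcal D}(y)^{\alpha}$ then yields $\pistar_{\mathcal D'}(y)\propto\piref(y)\cdot(\piref(y)/\pil(y))^{\alpha/\beta}$. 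Defining $u(y):=(\piref(y)/\pil(y))^{1/\beta}$ and $Z_t:=\mathbb E_{\piref}[u^t]$, a direct calculation reduces the two divergences to
\begin{align*}
\KL{\piref}{\pistar_{\mathcal D}} &= \log Z_1 - \mathbb E_{\piref}[\log u], \\
\KL{\piref}{\pistar_{\mathcal D'}} &= \log Z_\alpha - \alpha\,\mathbb E_{\piref}[\log u],
\end{align*}
so the target difference becomes $\log Z_\alpha - \log Z_1 - (\alpha-1)\mathbb E_{\piref}[\log u]$.

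The key step is strict Jensen's inequality applied to the strictly convex map $t\mapsto t^{\alpha}$ on $(0,\infty)$ for $\alpha>1$. Since $u$ is non-constant under $\piref$ (because $\piref\neq\pil$ by the standing assumption), I get $Z_\alpha=\mathbb E_{\piref}[u^\alpha]>(\mathbb E_{\piref}[u])^\alpha=Z_1^\alpha$, hence $\log Z_\alpha-\log Z_1>(\alpha-1)\log Z_1$. Substituting this bound into the difference,
\[
\KL{\piref}{\pistar_{\mathcal D'}}-\KL{\piref}{\pistar_{\mathcal D}} > (\alpha-1)\bigl(\log Z_1-\mathbb E_{\piref}[\log u]\bigr) = (\alpha-1)\,\KL{\piref}{\pistar_{\mathcal D}},
\]
which is strictly positive because $\pistar_{\mathcal D}\neq\piref$ (again, by non-constancy of $u$).

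The main obstacle I anticipate is algebraic bookkeeping: tracking the various proportionality constants introduced when passing from $p^*_{\mathcal D'}\propto p^*_{\mathcal D}^\alpha$ to the normalized policies, and making sure that non-constancy of $u$ is the sole hypothesis needed to upgrade Jensen to strict Jensen. Once the exponential-family tilt form is in hand, strict convexity of $t\mapsto t^{\alpha}$ does all the real work in one line, and no further appeal to Theorem~\ref{def:DIDD} or to the specific value of $\beta$ is required.
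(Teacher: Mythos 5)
Your proposal is correct, and its first half coincides with the paper's: both proofs invoke Corollary~\ref{cor:pw-did} to write the converged policies as exponential tilts $\pistar_{\mathcal D}(y)\propto\piref(y)\,u(y)$ and $\pistar_{\mathcal D'}(y)\propto\piref(y)\,u(y)^{\alpha}$ with $u=(\piref/\pil)^{1/\beta}$, and both reduce the comparison to partition-function terms. Where you diverge is the final convexity step. The paper introduces the cumulant-generating function $K(t)=\log\E_{y\sim\piref}\!\left[(\piref(y)/\pil(y))^{t}\right]$, writes the difference as $K(\alpha/\beta)-K(1/\beta)+\frac{1-\alpha}{\beta}K'(0)$, and concludes via the tangent-line inequality at $t=1/\beta$ together with strict monotonicity of $K'$ (strictness coming from $\piref\neq\pil$), yielding the lower bound $\frac{\alpha-1}{\beta}\bigl(K'(1/\beta)-K'(0)\bigr)$. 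You instead apply strict Jensen once to $x\mapsto x^{\alpha}$ to get $Z_\alpha>Z_1^{\alpha}$ (equivalently $K(\alpha/\beta)>\alpha K(1/\beta)$, the chord-through-zero form of convexity rather than the tangent form) and then need only non-negativity of KL, arriving at the cleaner bound $\KL{\piref}{\pistar_{\mathcal D'}}-\KL{\piref}{\pistar_{\mathcal D}}>(\alpha-1)\,\KL{\piref}{\pistar_{\mathcal D}}$; note your closing appeal to $\pistar_{\mathcal D}\neq\piref$ is superfluous, since the strict Jensen step already forces strict positivity. Your route avoids any differentiability or derivative-comparison argument for $K$, which is a genuine simplification, while the paper's derivative bound isolates the role of the gap $K'(1/\beta)-K'(0)$; your version has the added merit of exhibiting a multiplicative lower bound in terms of $\KL{\piref}{\pistar_{\mathcal D}}$ itself. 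The only hypotheses you use, full support and $\piref\neq\pil$ (so $u$ is non-constant under $\piref$), are exactly the paper's standing assumptions, so the strictness claims are justified.
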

\end{samepage}
\begin{proof}
Let us denote \(Z_{\mathcal D}=\sum_y\piref(y)(\frac{\piref(y)}{\pil(y)})^\frac1\beta\) and \(Z_{\mathcal D'}=\sum_y\piref(y)(\frac{\piref(y)}{\pil(y)})^\frac\alpha\beta\). Observe the following:
\begin{gather*}
    \pistar_{\mathcal D}(y)=\frac{\piref(y)\cdot(\frac{\piref(y)}{\pil(y)})^\frac1\beta}{Z_{\mathcal D}},\quad
    \pistar_{\mathcal D}(y)=\frac{\piref(y)\cdot(\frac{\piref(y)}{\pil(y)})^\frac\alpha\beta}{Z_{\mathcal D'}}.
\end{gather*}

Therefore, we can express the difference in the KL-divergence as
\begin{flalign*}
    \KL{\piref}{\pistar_{\mathcal D'}} - \KL{\piref}{\pistar_{\mathcal D}}&=
    \sum_{y\in\mathcal Y}\piref(y)\log\frac{\piref(y)}{\pistar_{\mathcal D'}(y)}- \sum_{y\in\mathcal Y}\piref(y)\log\frac{\piref(y)}{\pistar_{\mathcal D}(y)}\\
    &=\sum_{y\in\mathcal Y}\piref(y)\log\frac{\pistar_{\mathcal D}(y)}{\pistar_{\mathcal D'}(y)}\\
    &=\log \frac{Z_{\mathcal D'}}{Z_{\mathcal D'}}+\frac{1-\alpha}{\beta}\KL{\piref}{\pil}.
\end{flalign*}
Now, let \(r(y)=\frac{\piref(y)}{\pil(y)}\) and \(X=\log r(y)\).
Also, define the cumulant-generating function \(K(t)\):\begin{gather*}
    K(t)=\log\mathbb E_{\piref}[e^{tX}]=\log\sum_{y\in\mathcal Y}\piref(y)r(y)^t.
\end{gather*}

Then, we have the following:\begin{gather*}
    \log \frac{Z_{\mathcal D'}}{Z_{\mathcal D}}=K(\frac\alpha\beta)-K(\frac1\beta),\quad \KL{\piref}{\pil}=\mathbb E_{\piref}[X]=K'(0),
\end{gather*}
where \(K'(t)=\frac{d}{dt}K(t)\).

Therefore, we obtain the following expression:\begin{gather*}
     \KL{\piref}{\pistar_{\mathcal D'}} - \KL{\piref}{\pistar_{\mathcal D}}=K(\frac\alpha\beta)-K(\frac1\beta)+\frac{1-\alpha}{\beta}K'(0).
\end{gather*}

Since the cumulant-generating function \(K(t)\) is convex and twice differentiable, we have\begin{gather*}
    K(\frac\alpha\beta)\ge K(\frac1\beta)+\left(\frac\alpha\beta-\frac1\beta\right)K'(\frac1\beta)\\
    \iff K(\frac\alpha\beta)- K(\frac1\beta)\ge\left(\frac\alpha\beta-\frac1\beta\right)K'(\frac1\beta).
\end{gather*}
Meanwhile, since \(K'(t)\) is non-decreasing due to convexity, we have \(K'(\frac1\beta)>K'(0)\). Therefore, we arrive at the final relationship:\begin{gather*}
    \KL{\piref}{\pistar_{\mathcal D'}} - \KL{\piref}{\pistar_{\mathcal D}}\ge\frac{\alpha-1}{\beta}\left(K'(\frac1\beta)-K'(0)\right)> 0,
\end{gather*}
where the strict inequality comes from \(\piref\ne\pil\).
\end{proof}

\section{DPO-Projected Gradient (DPO-PG)}\label{app:DPO-PG_cal_dpo}
While several variants of DPO have been proposed to address log-likelihood displacement \citep{pal2024smaug, xiao2024cal}, we observed that these methods exhibit instability when scaled to large datasets (approximately \(100{,}000\) samples) and trained over multiple epochs (\eg, 5 epochs in our experiments of Section~\ref{sec:exp_unc}). A proper alternative that prevents LLD should increase \(\log\pi(\yw)\) while reducing the DPO loss to a comparable extent. Without achieving a comparable reduction in the DPO loss, it becomes difficult to argue that this method has properly learned the underlying preference distribution.

\begin{figure}[h]
    \centering
    \includegraphics[width=1\textwidth]{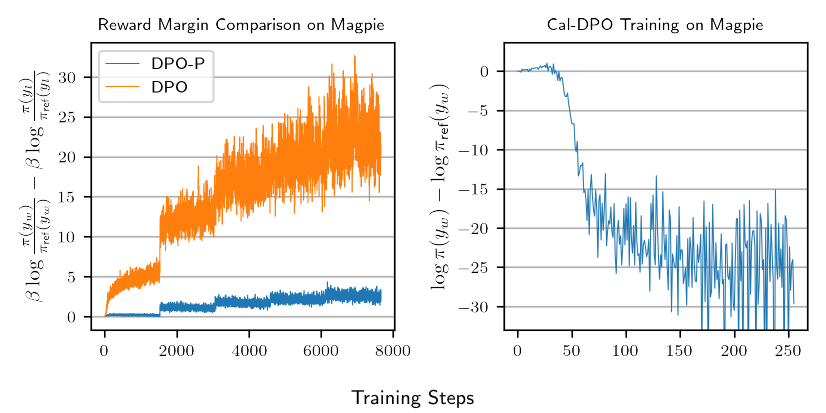}
    \caption{Testing DPOP \citep{pal2024smaug} and Cal-DPO \citep{xiao2024cal} on Magpie dataset. We found that DPOP fails to optimize the log-margin as effectively as vanilla DPO. Meanwhile, we found that Cal-DPO is unstable at preventing log-likelihood displacement.}\label{fig:dpo-v}
\end{figure}

Despite extensive experiments with various hyper-parameters, we failed to find a setting for both DPOP \citep{pal2024smaug} and Cal-DPO \citep{xiao2024cal} which met this criterion reliably (Figure~\ref{fig:dpo-v}). This motivated us to design a new method that reliably prevents log-likelihood displacement while ensuring optimization of the DPO loss. The result is DPO-PG, a method grounded in projected gradient descent.

As its name implies, DPO-Projected Gradient (DPO-PG) leverages projected gradient descent \citep{DBLP:books/cu/BV2014} to reinforce the policy distribution while optimizing the DPO objective. Specifically, it increases \(\log\pi(\yw)\) while maintaining or decreasing \(\log\pi(\yl)\). Due to the log-margin term in the DPO loss, DPO-PG is guaranteed to reduce the DPO loss under sufficiently small step sizes (Corollary~\ref{cor:DPO-PG_pref}).

The primary advantage of using DPO-PG over other DPO variants (\eg, DPOP \citep{pal2024smaug}, Cal-DPO \citep{xiao2024cal}) is that DPO-PG can reliably optimize the DPO loss while increasing both \(\log\pi(\yw)\) and the log-margin \(\log\pi(\yw)-\log\pi(\yl)\), all without introducing any additional hyper-parameters.
We empirically confirm that DPO-PG prevents LLD from Figure~\ref{fig:dpo-pg_logps}, and that it also optimizes the DPO loss to a comparable extent in Figure~\ref{fig:dpo-pg_loss}.

\begin{definition}
    DPO-Projected Gradient (DPO-PG): \(\theta_{k+1}=\theta_k-\eta(\nabla L(\yw)-\frac{\alpha}{||\nabla L(\yl)||^2_2}\nabla L(\yl))\), where \(\theta_k\) denotes the parameter at training step \(k\), \(\eta>0\) is the step size, and \(\alpha=\max(0,\nabla L(\yw)\cdot\nabla L(\yl))\).
\end{definition}
Here, \(L(y)\) is the \textbf{negative} log-likelihood loss: \(-\frac1M\sum_{i=1}^M\log\pi(y^{(i)})\), where \(M\) is the batch size, and \(y^{(i)}\) is the \(i\)-th element in the batch.
In practice, when using any non-SGD optimizer (\eg, Adam \citep{KingBa15}, RMSprop \citep{RMSProp}), we set the parameters' gradient as \(\nabla L(\yw)-\frac{\alpha}{||\nabla L(\yl)||^2_2}\nabla L(\yl)\) and update its parameters following the optimizer's algorithm. For gradient-clipping, we clip the L2 norm of \(\nabla L(\yw)-\frac{\alpha}{||\nabla L(\yl)||^2_2}\nabla L(\yl)\).

We now show that DPO-PG decreases \(L(\yw)\) while maintaining or increasing \(L(\yl)\), for sufficiently small step sizes. We begin with the definition of \textit{descent direction} \citep{DBLP:books/cu/BV2014}:
\begin{definition}
\label{def:desc}
For some function \(f:\mathbb{R}^D\rightarrow\mathbb{R}\), and a point \(\theta\in\mathbb{R}^D\), a direction \(\Delta\theta\in\mathbb{R}^D\) is called a \textit{descent direction} if there exists \(\bar\alpha>0\) such that \(f(\theta+\alpha\Delta\theta)<f(\theta),\forall\alpha\in(0,\bar\alpha)\).
\end{definition}
The following well-known lemma allows one to verify whether a direction is a descent direction of some differentiable objective function \(f\) \citep{DBLP:books/cu/BV2014}.
\begin{lemma}
\label{lem:desc}
Consider a point \(\theta\in\mathbb{R}^D\). Any direction \(\Delta\theta\in\mathbb{R}^D\) satisfying \(\Delta\theta\cdot\nabla f(\theta)<0\) is a descent direction.
\end{lemma}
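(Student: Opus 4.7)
The plan is to reduce this to the definition of the directional derivative (equivalently, a first-order Taylor expansion at $\theta$), which links the inner product $\Delta\theta\cdot\nabla f(\theta)$ to the infinitesimal change in $f$ along the ray $\theta+\alpha\Delta\theta$. Since $f$ is assumed differentiable at $\theta$, the directional derivative exists and equals $\nabla f(\theta)\cdot\Delta\theta$. By hypothesis this quantity is strictly negative, so the one-sided difference quotient $\big(f(\theta+\alpha\Delta\theta)-f(\theta)\big)/\alpha$ tends to a negative limit as $\alpha\downarrow 0$, which I will use to produce the $\bar\alpha$ required by Definition~\ref{def:desc}.

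Concretely, I would proceed in three steps. First, I would write
\[
\lim_{\alpha\downarrow 0}\frac{f(\theta+\alpha\Delta\theta)-f(\theta)}{\alpha}=\nabla f(\theta)\cdot\Delta\theta<0,
\]
invoking differentiability of $f$ at $\theta$ to justify the equality. Second, I would apply the definition of the limit with tolerance $\varepsilon=-\tfrac12\nabla f(\theta)\cdot\Delta\theta>0$ to extract an $\bar\alpha>0$ such that for every $\alpha\in(0,\bar\alpha)$, the difference quotient lies within $\varepsilon$ of its limit and therefore stays strictly negative. Third, I would multiply by $\alpha>0$ to conclude $f(\theta+\alpha\Delta\theta)<f(\theta)$ on $(0,\bar\alpha)$, which is exactly the defining property of a descent direction.

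There is no real obstacle here beyond making the regularity assumption explicit: the lemma is a textbook consequence of the first-order expansion, so the only subtlety is to ensure that the directional derivative is well-defined and coincides with $\nabla f(\theta)\cdot\Delta\theta$. Mere existence of the partial derivatives would not suffice in general, but Fréchet differentiability of the (log-)likelihood loss used in the surrounding DPO-PG analysis is standard for smooth language-model parameterizations, so the hypothesis is mild and uncontroversial.
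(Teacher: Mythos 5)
Your proof is correct: differentiability of $f$ at $\theta$ gives $\lim_{\alpha\downarrow 0}\bigl(f(\theta+\alpha\Delta\theta)-f(\theta)\bigr)/\alpha=\nabla f(\theta)\cdot\Delta\theta<0$, and extracting $\bar\alpha$ from the limit definition and multiplying by $\alpha>0$ yields exactly the property required by Definition~\ref{def:desc}. The paper does not prove this lemma at all---it states it as a well-known fact and cites a standard convex-optimization textbook---and your argument is precisely the canonical first-order difference-quotient proof that the cited source would give, with the differentiability hypothesis (already assumed in the surrounding text) correctly flagged as the only regularity needed.
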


We now analyze the properties of the update direction of DPO-PG: \(\Delta\theta=\theta_{k+1}-\theta_k=-\eta\{\nabla L(\yw)-\frac{\max(0,\nabla L(\yw)\cdot\nabla L(\yl))}{||\nabla L(\yl)||_2^2}\nabla L(\yl)\}\). The following theorem states that DPO-PG increases the log-likelihood of \(\yw\).
\begin{theorem}
\label{lem:DPO-PG_inc}
\(\Delta\theta\) is a descent direction of the negative log-likelihood of the chosen responses \(-\frac1M\Sigma_{i=1}^M \log\pi(\yw^{(i)})=L(\yw)\).
\end{theorem}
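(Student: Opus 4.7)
The plan is to invoke Lemma~\ref{lem:desc} directly: it suffices to show that $\Delta\theta\cdot\nabla L(\yw)<0$ (under the standard non-degeneracy assumption $\nabla L(\yw)\neq 0$ and that $\nabla L(\yw)$ is not positively parallel to $\nabla L(\yl)$). Since $\Delta\theta$ is just $-\eta$ times the orthogonal-projection-style correction of $\nabla L(\yw)$ against $\nabla L(\yl)$, the computation reduces to a Cauchy–Schwarz argument.

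First I would expand the inner product, obtaining
\begin{equation*}
\Delta\theta\cdot\nabla L(\yw) \;=\; -\eta\left\{\|\nabla L(\yw)\|_2^2 \;-\; \frac{\alpha}{\|\nabla L(\yl)\|_2^2}\bigl(\nabla L(\yw)\cdot\nabla L(\yl)\bigr)\right\},
\end{equation*}
where $\alpha=\max(0,\nabla L(\yw)\cdot\nabla L(\yl))$. Then I would split into two cases based on the sign of $\nabla L(\yw)\cdot\nabla L(\yl)$.

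In the first case, $\nabla L(\yw)\cdot\nabla L(\yl)\le 0$, so $\alpha=0$ and the expression reduces to $-\eta\|\nabla L(\yw)\|_2^2<0$ whenever the chosen-side gradient is nonzero. In the second case, $\nabla L(\yw)\cdot\nabla L(\yl)>0$, so $\alpha=\nabla L(\yw)\cdot\nabla L(\yl)$, and the bracket becomes
\begin{equation*}
\|\nabla L(\yw)\|_2^2 \;-\; \frac{\bigl(\nabla L(\yw)\cdot\nabla L(\yl)\bigr)^2}{\|\nabla L(\yl)\|_2^2},
\end{equation*}
which is nonnegative by Cauchy–Schwarz and strictly positive unless $\nabla L(\yw)$ and $\nabla L(\yl)$ are collinear. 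Either way, $\Delta\theta\cdot\nabla L(\yw)<0$, so by Lemma~\ref{lem:desc} there exists $\bar\alpha>0$ such that $L(\yw)$ strictly decreases along $\Delta\theta$ for any step size in $(0,\bar\alpha)$.

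The main obstacle is the degenerate case in which $\nabla L(\yw)$ and $\nabla L(\yl)$ are positively collinear: here $\Delta\theta=0$ and the inequality in Lemma~\ref{lem:desc} becomes an equality, so strictly speaking $\Delta\theta$ is not a descent direction but merely non-ascending. This should be addressed either by an explicit non-collinearity assumption (which is generic in high-dimensional parameter spaces and essentially always holds in practice with stochastic mini-batches) or by restating the claim as \emph{non-ascent} along $\yw$ combined with the complementary guarantee on $\yl$ that the next theorem (and Corollary~\ref{cor:DPO-PG_pref}) presumably provides.
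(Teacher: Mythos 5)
Your proposal is correct and follows essentially the same route as the paper's proof: expand $\Delta\theta\cdot\nabla L(\yw)$, split on the sign of $\nabla L(\yw)\cdot\nabla L(\yl)$, and apply Cauchy--Schwarz in the positive-correlation case before invoking Lemma~\ref{lem:desc}. Your remark about the positively collinear (or zero-gradient) degenerate case is a fair caveat that the paper's proof silently skips by asserting the Cauchy--Schwarz inequality strictly.
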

\begin{proof}
Regardless of the sign value of \(\nabla L(\yw)\cdot\nabla L(\yl)\), we can show that \(\Delta\theta\cdot\nabla L(\yw)<0\).

   \textbf{Case 1:} If we have \(\nabla L(\yw)\cdot\nabla L(\yl)>0\), it follows that\begin{gather*}
       \Delta\theta\cdot\nabla L(\yw)=-\eta\{||\nabla L(\yw)||^2_2-\frac{\nabla L(\yw)\cdot\nabla L(\yl)}{||\nabla L(\yl)||^2_2}\nabla L(\yl)\cdot\nabla L(\yw)\}\\
       =-\frac\eta{||\nabla L(\yl)||^2_2}\{||\nabla L(\yw)||^2_2\cdot||\nabla L(\yl)||^2_2-(\nabla L(\yl)\cdot\nabla L(\yw))^2\}<0,
   \end{gather*}
where the last inequality follows from the Cauchy-Schwarz inequality: \(||\nabla L(\yw)||^2_2\cdot||\nabla L(\yl)||^2_2>||\nabla L(\yw)\cdot\nabla L(\yl)||^2_2>0.\)

   \textbf{Case 2:} Otherwise, we have \(\nabla L(\yw)\cdot\nabla L(\yl)\leq0\) and it follows that
   \(\Delta\theta\cdot\nabla L(\yw)=-\eta||\nabla L(\yw)||^2_2<0.\)
\end{proof}

Conversely, we can show that DPO-PG decreases or maintains the log-likelihood of \(\yl\).
\begin{theorem}
\label{lem:DPO-PG_dec}
\(\Delta\theta\) is \textbf{not} a descent direction of the negative log-likelihood of the rejected responses \(-\frac1M\Sigma_{i=1}^M \log\pi(\yl^{(i)})=L(\yl)\).
\end{theorem}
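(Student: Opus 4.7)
The plan is to mirror the case analysis used in the proof of Theorem~\ref{lem:DPO-PG_inc}, but now tracking the inner product $\Delta\theta \cdot \nabla L(\yl)$ instead of $\Delta\theta \cdot \nabla L(\yw)$. By Lemma~\ref{lem:desc} (taken in its contrapositive spirit), it suffices to show that this inner product is non-negative so that $L(\yl)$ does not strictly decrease to first order along $\Delta\theta$.

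The first step is a direct expansion. Substituting the definition of the DPO-PG update, a short calculation gives
\[
\Delta\theta \cdot \nabla L(\yl) = -\eta\bigl(\nabla L(\yw)\cdot \nabla L(\yl) - \alpha\bigr),
\]
where the norm in the denominator cancels against $\|\nabla L(\yl)\|_2^2$, leaving only the scalar $\alpha = \max(0, \nabla L(\yw)\cdot \nabla L(\yl))$.

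Next, I would split on the sign of $\nabla L(\yw)\cdot \nabla L(\yl)$, in direct parallel with the proof of Theorem~\ref{lem:DPO-PG_inc}. When $\nabla L(\yw)\cdot \nabla L(\yl) > 0$, the clipped coefficient $\alpha$ equals $\nabla L(\yw)\cdot \nabla L(\yl)$ exactly, so the bracketed expression vanishes and $\Delta\theta \cdot \nabla L(\yl) = 0$; this is precisely the projection step removing the component of $\nabla L(\yw)$ aligned with $\nabla L(\yl)$. When $\nabla L(\yw)\cdot \nabla L(\yl) \le 0$, the clip yields $\alpha = 0$, so $\Delta\theta \cdot \nabla L(\yl) = -\eta\,\nabla L(\yw)\cdot \nabla L(\yl) \ge 0$. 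In both cases the directional derivative is non-negative, which prevents $L(\yl)$ from strictly decreasing along $\Delta\theta$ for small step sizes and therefore rules out $\Delta\theta$ as a descent direction for $L(\yl)$.

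The main subtlety, rather than any heavy calculation, lies at the boundary case where $\Delta\theta \cdot \nabla L(\yl) = 0$, since Lemma~\ref{lem:desc} only states a one-way sufficient condition and a vanishing first-order term leaves room in principle for higher-order descent. I would handle this by reading the claim in the first-order sense consistent with the paper's usage: the projection construction is specifically designed so that whenever $\nabla L(\yw)$ would pull the loss on $\yl$ downward, that component is removed, leaving $\Delta\theta$ orthogonal to $\nabla L(\yl)$ at first order. Together with the strict ascent obtained in the opposing-gradient case, this yields the stated non-descent property.
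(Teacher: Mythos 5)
Your proof is correct and follows essentially the same route as the paper: the paper likewise computes \(\Delta\theta\cdot\nabla L(\yl)=-\eta\{\nabla L(\yw)\cdot\nabla L(\yl)-\max(0,\nabla L(\yw)\cdot\nabla L(\yl))\}\ge 0\) (your case split just makes the clipping explicit) and concludes that \(\Delta\theta\) is orthogonal or an ascent direction for \(L(\yl)\). The boundary subtlety you flag when the inner product vanishes is also present in the paper, which silently adopts the same first-order reading of \q{not a descent direction}, so your treatment matches its level of rigor.
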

\begin{proof}
    We have \(\Delta\theta\cdot\nabla L(\yl)=-\eta\{\nabla L(\yw)\cdot\nabla L(\yl)-\max(0,\nabla L(\yw)\cdot\nabla L(\yl))\}\geq0\). In other words, \(\Delta\theta\) is either orthogonal or an ascent direction to the negative log-likelihood of the rejected responses \(\yl\).
\end{proof}

Meanwhile, various offline preference optimization methods can be characterized as solving the following objective \citep{tang2024generalizedpreferenceoptimizationunified}:
\begin{gather*}
    \arg\min_\theta\mathbb E_{(\yw,\yl)\in\mathcal D}\left[f(\beta\log\frac{\pi_\theta(\yw)}{\pi_\textup{ref}(\yw)}-\beta\log\frac{\pi_\theta(\yl)}{\pi_\textup{ref}(\yl)})\right],
\end{gather*}
where \(f\) denotes any valid supervised binary classification loss function \citep{hastie2009elements}.
As a consequence of Theorems~\ref{lem:DPO-PG_inc} and \ref{lem:DPO-PG_dec}, DPO-PG is able to optimize a wide variety of preference optimization objectives including DPO \citep{tang2024generalizedpreferenceoptimizationunified}.
\begin{corollary} \label{cor:DPO-PG_pref}
For any valid supervised binary classification loss function \(f\) with \(f'(\cdot)<0\), \(\Delta\theta\) is a descent direction to the loss \(f(\beta\cdot(\log\frac{\pi_\theta(\yw)}{\pi_\textup{ref}(\yw)}-\log\frac{\pi_\theta(\yl)}{\pi_\textup{ref}(\yl)}))\) where \(\beta>0\).
\end{corollary}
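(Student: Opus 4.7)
The plan is to reduce Corollary~\ref{cor:DPO-PG_pref} to Theorems~\ref{lem:DPO-PG_inc} and \ref{lem:DPO-PG_dec} via the chain rule together with Lemma~\ref{lem:desc}. Specifically, I would compute the gradient of the composite preference loss with respect to $\theta$ and then check its inner product against the DPO-PG update direction $\Delta\theta$, showing that the inner product is strictly negative.

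First, I would abbreviate $u(\theta)\coloneqq \beta\bigl(\log\tfrac{\pi_\theta(\yw)}{\pi_\textup{ref}(\yw)}-\log\tfrac{\pi_\theta(\yl)}{\pi_\textup{ref}(\yl)}\bigr)$, so that the loss of interest is $f(u(\theta))$. Since $\pi_\textup{ref}$ does not depend on $\theta$, the chain rule gives $\nabla_\theta f(u(\theta)) = f'(u(\theta))\cdot\beta\bigl(\nabla_\theta\log\pi_\theta(\yw)-\nabla_\theta\log\pi_\theta(\yl)\bigr)$. Using the definitions $L(\yw)=-\tfrac{1}{M}\sum_i\log\pi_\theta(\yw^{(i)})$ and $L(\yl)=-\tfrac{1}{M}\sum_i\log\pi_\theta(\yl^{(i)})$, this simplifies to $\nabla_\theta f(u(\theta)) = -\beta f'(u(\theta))\bigl(\nabla L(\yw)-\nabla L(\yl)\bigr)$.

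Next, I would take the inner product with the DPO-PG direction $\Delta\theta$ and split it:
\begin{equation*}
\Delta\theta\cdot\nabla_\theta f(u(\theta)) \;=\; -\beta f'(u(\theta))\bigl[\Delta\theta\cdot\nabla L(\yw)-\Delta\theta\cdot\nabla L(\yl)\bigr].
\end{equation*}
By Theorem~\ref{lem:DPO-PG_inc} the first term inside the brackets is strictly negative, and by Theorem~\ref{lem:DPO-PG_dec} the second term is non-negative, so the bracketed quantity is strictly negative. Combining with $\beta>0$ and the hypothesis $f'(\cdot)<0$ (so that $-\beta f'(\cdot)>0$), we obtain $\Delta\theta\cdot\nabla_\theta f(u(\theta))<0$. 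Invoking Lemma~\ref{lem:desc} then concludes that $\Delta\theta$ is a descent direction of the preference loss.

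I do not anticipate a real obstacle here, since the corollary is essentially a bookkeeping consequence of the two earlier theorems: all the heavy lifting (Cauchy-Schwarz in Theorem~\ref{lem:DPO-PG_inc}, the non-negativity sign from the $\max(0,\cdot)$ term in Theorem~\ref{lem:DPO-PG_dec}) has already been done. The only mildly subtle point worth stating explicitly in the write-up is the sign bookkeeping between $\nabla L$ and $\nabla\log\pi_\theta$, plus the observation that the hypothesis $f'<0$ is exactly what is needed to flip the sign of the bracketed expression into a descent condition; standard preference losses such as $f(x)=-\log\sigma(x)$ (DPO) and the hinge loss satisfy this, so the corollary does cover the intended family of objectives.
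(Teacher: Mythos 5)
Your proof is correct and follows essentially the same route as the paper: chain rule on the composite loss, splitting the inner product with \(\Delta\theta\) into the \(\nabla L(\yw)\) and \(\nabla L(\yl)\) terms controlled by Theorems~\ref{lem:DPO-PG_inc} and \ref{lem:DPO-PG_dec}, and concluding via Lemma~\ref{lem:desc}. Moreover, your sign bookkeeping (\(\nabla_\theta\log\pi_\theta=-\nabla L\), hence \(\Delta\theta\cdot\nabla L(\yw)<0\) and \(\Delta\theta\cdot\nabla L(\yl)\ge 0\), so the bracketed difference is negative and \(-\beta f'(\cdot)>0\) flips it) is the correct one; the paper's own write-up contains two compensating sign slips at exactly these points, so your version is the cleaner rendering of the same argument.
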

\begin{proof}
\begin{gather*}
    \Delta\theta\cdot\nabla f(\beta\log\frac{\pi_\theta(\yw)}{\pi_\textup{ref}(\yw)}-\beta\log\frac{\pi_\theta(\yl)}{\pi_\textup{ref}(\yl)})\\
    =\Delta\theta\cdot\beta f'\left(\beta\log\frac{\pi_\theta(\yw)}{\pi_\textup{ref}(\yw)}-\beta\log\frac{\pi_\theta(\yl)}{\pi_\textup{ref}(\yl)}\right)(\nabla L(\yw)-\nabla L(\yl))\\
    =\beta\underbrace{ f'\left(\beta\log\frac{\pi_\theta(\yw)}{\pi_\textup{ref}(\yw)}-\beta\log\frac{\pi_\theta(\yl)}{\pi_\textup{ref}(\yl)}\right)}_{f'(\cdot)<0}(\underbrace{\Delta\theta\cdot\nabla L(\yw)}_{>0}-\underbrace{\Delta\theta\cdot\nabla L(\yl)}_{\le 0}).
\end{gather*}
From Lemma~\ref{lem:DPO-PG_inc}, we have \(\Delta\theta\cdot\nabla L(\yw)>0\), and from Lemma~\ref{lem:DPO-PG_dec}, we have \(\Delta\theta\cdot\nabla L(\yl)\leq0\). Thus, we have \((\Delta\theta\cdot\nabla L(\yw)-\Delta\theta\cdot\nabla L(\yl))>0\). Since \(\beta>0\) and \(\beta f'(\cdot)<0\), it follows that \(\Delta\theta\cdot\nabla f(\beta\log\frac{\pi_\theta(\yw)}{\pi_\textup{ref}(\yw)}-\beta\log\frac{\pi_\theta(\yl)}{\pi_\textup{ref}(\yl)})<0\).
\end{proof}

To summarize, Lemma~\ref{lem:DPO-PG_inc} ensures that only \(\log\pi(\yw)\) (and not \(\log\pi(\yl)\)) increases during training, for sufficiently small step sizes. This ensures policy reinforcement with respect to \(\piref\). Corollary~\ref{cor:DPO-PG_pref} further ensures that DPO-PG optimizes the DPO loss, too. We empirically validate that DPO prevents LLD in Figure~\ref{fig:dpo-pg_logps}, and also confirm that DPO-PG successfully optimizes the DPO loss in Figure~\ref{fig:dpo-pg_loss}.

\newpage
\section{Additional experimental results}
\begin{figure}[h]
\centering
\includegraphics[width=\textwidth]{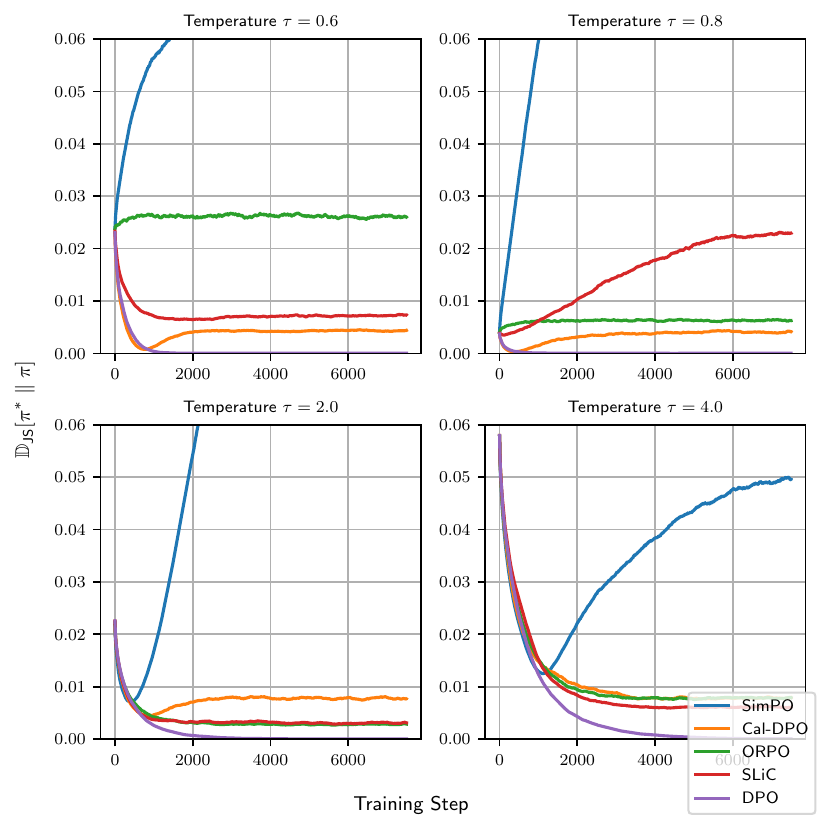}
\caption{Validation of Theorem~\ref{th:opt}: Comparison of the Jensen-Shannon Divergence \(\JS[\pistar \Vert \pi]\) during training using different objectives on the synthetic dataset of Section~\ref{sec:experiments}. Standard DPO (\(r=\log(\pi/\piref)\), purple) consistently minimizes the divergence to the target policy \(\pistar\). This demonstrates its optimality when preferences encode the Differential Information required to update the reference policy \(\piref\) into the target policy \(\pistar\).}
\label{fig:synth_opt}
\end{figure}

\begin{figure}[h]
    \centering
    \includegraphics[width=1\textwidth]{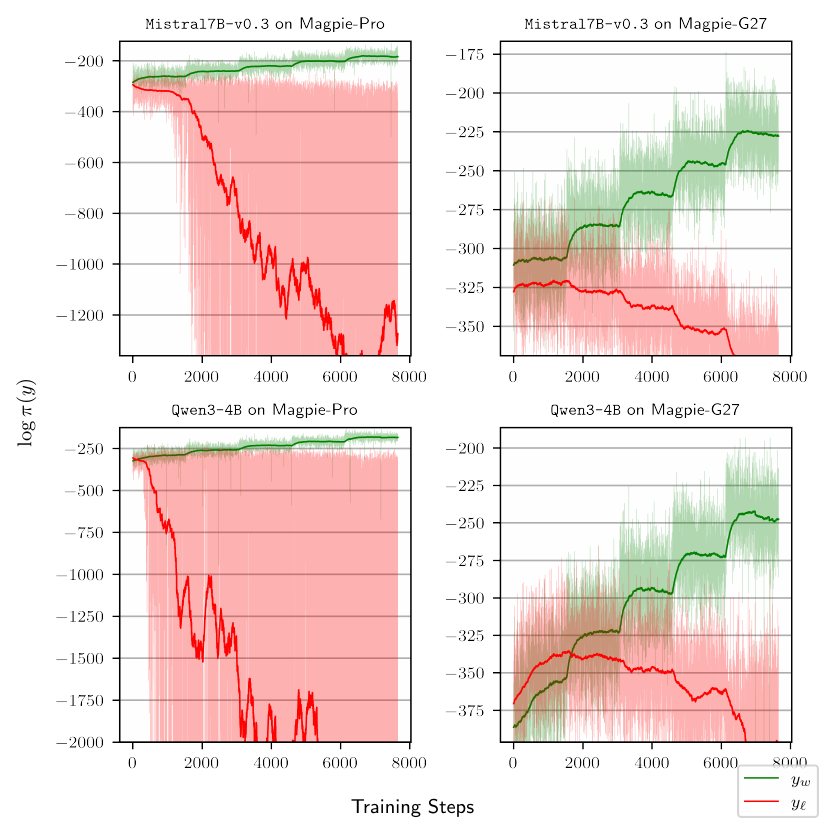}
    \caption{Log-likelihood change for DPO-PG across all experimental configurations. Overall, DPO-PG consistently increases the log-likelihood of \(\yw\), while decreasing or maintaining the log-likelihood of \(\yl\).}\label{fig:dpo-pg_logps}
\end{figure}

\begin{figure}[h]
    \centering
    \includegraphics[width=1\textwidth]{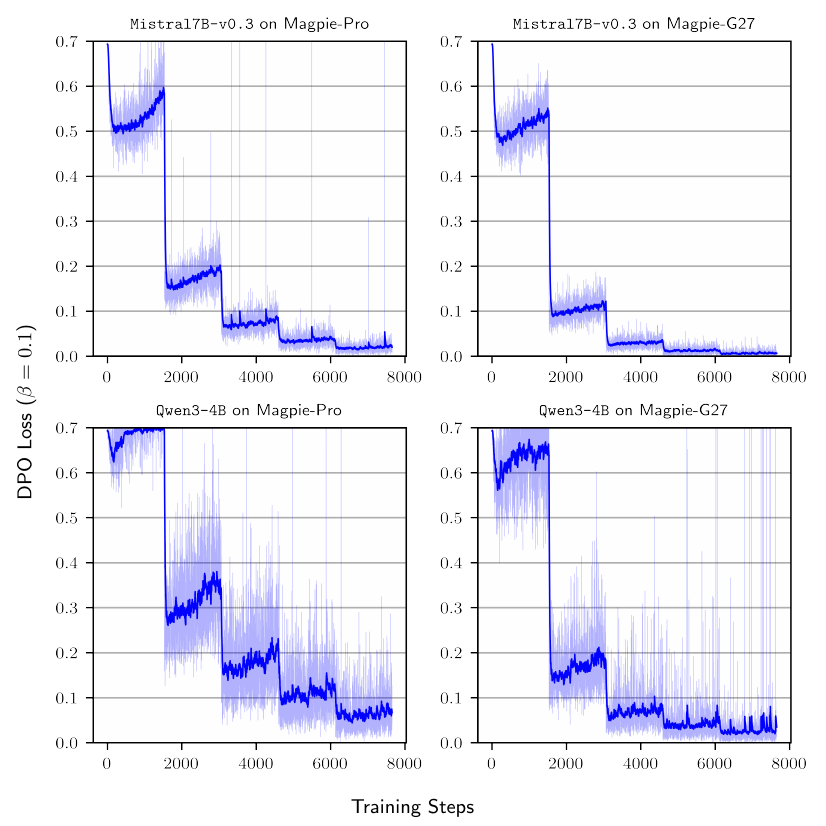}
    \caption{DPO loss for DPO-PG across all experimental configurations. The DPO loss is computed using \(\beta=0.1\). DPO-PG is able to optimize the DPO loss regardless of the model architecture or dataset, validating Corollary~\ref{cor:DPO-PG_pref}. In conjunction with Figure~\ref{fig:dpo-pg_logps}, DPO-PG is able to prevent log-likelihood displacement while still optimizing the DPO objective.}\label{fig:dpo-pg_loss}
\end{figure}

\begin{figure}[h]
    \centering
    \includegraphics[width=\textwidth]{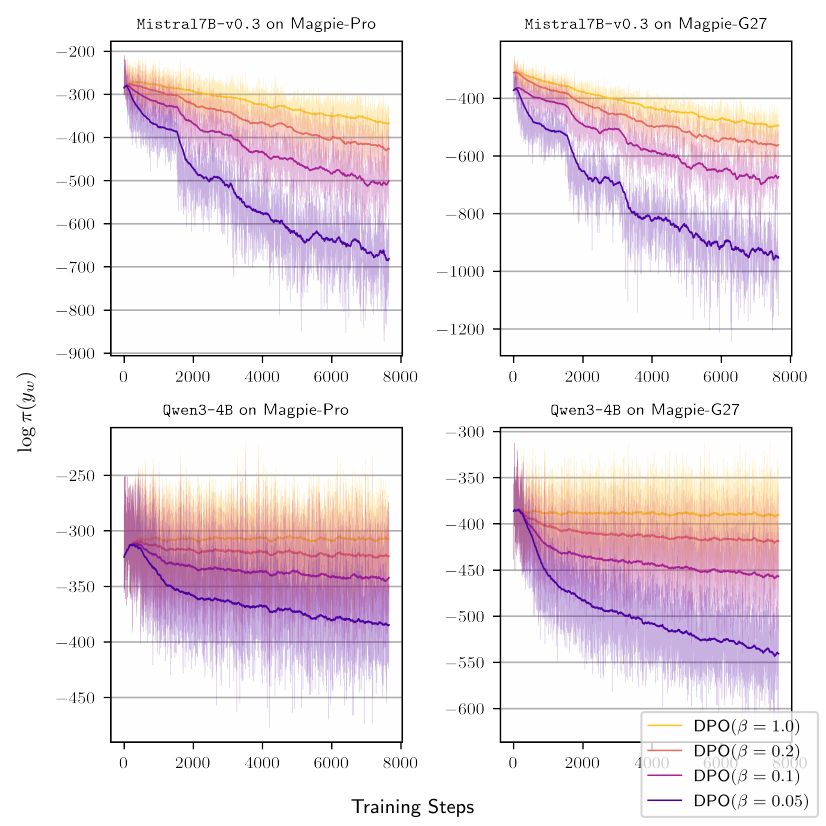}
    \caption{Log-likelihood change of \(\yw\) for DPO across all experimental configurations. The log-likelihood of chosen responses decreases throughout the training process, indicating log-likelihood displacement.}\label{fig:dpo_chosen_logps}
\end{figure}

\begin{figure}[h]
    \centering
    \includegraphics[width=\textwidth]{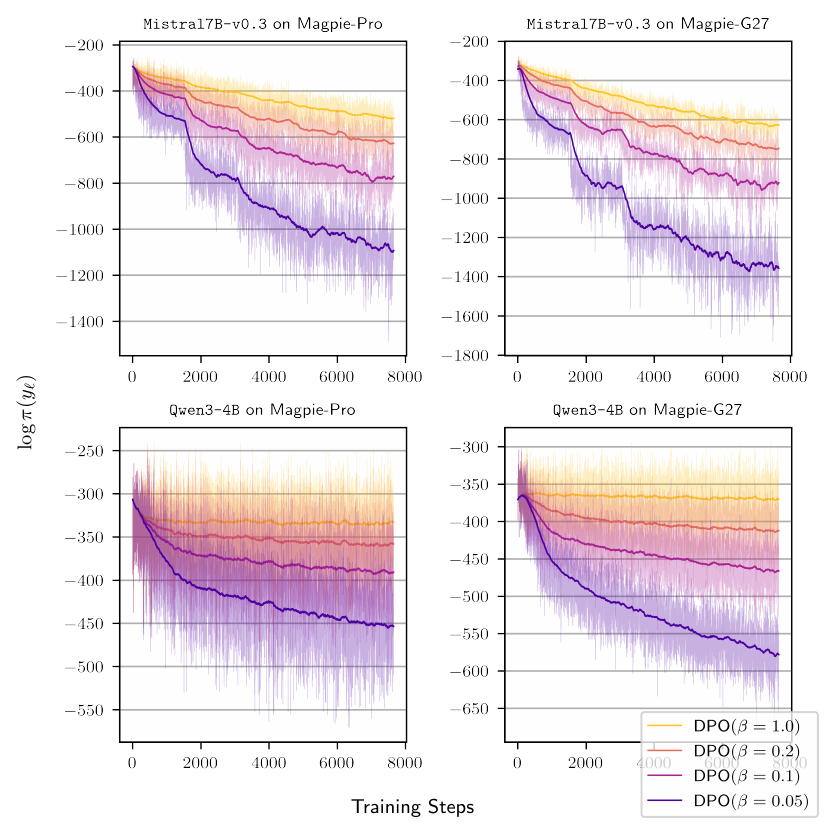}
    \caption{Log-likelihood change of \(\yl\) for DPO across all experimental configurations.}\label{fig:dpo_rejected_logps}
\end{figure}

\begin{table}[h]
\centering
\caption{Evaluation results for open-ended instruction-following. We report the win-rate [\%] for Arena-Hard-v0.1 and ELO score for Wild-Bench-v2, with the 95\% confidence interval. We also specify the selected best epoch following the procedure in Appendix~\ref{app:exp_details_real}, and highlight the model with the best Arena-Hard win-rate in \textbf{bold}.
The DID entropy (\(H(\DID\pistar\piref)\), [nats]) is estimated by importance sampling (Appendix~\ref{app:entropy}).
Standard DPO, which exhibits LLD and learns a high-entropy DID, outperforms DPO-PG, which prevents LLD and learns a lower-entropy DID.
This suggests that knowledge required for such open-ended tasks is associated with high-entropy DID.
}\label{tab:extended_overall_exp}
\begin{subtable}{\textwidth}
    \centering
    \caption{\texttt{Mistral7B-v0.3} trained on Magpie-Pro}
    \label{subtab:mistral_pro}
     \begin{tabular}{r c c c c}
         \toprule
       Method  & Best Epoch & \(H(\DID\pistar\piref)\) & Arena-Hard-v0.1 & Wild-Bench-v2 \\
        \midrule
DPO \(\beta=1.0\) & 3 & 1123.23 & 19.1 (-1.4, 2.0) & 1141.47 (-10.17, 10.36) \\
DPO \(\beta=0.2\) & 4 & 1303.44 & 18.5 (-1.4, 1.6) & 1145.34 (-9.79, 11.10) \\
DPO \(\beta=0.1\) & 1 & \textbf{1253.89} & \textbf{23.4 (-1.9, 2.0)} & \textbf{1146.63 (-11.99, 9.52)} \\
DPO \(\beta=0.05\) & 1 & 970.21 &  22.4 (-1.7, 1.9) & 1145.32 (-14.83, 12.52) \\
\hdashline 
DPO-PG & 5 & 495.12 & 19.6 (-1.9, 1.6) & 1129.92 (-12.92, 11.83) \\
        \bottomrule
    \end{tabular}
\end{subtable}

\vspace{\floatsep}
\begin{subtable}{\textwidth}
    \centering
    \caption{\texttt{Mistral7B-v0.3} trained on Magpie-G27}
    \label{subtab:mistral_air}
    \begin{tabular}{r c c c c}
        \toprule
        Method  & Best Epoch & \(H(\DID\pistar\piref)\) & Arena-Hard-v0.1 & Wild-Bench-v2 \\
        \midrule
DPO \(\beta=1.0\) & 4 & \textbf{836.56} & \textbf{30.4 (-2.2, 2.0)} & \textbf{1140.56 (-12.72, 12.72)} \\
DPO \(\beta=0.2\) & 2 & 576.85 & 30.0 (-2.6, 1.9) & 1145.25 (-13.79, 13.36) \\
DPO \(\beta=0.1\) & 2 & 509.92 & 27.7 (-2.0, 2.4) & 1146.87 (-13.20, 10.76) \\
DPO \(\beta=0.05\) & 1 & 694.96 & 27.0 (-2.3, 2.0) & 1149.51 (-14.38, 14.72) \\
\hdashline 
DPO-PG & 5 & 378.07 & 24.0 (-1.9, 1.4) & 1130.62 (-15.76, 15.07) \\
        \bottomrule
    \end{tabular}
\end{subtable}

\vspace{\floatsep}
\begin{subtable}{\textwidth}
    \centering
    \caption{\texttt{Qwen3-4B} trained on Magpie-Pro}
    \label{subtab:gemma_pro}
     \begin{tabular}{r c c c c}
         \toprule
        Method  & Best Epoch & \(H(\DID\pistar\piref)\)  & Arena-Hard-v0.1 & Wild-Bench-v2 \\
        \midrule
DPO \(\beta=1.0\) & 2 & 9158.43 & 30.7 (-1.4, 2.0) & 1134.39 (-15.73, 12.99) \\
DPO \(\beta=0.2\) & 3 & 4765.94 & 28.1 (-2.2, 2.1) & 1146.17 (-11.92, 11.28) \\
DPO \(\beta=0.1\) & 4 & 7663.28 & 27.5 (-1.9, 1.9) & 1148.22 (-8.99, 9.43) \\
DPO \(\beta=0.05\) & 4 & \textbf{6801.18} & \textbf{43.7 (-2.9, 2.2)} & \textbf{1164.49 (-9.68, 13.13)} \\
\hdashline 
DPO-PG & 5 & 388.24 & 37.4 (-2.2, 2.4) & 1148.50 (-15.64, 11.74) \\
        \bottomrule
    \end{tabular}
\end{subtable}

\vspace{\floatsep}
\begin{subtable}{\textwidth}
    \centering
    \caption{\texttt{Qwen3-4B} trained on Magpie-G27}
    \label{subtab:gemma_air}
    \begin{tabular}{r c c c c}
         \toprule
        Method  & Best Epoch & \(H(\DID\pistar\piref)\) & Arena-Hard-v0.1 & Wild-Bench-v2 \\
        \midrule
DPO \(\beta=1.0\) & 2 & 6606.27 & 43.1 (-2.9, 2.5) & 1157.95 (-14.05, 16.92) \\
DPO \(\beta=0.2\) & 5 & 14744.58 & 48.8 (-2.5, 2.8) & 1165.78 (-11.16, 11.72) \\
DPO \(\beta=0.1\) & 3 & 3048.57 & 53.1 (-2.5, 2.3) & 1173.52 (-15.24, 13.22) \\
DPO \(\beta=0.05\) & 4 & \textbf{11705.65} & \textbf{54.0 (-2.7, 2.4)} & \textbf{1177.44 (-12.57, 13.84)} \\
\hdashline 
DPO-PG & 4 & 400.75 & 40.0 (-2.8, 2.1) & 1160.97 (-11.34, 12.47) \\
        \bottomrule
    \end{tabular}
\end{subtable}
\end{table}

\begin{figure}[h]
    \centering
    \includegraphics{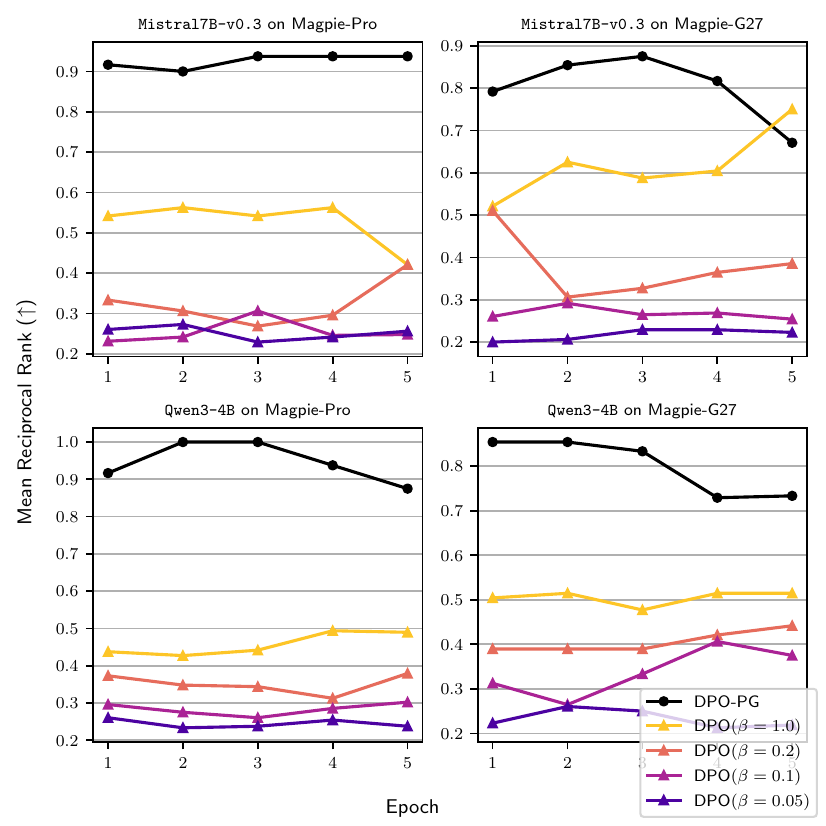}
    \caption{Mean reciprocal rank (MRR) across 8 knowledge-intensive QA benchmarks during training. The MRR is computed following the procedure in Appendix~\ref{app:exp_details_real}. Preventing LLD (DPO-PG) outperforms standard DPO which exhibits LLD. As DPO-PG learns a low-entropy DID compared to standard DPO (Table~\ref{tab:extended_overall_exp}), this suggests that the knowledge for factual QA is mainly associated with low-entropy DID.}
    \label{fig:extended_exp_qa}
\end{figure}

\clearpage
\section{Experimental setup}\label{app:exp_details_ref}
\subsection{Controlled setting}\label{app:exp_details_synth}
We conduct controlled experiments involving Energy Based Models (EBMs) in a free-tier Google Colaboratory\footnote{\href{https://colab.google/}{https://colab.google/}} CPU environment, using PyTorch \citep{paszke2019pytorchimperativestylehighperformance}.
We use \texttt{torch.float32} as the default data type.
We set the total class size as \(32\), and use a batch size of \(512\) and fix the training seed to 42 for reproducibility.
We utilize the RMSprop \citep{RMSProp} optimizer with gradient clipping at maximum norm of \(1.0\). We use a constant learning rate of \(0.001\).
\paragraph{Figures~\ref{fig:synth_pref} and \ref{fig:synth_opt}.}
For fair comparison, we follow \citet{meng2024simpo} in extensively searching the hyper-parameters for the following baseline methods:\begin{itemize}
\item SLiC \citep{zhao2023slic}: \(\beta\in\{0.1, 0.5, 1.0, 2.0\},\lambda\in\{0.1, 0.5, 1.0, 10.0\}\).
\item ORPO \citep{hong2024reference}: \(\beta\in\{0.1, 0.5, 1.0, 2.0\}\).
\item SimPO \citep{meng2024simpo}: \(\beta\in\{2.0, 2.5\},\gamma\in\{0.3, 0.5, 1.0, 1.2, 1.4, 1.6\}\).
\item Cal-DPO \citep{xiao2024cal}: \(\beta\in\{0.001, 0.002, 0.003, 0.01, 0.1\}\).
\end{itemize}
The best hyper-parameter is chosen based on the minimum value of \(\JS[\pistar\Vert\pi]\) achieved through-out the training process.

For Figure~\ref{fig:synth_pref} (left) we train for a total of 10,000 steps.
For Figure~\ref{fig:synth_pref} (right) and \ref{fig:synth_opt}, we train for a total of 7,500 steps due to the large number of training configurations as listed above.

\paragraph{Figure~\ref{fig:lld}.}
We train for a total of 7,500 training steps.
The left plot tests \(\beta\in\{0.05, 0.1, 0.5, 1, 2, 5\}\), and the right plot tests \(\beta\in\{1, 2, 4, 8, 16, 32\}\) following the \(\beta\) conditions in Theorem~\ref{thm:dyn}.

\paragraph{Figure~\ref{fig:explr}.}
We train for a total of 20,000 training steps.
For the baseline dataset \(\mathcal D\) (\(\alpha=1\)), we test \(\beta\in\{0.25, 0.5, 1, 2, 4\}\), and for the strengthened dataset  \(\mathcal D'\) (\(\alpha=2\)), we test \(\beta\in\{0.5, 1, 2, 4, 8\}\).

\paragraph{Figure~\ref{fig:opt_rej}.}
We train for a total of 5,000 training steps, averaging over five training seeds: [42, 43, 44, 45, 46]. We measure the converged JS-divergence by averaging the \(\JS[\pistar\Vert\pi]\) of the last 50 training steps.

\subsection{Real-world setting}\label{app:exp_details_real}
\paragraph{Magpie-G27 dataset.}
Magpie-G27 is an instruction-following preference dataset built from the prompts of Magpie-Air\footnote{\url{https://huggingface.co/datasets/Magpie-Align/Magpie-Air-DPO-100K-v0.1}} and completed with responses generated by a stronger model (\texttt{google/gemma-2-27b-it}) \citep{gemma_2024}. Prompts in Magpie-G27 are disjoint from those in the Magpie-Pro dataset\footnote{\url{https://huggingface.co/datasets/Magpie-Align/Magpie-Llama-3.1-Pro-DPO-100K-v0.1}}. For each prompt, we sample five completions via \texttt{vLLM} \citep{kwon2023efficient} using the following sampling configuration:

{\centering
  \texttt{\{n=5, temperature=0.9, top\_p=1, max\_tokens=4096, seed=42\}}.\par
}

We then score these completions with a strong off-the-shelf reward model \texttt{Skywork/Skywork-Reward-Gemma-2-27B-v0.2} \citep{liu2024skywork} and select the highest- and lowest-scoring responses as \(\yw\) and \(\yl\), respectively.

\paragraph{Training setup.}
To isolate the impact of alignment methods, we use pre-trained base models (\ie, not instruction-tuned) paired with the official chat templates of their instruction-tuned counterparts. Specifically, we utilize the chat-template of \texttt{mistralai/Mistral-7B-Instruct-v0.3} for \texttt{Mistral7B-v0.3}, and the chat-template of \texttt{Qwen/Qwen3-4B-Instruct-2507} for \texttt{Qwen3-4B} with its thinking-tags removed.

\textbf{Reference policy setup.} To prepare the reference policy, we fine-tune the base model on the chosen responses, following standard practice \citep{rafailov2024direct, rafailov2024rqlanguagemodel}. We train for one epoch with an effective batch size of 256, using the Adam optimizer \citep{KingBa15} (default \(\beta_0,\beta_1\); weight decay \(=0\)). Training proceeds with a constant learning-rate of \(5\times10^{-6}\) and a linear warm-up over the first 10\% of steps. The objective is standard cross-entropy loss applied to the full token sequence (including prompts and chat-template tokens). We fix the random seed to 0.

\paragraph{Preference optimization.}
During the alignment phase, we train for five epochs with an effective batch size of 64 using RMSprop \citep{RMSProp} with no weight decay. We adopt a constant learning rate of \(1\times10^{-6}\), with 150-step linear warm-up and compute loss only over generated completions.
For \texttt{Qwen3-4B} under DPO-PG, we increase the learning rate to \(1\times10^{-5}\), as this setting leads to more effective optimization, preventing LLD while optimizing the DPO loss (Appendix~\ref{app:DPO-PG_cal_dpo}). We fix the random seed to 1. Models checkpoints are saved after each epoch and trained in \texttt{bfloat16} precision. For DPO trained models, we test \(\beta\in\{0.05, 0.1, 0.2, 1.0\}\).

\paragraph{Infrastructure and throughput.}
All experiments use PyTorch FSDP \citep{zhao2023pytorchfsdpexperiencesscaling} on NVIDIA A100 GPUs, with prompt lengths capped at 2,048 tokens and total sequence lengths at 4,096 tokens. Training \texttt{Mistral7B-v0.3} with DPO on 8 A100 GPUs takes approximately 3 hours for 1 Epoch on Magpie-Pro/G27, while \texttt{Qwen3-4B} on 4 A100 GPUs requires about the same time for the same data size.

\paragraph{Evaluation.}
We select the best checkpoint by absolute win-rate on Arena-Hard-v0.1 using \texttt{gpt-4.1-nano-2025-04-14} as the judge. Final performance on the Arena-Hard benchmark is reported using the judge \texttt{gpt-4.1-2025-04-14} to reduce evaluation costs, following \citet{mao2024simple}. For Wild-Bench-v2, we use \texttt{gpt-4o-2024-08-06} as recommended in the official repository.\footnote{\url{https://github.com/allenai/WildBench}} During inference, we greedy-decode up to 4,096 tokens with \texttt{vLLM}. QA benchmarks are evaluated via the \texttt{lm-evaluation-harness} \citep{eval-harness}. The QA benchmarks consist of the following: PIQA \citep{Bisk2020}, SIQA \citep{sap2019socialiqa}, HellaSwag \citep{DBLP:journals/corr/abs-1905-07830}, ARC-Easy/Challenge \citep{allenai:arc}, MMLU \citep{hendrycks2020measuring}, GSM8k \citep{DBLP:journals/corr/abs-2110-14168}, and BoolQ \citep{DBLP:journals/corr/abs-1905-10044}.

The mean reciprocal rank (MRR) in Table~\ref{tab:overall_exp} and Figure~\ref{fig:extended_exp_qa} provides a single aggregated metric for performance across the 8 QA benchmarks, each with its own primary metric. The procedure for measuring MRR is as follows.
\begin{enumerate}
    \item For each of the 8 benchmarks, we evaluate all models using a pre-defined standard performance metric:\begin{itemize}
    \item \textbf{ARC-Easy/Challenge, BoolQ, MMLU, PIQA, SIQA:} Accuracy.
    \item \textbf{HellaSwag:} Normalized Accuracy.
    \item \textbf{GSM8K:} Exact Match (flexible-extract).
\end{itemize}
    \item Based on these scores, we rank the models for each benchmark.
    \item We then calculate the reciprocal of each model's rank and average these reciprocal ranks across all 8 benchmarks to obtain the final MRR score.
\end{enumerate}

\end{document}